\documentclass{article}

\usepackage[british]{babel}
\usepackage{bm}
\usepackage[dvipsnames]{xcolor}
\usepackage{enumitem}
\usepackage{pifont}
\definecolor{Orange}{HTML}{D55E00}
\definecolor{LightBlue}{HTML}{56B4E9}
\definecolor{Green}{HTML}{009E73}
\definecolor{Yellow}{HTML}{F5C710}
\definecolor{Blue}{HTML}{0072B2}
\definecolor{Red}{HTML}{D55E00}
\definecolor{Pink}{HTML}{CC79A7}
\definecolor{Wine}{HTML}{882255}
\newcommand{\cmark}{\textcolor{Green!90!black}{\ding{51}}}
\newcommand{\xmark}{\textcolor{Red!90!black}{\ding{55}}}

\usepackage{microtype}
\usepackage{graphicx}
\usepackage{subcaption}
\usepackage{booktabs} 

\usepackage{hyperref}

\usepackage[accepted]{icml2026}

\usepackage{amsmath}
\usepackage{amssymb}
\usepackage{mathtools}
\usepackage{amsthm}

\usepackage{comment}

\theoremstyle{plain}
\newtheorem{theorem}{Theorem}[section]
\newtheorem{proposition}[theorem]{Proposition}
\newtheorem{lemma}[theorem]{Lemma}

\theoremstyle{definition}
\newtheorem{definition}[theorem]{Definition}
\newtheorem{assumption}[theorem]{Assumption}
\theoremstyle{remark}

\usepackage[capitalize]{cleveref}

\usepackage[textsize=tiny]{todonotes}

\icmltitlerunning{Identifiable Markov Switching Models with Instantaneous Effects and Exponential Families}

\crefname{assumption}{Ass.}{Ass.}
\Crefname{assumption}{Ass.}{Ass.}
\crefname{definition}{Def.}{Defs.}
\Crefname{definition}{Def.}{Defs.}
\crefname{theorem}{Thm.}{Thms.}
\Crefname{theorem}{Thm.}{Thms.}
\crefname{proposition}{Prop.}{Props.}
\Crefname{proposition}{Prop.}{Props.}
\crefname{lemma}{Lem.}{Lems.}
\Crefname{lemma}{Lem.}{Lems.}
\crefname{corollary}{Cor.}{Cors.}
\Crefname{corollary}{Cor.}{Cors.}
\crefname{equation}{Eq.}{Eqs.}
\Crefname{equation}{Eq.}{Eqs.}
\crefname{appendix}{App.}{App.}
\Crefname{appendix}{App.}{App.}
\crefname{figure}{Fig.}{Figs.}
\Crefname{figure}{Fig.}{Figs.}
\crefname{table}{Tab.}{Tabs.}
\Crefname{table}{Tab.}{Tabs.}
\crefname{section}{Sec.}{Secs.}
\Crefname{section}{Sec.}{Secs.}
\crefname{remark}{Rem.}{Rems.}
\Crefname{remark}{Rem.}{Rems.}

\crefalias{assumption}{assumption}
\crefalias{definition}{definition}
\crefalias{proposition}{proposition}
\crefalias{lemma}{lemma}
\crefalias{corollary}{corollary}
\crefalias{remark}{remark}

\begin{document}

\twocolumn[
\icmltitle{Identifiable Markov Switching Models \\with Instantaneous Effects and Exponential Families}



\icmlsetsymbol{equal}{*}

\begin{icmlauthorlist}
\icmlauthor{Roel Hulsman}{uva,adyen}
\icmlauthor{Carles Balsells-Rodas}{icl}
\icmlauthor{Sara Magliacane}{uva,saar}
\end{icmlauthorlist}

\icmlaffiliation{uva}{University of Amsterdam}
\icmlaffiliation{adyen}{Adyen}
\icmlaffiliation{icl}{Imperial College London}
\icmlaffiliation{saar}{Saarland University}

\icmlcorrespondingauthor{Roel Hulsman}{r.p.hulsman@uva.nl}

\icmlkeywords{Machine Learning, ICML, Nonstationarity, Causal Discovery, Time Series, Identifiability}

\vskip 0.3in
]



\printAffiliationsAndNotice{}  

\begin{abstract}
Temporal systems often exhibit non-stationary behaviour, such as seasonal climate variation or glucose fluctuations in patients with type-1 diabetes. 
One way to model non-stationarity is through discrete latent \emph{regimes}, \textit{i.e.}, stationary segments of time. 
Such systems induce a \emph{Markov Switching Model} (MSM), a class of Hidden Markov Models with autoregressive dependencies among latent regimes and observed variables. 
Identifying latent regimes is challenging in the presence of frequent regime switches and nonlinear and non-Gaussian dynamics, particularly when there are \emph{instantaneous effects} between the variables, \textit{e.g.}, due to slow rates of measurements.
In this work, we establish the identifiability of both latent regimes and regime-dependent causal structures under temporal regime dependencies, nonlinear lagged and instantaneous effects, and independent noise from the exponential family.
Our identifiability theory subsumes non-temporal mixtures of causal models. 
Furthermore, we introduce \texttt{FlowMSM}, a regime detection framework that can be paired with any stationary causal discovery method to recover regime-dependent causal structures. 
Experiments on synthetic benchmarks and a financial economics dataset demonstrate the effectiveness of our approach to detect latent regimes and discover causal structures from non-stationary time series. 
\end{abstract}

\section{Introduction}\label{sec:intro}

Non-stationarity is ubiquitous in real-world environments, ranging from sleep-stage dynamics in Electroencephalography (EEG) \citep{shah2018temple} to the El Niño-Southern Oscillation (ENSO) in climate science \citep{chalupka2016unsupervised}. 
A common modelling strategy is to represent non-stationarity through discrete latent \emph{regimes}, describing stationary segments of time. 
Detecting latent regimes is challenging when regimes switches are frequent, the dynamics are nonlinear and non-Gaussian, and there are \emph{instantaneous} effects between the observed variables, \textit{e.g.}, as a result of coarse measurement granularity.
For example, for patients with type-1 diabetes, the effect of insulin delivery on glucose metabolism is typically faster than the five-minutes sampling frequency of glucose monitors, thus emerging as instantaneous in measurements. 

Autoregressive regime switches and observed variables give rise to \emph{Markov Switching Models} (MSMs) \citep{hamilton1989new}, also known as autoregressive Hidden Markov Models (HMMs) \citep{poritz1982linear}. 
MSMs are widely used in time-series modelling, with established applications in bioinformatics \citep{koski2001hidden}, financial economics \citep{bhar2004hidden,ang2012regime} and speech analysis \citep{ephraim2005revisiting}, as well as contemporary use in energy studies \citep{cevik2021renewable} and health economics \citep{anser2021impact}. 
However, classic identifiability results for finite-state HMMs \citep{kruskal1977three,allman2009identifiability,gassiat2016inference} do not trivially extend to MSMs due to the autoregressive dependencies among observed variables. 
Identifiability theory here characterises when the data likelihood uniquely determines latent regimes.

We build on recent identifiability results for first- and higher-order MSMs without instantaneous effects \citep{balsells2023identifiability,balsells2026identifiability}. 
However, these works only provide concrete instantiations of their assumptions in Gaussian settings. 
We draw inspiration from \citet{barndorff1965identifiability} to extend identifiability to exponential families, under possibly nonlinear lagged and instantaneous effects. 
Once latent regimes are identifiable, existing causal discovery theory recovers regime-dependent causal structures, for example up to a Markov equivalence class (MEC) \citep{spirtes2000causation}.

\begin{figure*}[t]
    \centering
    \includegraphics[width=\linewidth, trim=0.83in 0.13in 0.63in 0.15in, clip]{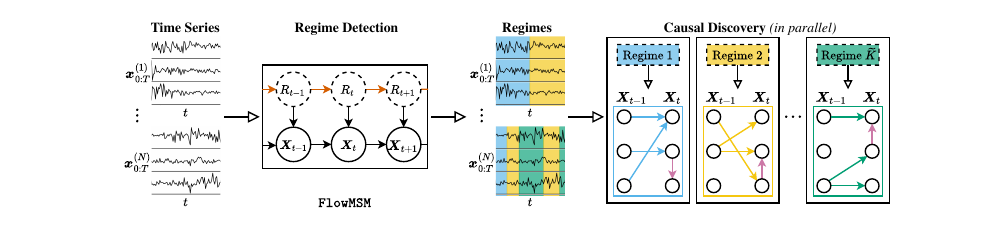}
    \caption{\texttt{FlowMSM} detects \emph{latent regimes} $\bm{R}_{0:T}$ from time series $\bm{X}_{0:T}$, generated by a \emph{regime-switching SCM} with dependencies between regimes (\textit{i.e.}, the \textbf{\textcolor{Red}{red}} edges), instantaneous effects among observed variables (\textit{e.g.}, the \textbf{\textcolor{Pink}{pink}} edges) and independent exponential family noise. Subsequently, a causal discovery method discovers \emph{window graphs} from regime-dependent sample sequences.} 
    \label{fig:fig1}
\end{figure*}

Some recent studies address closely related problems \citep{rahmani2023castor,rahmani2025flow}. These works, however, assume regimes to be deterministic functions of time, thereby excluding autoregressive or state-dependent regime dynamics. 
In this work, we do allow for such dependencies. 
In addition, the proposed methods alternate between regime detection and causal discovery within the class of \emph{location scale noise models} (LSNMs) \citep{immer2023identifiability,strobl2023identifying}, fundamentally relying on identifiability of the underlying causal structure. 
Moreover, during the iteration process, the authors leverage causal discovery methods that assume \emph{causal sufficiency}, which might not hold while learning the regime assignments, since the regime variable could act as a latent confounder. 
In contrast, our framework decouples regime detection from causal discovery, thereby avoiding the aforementioned limitations and allowing for a more broad class of causal models. 

We summarise our main contributions below and provide an overview of our approach in \Cref{fig:fig1}.  
\begin{itemize}[nosep]
    \item We establish identifiability for a broad class of regime-switching SCMs with dependencies between regimes, nonlinear lagged and instantaneous effects, and independent noise from the exponential family (\Cref{sec:id_theory}). 
    \item We introduce \texttt{FlowMSM}, a regime detection method based on conditional normalising flows, which extends to regime-dependent causal discovery (\Cref{sec:estimation}). 
    \item Through experiments on synthetic benchmarks, we show accurate regime detection and causal discovery. Additionally, we present a qualitatively study of the Fama-French five-factor model \citep{fama2015five} in financial economics (\Cref{sec:experiments}).
\end{itemize}

\section{Background}\label{sec:background}
We use \textbf{bold} to indicate multivariate vectors and vector-valued functions. We use uppercase letters for random variables, and lowercase letters for their instantiations. We write $\mathcal{X}^{\times T}\triangleq \mathcal{X}\times\dots\times\mathcal{X}$ to denote a Cartesian product of $T$ sets $\mathcal{X}$. Let $\bm{X}_{0:T}=(\bm{X}_0,\dots,\bm{X}_T)\in\mathcal{X}^{\times (T+1)}$, with $\mathcal{X}\subseteq\mathbb{R}^{D}$, denote a continuous, $D$-dimensional time series observed over an initial time step and $T$ subsequent time steps, where $D,T<\infty$. 
Let $\bm{R}_{0:T}\in\mathcal{A}^{\times (T+1)}_K$ denote discrete latent regime variables. Here $\mathcal{A}_K$ is a finite subset of cardinality $K<\infty$, drawn from a countably infinite index set $\mathcal{A}$. 
For example, $\mathcal{A}$ may index a family of probability density functions (PDFs) $\{p_{\bm{\theta}}(\bm{x}\mid a)\mid a\in\mathcal{A}\}$ with parameters $\bm{\theta}\in\Theta$, where $\bm{x}$ is a realisation of $\bm{X}$. We write $\mathcal{P}^{\otimes T}\triangleq \mathcal{P}\otimes\dots\otimes\mathcal{P}$ as a pointwise product of $T$ function families $\mathcal{P}$. We use the superscript $^0$ to denote objects specific to the initial time step. The notation $\bm{f}\rvert_{\bm{X}=\bm{x}}$ specifies a function $\bm{f}$ evaluated at the fixed argument $\bm{X}=\bm{x}$. Finally, we often write shorthand $\bm{f}_a\triangleq\bm{f}\rvert_{R_t=a}$ for regime-dependent mappings and regime-indexed quantities.

\subsection{Structural Causal Models (SCMs)}\label{sec:scm}
We assume the data generating process can be described by a \emph{regime-switching} version of an SCM \citep{pearl2009causality}, where the structural equations encode the causal relations between causal variables and exogenous noises, and any non-stationary behaviour of the observed variables is directed by discrete latent regimes. Importantly, we presume no knowledge of the number of regimes $K$, nor of the structural equations governing latent regime dynamics, besides that regimes do not depend on past observations. 

We write the \emph{initial} and \emph{transition} structural equations as
\begin{align}
    \bm{X}_0 &\leftarrow \bm{f}^0\big(\textbf{Pa}^0( R_0),R_0,\bm{\epsilon}_0\big),\label{eq:r_scm}\\
    \bm{X}_t &\leftarrow \bm{f}\big(\textbf{Pa}\big( R_t\big),R_t,\bm{\epsilon}_t\big), \quad \quad t\in\{1,\dots,T\}, \nonumber
\end{align}
where $\bm{f}^0,\bm{f}$ are measurable time-invariant functions and $\bm{\epsilon}_t\in\mathcal{X}_\epsilon \subseteq\mathbb{R}^{D}$ is \textit{i.i.d.} exogenous noise. The causal parents $\textbf{Pa}^0(\cdot)\subset\bm{X}_0$ and $\textbf{Pa}(\cdot)\subset\bm{X}_{t-1:t}$ consist of a subset of causal variables acting as direct causes in a particular regime, which we do not require to be unique across regimes. We highlight that the causal parents might include variables at the same time step, permitting instantaneous effects. Furthermore, for a given regime value $a\in \mathcal{A}_K$, the structural equations implicitly define measurable mappings $\Phi^0_a(\bm{\epsilon}_0)$ and $\Phi_a(\cdot,\bm{\epsilon}_t)\rvert_{\bm{X}_{t-1}=\bm{x}_{t-1}}$ from the noise space $\mathcal{X}_\epsilon$ to the observed space $\mathcal{X}$ at each time step. Note that this formulation includes mixtures of stationary time series as a special case by restricting $R_t=R_{t'}$ for any $t\neq t'$. 

\textbf{Window Graphs.} \ The regime-switching SCM induces \emph{window graphs} $G_a$ \citep{assaad2022survey}, representing the instantaneous and transition dynamics in a particular regime. 
That is, the regime $R_t$ acts as a latent confounder for the variables $\bm{X}_t$, and the window graph $G_a$ contains the edges from causal parents $\textbf{Pa}_a$ to variables $\bm{X}_t$ given the regime $R_t=a$. Analogously, the regime-switching SCM induces \emph{initial graphs} $G^0_a$, but we omit a discussion for brevity. \Cref{fig:example_causal_graph} visualises examples. We do not require initial nor window graphs to be unique across regimes.

\begin{figure}
    \centering
    \includegraphics[width=\linewidth, trim=0.52in 0.08in 0.35in 0.08in, clip]{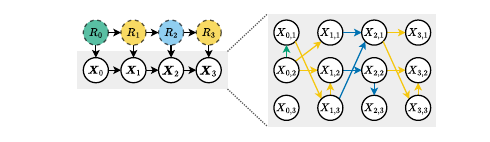}
    \caption{Example causal graph given regimes $\bm{R}_{0:3}$, with coloured edges belonging to regime-dependent initial and window graphs.}
    \label{fig:example_causal_graph}
\end{figure}

\textbf{Assumptions.} \ 
We assume \emph{conditional causal stationarity}, \textit{i.e.}, that causal mechanisms are stable within each regime, and regime switches fully govern structural changes over time. 
In addition, we assume that within each regime there are no latent confounders or selection bias, which we call \emph{conditional causal sufficiency}. 
As standard in causal discovery, we assume the \emph{causal Markov} and \emph{faithfulness} conditions hold within each regime \citep{spirtes2000causation}, which together imply that conditional independences correspond to d-separations in the underlying window graph, and consider only \emph{acyclic} SCMs. 
For simplicity, in the main paper we restrict causal parents to be of \emph{maximum lag} $L=1$, but this is relaxed to an arbitrary number of lags in the theory, method and experiments presented in \Cref{app:proofs,app:implementation_details,app:experiments}.

Although our identifiability theory does not restrict the latent regime dynamics beyond its independence of the observed variables, for estimation purposes we assume a \emph{first-order stationary Markov chain}. 
Similarly, we assume the mappings $\bm{f}^0_a,\bm{f}_a$ are \emph{contractive} to guarantee stability of the induced time series and enable likelihood-based estimation. 
In linear SCMs, this restricts the spectral radius of the regime-dependent companion matrices \citep{lutkepohl2005new}. 

\textbf{Exponential Families.} \ 
We restrict exogenous noise to belong to an exponential family. 
The PDF of a continuous random variable $\bm{\epsilon}\in\mathcal{X}_\epsilon$ from a minimal regular exponential family in canonical form \citep{brown1986fundamentals} is
\begin{align}\label{eq:exp_fam}
    p_{\bm{\eta}}(\bm{\epsilon}) = h(\bm{\epsilon})\exp\big(\bm{\eta} \cdot \bm{\tau}(\bm{\epsilon})-A(\bm{\eta})\big),
\end{align}
for natural parameters $\bm{\eta}\in\Omega$, base measure $h:\mathcal{X}_\epsilon\to\mathbb{R}_+$, sufficient statistic $\bm{\tau}:\mathcal{X}_\epsilon\to\mathbb{R}^P$, $P\geq 1$, and log-partition $A:\Omega\to\mathbb{R}$. 
The natural parameter space $\Omega\subseteq\mathbb{R}^P$ is open and the support $\mathcal{X}_\epsilon\subseteq\mathbb{R}^D$ does not depend on $\bm{\eta}$.

\subsection{Markov Switching Models (MSMs)}
The dynamic process induced by a regime-switching SCM, for a particular configuration of the index set $\mathcal{A}_K$ and the latent regime dynamics, is an MSM. 
The joint distribution can be formulated as a finite mixture over discrete regimes \citep{fruhwirth2006finite}, \textit{i.e.},
\begin{align}\label{eq:fin_mix}
    p_{\bm{\theta}}(\bm{x}_{0:T})=\sum_{\bm{r}_{0:T}\in\mathcal{A}_K^{\times (T+1)}}p_{\bm{\theta}}(\bm{r}_{0:T})p_{\bm{\theta}}(\bm{x}_{0:T}\mid \bm{r}_{0:T}),
\end{align}
where the $K^{T+1}$ unique sequences of regimes $\bm{r}_{0:T}$ determine the components of the mixture. 
Assuming conditional causal stationarity and causal parents of at most a single lag, we can further factorise
\begin{align}\label{eq:msm_factorization}
    p_{\bm{\theta}}(\bm{x}_{0:T}\mid\bm{r}_{0:T})= p_{\bm{\theta}}(\bm{x}_0\mid r_0)\prod^T_{t=1}p_{\bm{\theta}}(\bm{x}_t\mid\bm{x}_{t-1}, r_t).
\end{align}
Throughout this work, we refer to $p_{\bm{\theta}}(\bm{x}_0\mid r_0)\in\mathcal{P}^0_\mathcal{A}$ as the \emph{initial distribution} and to $p_{\bm{\theta}}(\bm{x}_{t}\mid\bm{x}_{t-1}, r_t)\in\mathcal{P}_\mathcal{A}$ as the \emph{transition distribution}. 
The corresponding families $\mathcal{P}^0_\mathcal{A},\mathcal{P}_\mathcal{A}$ contain countably infinite distributions indexed by $\mathcal{A}$, of which a finite subset, indexed by $\mathcal{A}_K$, appears in a particular regime-switching SCM. 
For simplicity, the same set $\mathcal{A}_K$ indexes regimes for all time steps. 
However, the techniques discussed in this work generalise beyond such settings. 
For example, one could choose a time-indexed number of regimes $K_t$ and time-indexed distribution families $\mathcal{P}^t_\mathcal{A}$. 
These generalisations are discussed in \citet{balsells2026identifiability}.

\textbf{Identifiable MSMs.} \ 
Identifiability of finite mixture distributions amounts to a one-to-one relation between the observed data likelihood $p_{\bm{\theta}}(\bm{x}_{0:T})$ and the latent regime prior and mixture components, up to a relabelling of the regimes. 
Following the factorisation in \Cref{eq:msm_factorization}, equality of mixture components can be specified into equal initial and transition distributions. 
Identifiable initial regimes and regime transitions follow from a chain rule factorisation of the latent regime prior. 
Our \Cref{def:identifiable_msms} simplifies notation compared to \citet[Def. 3.1]{balsells2023identifiability} by using the regime $r_t$ as index directly to omit a path index function. 
Leaving a permutation equivalence is standard in the literature \citep[Eq. (3)]{yakowitz1968identifiability}. 

\begin{definition}[Identifiability Up To Permutation]\label[definition]{def:identifiable_msms}
    An MSM is said to be \emph{identifiable up to permutation} if for any $p_{\bm{\theta}},p'_{\bm{\theta}'}$ with regime index sets $\mathcal{A}_K,\mathcal{A}'_{K'}$ as in \Cref{eq:fin_mix}, we have $p_{\bm{\theta}}=p'_{\bm{\theta}'}$ \emph{almost everywhere}, or \textit{a.e.}, implies $K=K'$ and there exist a label permutation $\phi:\mathcal{A}_K\to\mathcal{A}'_{K'}$, such that for almost every $\bm{x}_{t-1}\in\mathcal{X}$, $t\in\{1,\dots,T\}$: 
    \begin{enumerate}[label=$\arabic*.$, leftmargin=*, align=left, nolistsep]
    \item $p_{\bm{\theta}}(r_0)=p'_{\bm{\theta}'}\big(\phi(r_{0})\big) \quad \textit{a.e.}$;
    \item $p_{\bm{\theta}}(r_t\mid\bm{r}_{0:t-1})=p'_{\bm{\theta}'}\big(\phi(r_{t})\mid \phi(r_0),\dots,\phi(r_{t-1})\big) \ \textit{a.e.}$;
    \item $p_{\bm{\theta}}(\bm{x}_0\mid r_0)=p'_{\bm{\theta}'}\big(\bm{x}_0\mid \phi(r_0)\big) \quad \textit{a.e.}$;
    \item $p_{\bm{\theta}}(\bm{x}_t\mid\bm{x}_{t-1}, r_t)=p'_{\bm{\theta}'}\big(\bm{x}_t\mid\bm{x}_{t-1}, \phi(r_t)\big) \quad \textit{a.e.}$.
    \end{enumerate}
\end{definition}

Identifiability of finite mixture distributions is established via linear independence of the family of mixture components \citep{yakowitz1968identifiability}. 
A family of PDFs contains linear independent functions if for any finite $\mathcal{A}_K\subset\mathcal{A}$,
\begin{align}
    &\sum_{a \in \mathcal{A}_K} \lambda_{a} p_{\bm{\theta}}(\bm{x}\mid a) = 0 \quad \textit{a.e.} \implies \lambda_{a} = 0 \quad \forall a \in \mathcal{A}_K.\nonumber
\end{align}

For MSMs, the family of mixture components is the \emph{product family} $\mathcal{P}^0_\mathcal{A}\otimes\mathcal{P}_{\mathcal{A}}^{\otimes T}$. 
However, linear independence in the product family is not guaranteed by linear independence in the marginal families $\mathcal{P}^0_\mathcal{A}$ and $\mathcal{P}_{\mathcal{A}}$, by virtue of an overlapping variable space, challenging linear independence for any consecutive product of linearly independent distributions. 
That is, for $T=1$, with overlapping variable \textcolor{Red}{$\bm{x}_0$}, we have
\begin{align}\label{eq:overlapping_variables}
    \mathcal{P}^0_\mathcal{A}\otimes\mathcal{P}_{\mathcal{A}}=\big\{p_{\bm{\theta}}(\textcolor{Red}{\bm{x}_0}\mid r_0)p_{\bm{\theta}}(\bm{x}_1\mid \textcolor{Red}{\bm{x}_0}, r_1)\big\}.
\end{align}
\citet{balsells2023identifiability,balsells2026identifiability} studied first- and higher-order overlapping variables, proposing non-parametric conditions to extend the notion of linear independence to sequences of random variables. 
However, the authors provide concrete instantiations to achieve linear independence only for multivariate Gaussian families.
Their key idea is to allow linear dependence only on zero-measure subsets of the overlapping variable space by forcing this space to be sufficiently regular (\textit{e.g.}, real-analytic). 
We build on their idea in this work, and provide further discussion in \Cref{app:known_results}.

\section{Identifiability Theory}\label{sec:id_theory}
We derive conditions on regime-switching SCMs to identify latent regimes $\bm{R}_{0:T}$ from realisations of $\bm{X}_{0:T}$. Once the regimes are identified, we can use standard causal discovery to recover regime-dependent initial and window graphs. 

\subsection{Exponential Family Markov Switching Models}

To provide sufficient conditions for identifiable regimes, we build on fundamental work on the identifiability of mixtures of exponential families \citep{barndorff1965identifiability}. 
Their key idea is to exploit linear independence of the Laplace transform of a continuous minimal regular exponential family. 
Originally developed for arbitrary, possibly infinite mixtures, their identifiability theory directly applies also to \emph{finite} mixtures. 
We repurpose these results for regime-switching SCMs. 
First, we assume exogenous noise along the lines of \citet[Cor. 3]{barndorff1965identifiability}. 
\begin{assumption}[Exponential Family Noise]\label[assumption]{a:exp_fam}
    The exogenous noise $\bm{\epsilon}_t$ is from a continuous minimal regular exponential family (\Cref{eq:exp_fam}) that satisfies:
    \begin{enumerate}[label=$(a\arabic*)$, leftmargin=*, align=left, nolistsep]
        \item \emph{Real-analytic sufficient statistic}: The sufficient statistic $\bm{\tau}$ is real-analytic \textit{a.e.};
        \item \emph{Rich image of sufficient statistic}: The image $\{\bm{\tau}(\bm{\epsilon}_t)\mid h(\bm{\epsilon}_t)>0,\bm{\epsilon}_t\in\mathcal{X}_\epsilon\}$ contains a (non-empty) open set.
    \end{enumerate}  
\end{assumption}
Compared to \citet{barndorff1965identifiability}, we strengthen continuity of the sufficient statistic to real-analyticity in condition \textit{(a1)} to avoid linear dependence in the overlapping variable space of the joint distribution $p_{\bm{\theta}}(\bm{x}_{0:T})$. 
The resulting assumption is generally satisfied by well-known continuous minimal regular exponential families, such as the Gaussian, Gamma and Laplace distributions, among others. 
It excludes certain degenerate and curved exponential families where the image of the sufficient statistic is confined to a lower-dimensional manifold. 

Second, we restrict the functions $\bm{f}^0_a,\bm{f}_a$ to guarantee linear independence in the distribution families $\mathcal{P}^0_\mathcal{A},\mathcal{P}_\mathcal{A}$ and the family of mixture components $\mathcal{P}^0_\mathcal{A}\otimes\mathcal{P}_{\mathcal{A}}^{\otimes T}$. 
\begin{assumption}[Functional Model Restrictions]\label[assumption]{a:functions}
    The mappings $\bm{f}_a$ in \Cref{eq:r_scm}, and analogously $\bm{f}^0_a$, satisfy:
    \begin{enumerate}[label=$(b\arabic*)$, leftmargin=*, align=left, nolistsep]
        \item \emph{Unique regimes}: For all $a,a'\in\mathcal{A}$,
        \begin{align}
            \bm{f}_a=\bm{f}_{a'} \quad \textit{a.e.} \quad \implies \quad a=a';
        \end{align}
        \item \emph{Pointwise diffeomorphisms}: For almost every $\bm{x}_{t-1}\in\mathcal{X},t\in\{1,\dots,T\}$, and for all $a\in\mathcal{A}$, the mapping $\bm{f}_a\rvert_{\bm{X}_{t-1}=\bm{x}_{t-1}}$ is a diffeomorphism \textit{a.e.} in $\bm{\epsilon}_t$;
        \item \emph{Jointly real-analytic transitions}: For all $a\in\mathcal{A}$, the mapping $\bm{f}_a$ is jointly real-analytic in $(\bm{x}_{t-1},\bm{\epsilon}_t)$ \textit{a.e.}.
    \end{enumerate}
\end{assumption}

Condition \textit{(b1)} ensures there are functional differences between regimes. 
Condition \textit{(b2)} then guarantees a smooth invertible mapping with smooth inverse between exogenous noise and the initial and transition distribution, such that we can use the change of variables formula to analytically derive the initial and transition PDF, given noise from the exponential family. 
Condition \textit{(b3)}, in combination with a real-analytic sufficient statistic, forces the natural parameter space of the joint distribution $p_{\bm{\theta}}(\bm{x}_{0:T})$ to be sufficiently regular, such that discontinuities arise solely from changes in regimes. 
Finally, we highlight that the conditions are sufficiently weak to allow parametrisations of $\bm{f}^0_a, \bm{f}_a$ using neural networks with piecewise real-analytic activations. 

Third, we avoid some rare unidentifiable cases where distinct diffeomorphisms result in identical initial or transition distributions. 
This includes transformations that exploit certain symmetries, such as rotations of isotropic Gaussians. 
\begin{assumption} 
\label[assumption]{a:automorphisms}
    \textbf{At least one} of the following holds:
    \begin{enumerate}[label=$(c\arabic*)$, leftmargin=*, align=left, nolistsep]
        \item \emph{Trivial automorphisms of the noise}: The noise distribution has a trivial automorphism class, \textit{i.e.}, for any invertible mapping $\Phi$, 
        \begin{align}
            \Phi(\bm{\epsilon})\overset{d}{=}\bm{\epsilon} \quad \implies \quad \Phi=\bm{\epsilon} \quad \textit{a.s.;}
        \end{align}
        \item \emph{Monotone canonicalisation}: There exists a canonical variable order \textit{s.t.} $\forall a\in\mathcal{A}$, $\partial\bm{f}_a\slash\partial\bm{\epsilon}_t$ can be permuted to a lower-triangular matrix, and it has strictly positive diagonal \textit{a.e.} in $\bm{\epsilon}_t$, and analogously for $\bm{f}^0_a$. 
    \end{enumerate}
\end{assumption}
Either \textit{(c1)} restricts the noise distribution to have a trivial automorphism class, \textit{e.g.}, excluding isotropic Gaussians, or \textit{(c2)} restricts the class of pointwise diffeomorphisms by a monotone canonicalisation of the functions $\bm{f}^0_a, \bm{f}_a$, \textit{e.g.}, no rotations. 
\textit{(c2)} is stronger and guarantees a \emph{unique} monotone triangular transport known as the Knothe-Rosenblatt coupling \citep{knothe1957contributions,rosenblatt1952remarks,villani2009optimal}. 

Finally, we establish a reparametrisation of $\mathcal{P}^0_{\mathcal{A}}$ and $\mathcal{P}_{\mathcal{A}}$ to an exponential family with common sufficient statistic and base measure through a finite-order polynomial factorisation. 
This enables known theory from \citet{barndorff1965identifiability}. 

\begin{assumption}[Sufficient Variability in Finite Subspace]\label[assumption]{a:finite_basis_expansion}
    The mappings $\Phi_a$ implicit in \Cref{eq:r_scm}, and analogously $\Phi^0_a$, satisfy for all $t\in\{1,\dots,T\}$ and some finite $O<\infty$:
    \begin{enumerate}[label=$(d\arabic*)$, leftmargin=*, align=left, nolistsep]
        \item \emph{Common support and integrability:} There exist measurable scaling functions $b_a(\bm{x}_{t-1})>0$ and a common base measure $\widetilde{h}$ such that for all $a\in\mathcal{A}$ and almost every $\bm{x}_{t-1}\in\mathcal{X}$,
        \begin{align}
            h\circ \Phi^{-1}_a(\bm{x}_t,\bm{x}_{t-1})=b_a(\bm{x}_{t-1})\widetilde{h}(\bm{x}_t) \quad \textit{a.e.},
        \end{align}
        where $\int_\mathcal{X}\rVert\bm{x}_t\lVert^{O+\delta}\widetilde{h}(\bm{x}_t)\text{d}\bm{x}_t<\infty$ for some $\delta>0$;
        \item \emph{Finite polynomial reparametrisation:} There exists a common sufficient statistic $\widetilde{\bm{\tau}}:\mathcal{X}\to\mathbb{R}^{\widetilde{P}}$, with $\widetilde{P}\geq P$, whose components are monomials up to order $O$, and measurable matrices $\bm{C}_a(\bm{x}_{t-1})\in\mathbb{R}^{P\times\widetilde{P}}$ with full row rank $P$, such that for all $a\in\mathcal{A}$ and almost every $\bm{x}_{t-1}\in\mathcal{X}$,
        \begin{align}
            \bm{\tau}\circ \Phi^{-1}_a(\bm{x}_t,&\bm{x}_{t-1})=\\
            &\bm{C}_a(\bm{x}_{t-1})\widetilde{\bm{\tau}}(\bm{x}_t) +\mathcal{R}_O(\bm{x}_t) \quad \textit{a.e.},\nonumber
        \end{align}
        where the remainder is regime- and history-invariant and can be absorbed into the base measure such that $\int_\mathcal{X}\rVert\mathcal{R}_O(\bm{x}_t)\lVert\widetilde{h}(\bm{x}_t)d\bm{x}_t < \infty$;
        \item \emph{Injectivity of polynomial coefficients:} For all $a\neq a'\in\mathcal{A}$ and almost every $\bm{x}_{t-1}\in\mathcal{X}$, 
        \begin{align}
            \bm{C}_a(\bm{x}_{t-1})^T\bm{\eta}&=\bm{C}_{a'}(\bm{x}_{t-1}) ^T\bm{\eta} \quad \forall \bm{\eta}\in\mathbb{R}^P \\
            &\implies \quad \Phi_a=\Phi_{a'}\quad \textit{a.e.}.\nonumber
        \end{align}
    \end{enumerate}
\end{assumption}

Intuitively, this assumption imposes sufficient variability across regimes in a finite-dimensional polynomial subspace of the sufficient statistic. 
That is, all regime-discriminating information is captured by a finite amount of linearly independent monomial terms, while any remaining non-polynomial components are invariant across regimes. 
This can be interpreted as a truncation of a Taylor expansion at finite order $O$, together with the restriction that higher-order terms do not introduce additional regime dependence. 
In the basic setting of affine transformations of Gaussian noise, the induced sufficient statistic is a quadratic polynomial, so the decomposition holds with finite dimension $\widetilde{P}=P$ and zero remainder. 
For nonlinear real-analytic transformations, the regime-invariant remainder enforces the absence of high-order regime signal beyond the low-order polynomial terms.

In short, \textit{(d1)} requires a common support across regimes and ensures integrability of the induced PDF. 
Condition \textit{(d2)} introduces a common sufficient statistic with monomial terms, possibly higher-dimensional ($\widetilde{P}\geq P$) than the noise distribution. 
The polynomial decomposition of finite order $O={D+O\choose O}$ stores regime- and history-dependent effects, while a regime-invariant remainder term captures any non-polynomial components of the transformation. 
Condition \textit{(d3)} recovers regime indices from the 
distribution in case $\widetilde{P}>P$, while it is vacuously true by $(d2)$ if $\widetilde{P}=P$. 

We are now ready to establish identifiability up to permutation for regime-switching SCMs. 
The proof is in \Cref{app:proof_identifiable_msm}.

\begin{theorem}[Identifiable Regime-Switching SCMs up to Permutation]\label{prop:identifiable_msm}
    Consider an acyclic regime-switching SCM that satisfies conditional causal stationarity, conditional causal sufficiency, and \Cref{a:exp_fam,a:functions,a:finite_basis_expansion,a:automorphisms}. Then the induced MSM is identifiable up to permutation (\Cref{def:identifiable_msms}).
\end{theorem}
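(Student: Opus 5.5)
The proof runs through the framework of \citet{balsells2023identifiability,balsells2026identifiability}. It reduces identifiability of the MSM to two linear independence conditions: one on the marginal families $\mathcal{P}^0_\mathcal{A}$ and $\mathcal{P}_\mathcal{A}$, and one on the transition family that holds for almost every value of the overlapping variable. Once these hold, their results give linear independence of the product family $\mathcal{P}^0_\mathcal{A}\otimes\mathcal{P}_\mathcal{A}^{\otimes T}$. The Yakowitz--Spragins argument \citep{yakowitz1968identifiability} then gives $K=K'$, a permutation $\phi$, equality of mixture weights $p_{\bm{\theta}}(\bm{r}_{0:T})$, and equality of components. The weights yield items 1--2 of \Cref{def:identifiable_msms} by the chain rule. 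Matching components factor-by-factor, by marginalising out $\bm{x}_{t+1:T}$ and dividing by positive densities, yields items 3--4. The work is therefore in establishing the two linear independence statements under \Cref{a:exp_fam,a:functions,a:automorphisms,a:finite_basis_expansion}.

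\textbf{Step 1: exponential family form of the components.} Fix $\bm{x}_{t-1}$. By acyclicity and conditional causal sufficiency, $\Phi_a(\cdot)\rvert_{\bm{x}_{t-1}}$ is a well-defined map, and by (b2) it is a diffeomorphism a.e. The change of variables formula gives
\begin{align*}
p(\bm{x}_t\mid\bm{x}_{t-1},a)=h(\Phi_a^{-1})\,\lvert\det J_{\Phi_a^{-1}}\rvert\exp\big(\bm{\eta}\cdot\bm{\tau}(\Phi_a^{-1})-A(\bm{\eta})\big).
\end{align*}
Substituting (d1) and (d2) rewrites this as
\begin{align*}
\widetilde{h}(\bm{x}_t)\,b_a\,\lvert\det J\rvert\, e^{\bm{\eta}\cdot\mathcal{R}_O(\bm{x}_t)}\exp\big(\bm{C}_a^T\bm{\eta}\cdot\widetilde{\bm{\tau}}(\bm{x}_t)-A(\bm{\eta})\big).
\end{align*}
The Jacobian term needs care. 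I would argue that it too is absorbed into the common base measure together with the regime-invariant remainder, which is the intent of (d2)'s clause ``can be absorbed into the base measure''. The result is a family with common sufficient statistic $\widetilde{\bm{\tau}}$ of monomials, common base measure, and natural parameters $\widetilde{\bm{\eta}}_a(\bm{x}_{t-1})=\bm{C}_a(\bm{x}_{t-1})^T\bm{\eta}$. The moment condition in (d1) guarantees that the Laplace transform is finite on an open set. Together with (a2), this lets us invoke \citet[Cor.~3]{barndorff1965identifiability}: distinct natural parameters give linearly independent densities, since their Laplace transforms are distinct exponentials on an open set.

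\textbf{Step 2: distinct regimes give distinct natural parameters.} Suppose $\widetilde{\bm{\eta}}_a=\widetilde{\bm{\eta}}_{a'}$ on a positive-measure set of $\bm{x}_{t-1}$. Then (d3) gives $\Phi_a=\Phi_{a'}$, hence equal transition distributions. I then need to pass from equal pushforwards to equal mechanisms, and \Cref{a:automorphisms} does this. Under (c1), $\Phi_{a'}^{-1}\circ\Phi_a$ preserves the noise law, so it is the identity. Under (c2), uniqueness of the Knothe--Rosenblatt map forces $\bm{f}_a=\bm{f}_{a'}$. In either case (b1) gives $a=a'$.

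\textbf{Step 3: the overlapping variable (main obstacle).} For the product family, the coefficients $\lambda$ in a vanishing combination are functions of $\bm{x}_{t-1}$, so pointwise linear independence must hold for a.e.\ $\bm{x}_{t-1}$, not merely somewhere. By (a1) and (b3), the maps $\bm{x}_{t-1}\mapsto\widetilde{\bm{\eta}}_a(\bm{x}_{t-1})-\widetilde{\bm{\eta}}_{a'}(\bm{x}_{t-1})$ are real-analytic on each analytic piece. Step 2 shows they are not identically zero for $a\neq a'$, so their zero sets have Lebesgue measure zero. A finite union over the pairs in $\mathcal{A}_K\cup\mathcal{A}'_{K'}$ is still null. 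This gives exactly the non-parametric ``linear independence under almost every overlapping value'' condition of \citet{balsells2023identifiability}.

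The initial family $\mathcal{P}^0_\mathcal{A}$ is handled the same way without the conditioning. Induction over $T$, using their product-family result, then completes the argument. I expect the delicate points to be the real-analyticity bookkeeping, namely handling the ``a.e.'' piecewise analyticity and the Jacobian absorption, and checking that $\bm{C}_a(\bm{x}_{t-1})$ inherits analyticity from $\bm{f}_a$.
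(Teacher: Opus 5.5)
Your proposal follows essentially the same route as the paper's proof: change of variables for the pushforward of the noise, the (d1)--(d2) rewriting as an exponential family with common monomial statistic $\widetilde{\bm{\tau}}$ and natural parameters $\bm{C}_a(\bm{x}_{t-1})^T\bm{\eta}$, Barndorff-Nielsen for linear independence, (d3) with \Cref{a:automorphisms} and (b1) for distinct natural parameters, and (a1)+(b3) analyticity so that coincidence sets are null, all fed into \Cref{lem:lem_b3,lem:thm_B2}. The step you flag, absorbing $\lvert\det J_{\Phi_a^{-1}}\rvert$ into a common base measure, is not licensed by (d1)--(d2), because the determinant is generally regime- and history-dependent (e.g.\ $\prod_d \Lambda_{a,dd}(\bm{x}_t,\bm{x}_{t-1})^{-1}$ for the LSNMs of \Cref{sec:identifiable_graphs}); the paper has the same hole, since its \Cref{eq:exp_fam_basis_exp} simply drops the Jacobian, so this is a weakness of the stated assumptions rather than a divergence from the paper.
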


\subsection{Identifiability of Causal Structures}\label{sec:identifiable_graphs}
Upon identification of latent regimes, we aim to establish a (one-to-one) correspondence between the data likelihood of the initial and transition distribution and the corresponding initial and window graphs $G^0_a$ and $G_a$, respectively. 
Here, known causal theory applies. For example, each initial and window graph is identifiable up to a Markov equivalence class (MEC) of conditional independencies under the assumption of \emph{faithfulness} \citep{spirtes2000causation}. 
In an alternative line of research that restricts the function class of $\bm{f}^0_a,\bm{f}_a$ \citep{peters2012identifiability}, the initial and transition distributions may correspond to a single initial and window graph, which we refer to as a \emph{uniquely identifiable causal structure}. 

The most general class of uniquely identifiable SCMs covered in synthetic experiments is the \emph{location scale noise model} (LSNM) \citep{immer2023identifiability,strobl2023identifying}. In the temporal, regime-switching setting, we obtain 
\begin{align}
    \bm{X}_0 &\leftarrow \bm{g}_a^0(\bm{X}_0) + \Lambda^0_a(\bm{X}_0)\bm{\epsilon}_0, \\
    \bm{X}_t &\leftarrow \bm{g}_a(\bm{X}_t,\bm{X}_{t-1}) + \Lambda_a(\bm{X}_t,\bm{X}_{t-1})\bm{\epsilon}_t, \quad t\geq 1, \nonumber
\end{align}
for measurable functions $\bm{g}^0_a,\bm{g}_a$ and diagonal matrices $\Lambda^0_a, \Lambda_a$ with a strictly positive diagonal. 
The causal structure is identifiable up to a MEC under faithfulness. 
Conditions for unique bivariate identifiability of the causal structure are readily available for Gaussian noise \citep{khemakhem2021causal} and general noise \citep{immer2023identifiability,strobl2023causal}, and can be extended to multivariate settings under \textit{causal minimality} \citep[Rem. 30]{peters2014causal}. The initial and transition distributions are generally non-Gaussian unless the noise is Gaussian and instantaneous effects are affine. 

\section{Estimation Method}\label{sec:estimation}

We assume access to a dataset $\big\{\bm{x}^{(n)}_{0:T}\big\}^N_{n=1}$ of $N$ \textit{i.i.d.} realisations of the time series $\bm{X}_{0:T}$. 
We further assume $\mathcal{A}_K=\{1,\dots,K\}$, without loss of generality. 
The goal is to recover latent regimes $\{\bm{r}^{(n)}_{0:T}\}^N_{n=1}$ and the initial and window graphs $\bm{G}^0_{1:K}, \bm{G}_{1:K}$. 
To do so, we introduce \texttt{FlowMSM}, which is a parametric regime detection framework that extends to regime-dependent causal discovery. 
While here we explain the implementation for maximum lag $L = 1$, we extend to an arbitrary number of lags in \Cref{app:implementation_details} and provide more implementation details. 
In \Cref{app:ablations}, we demonstrate competitive running times compared to baseline methods.

\textbf{Regime Detection.}\ 
The goal is to obtain estimates $\widehat{p}_{\bm{\theta}}(r_t\mid \bm{x}^{(n)}_{0:T})$ of the regime posterior likelihood. 
We implement a first-order stationary Markov chain with an initial regime distribution $\bm{\pi}\in\mathbb{R}^{\widetilde{K}}$ and transition matrix $Q\in\mathbb{R}^{\widetilde{K}\times \widetilde{K}}$, although future work could consider more complex structures. 
The hyperparameter $\widetilde{K}$ models the number of regimes $K$. 
It could be chosen either through an elbow method using the data likelihood on a hold-out validation set, or simply as a large value exceeding the oracle value. 
For computational efficiency, the latter is the recommended strategy in practice. 
Our real-world experiments exhibit that \texttt{FlowMSM} is able to effectively ignore redundant regimes, and in \Cref{app:ablations} we show minor loss in performance compared to using $\widetilde{K}=K$.  

We leverage the \emph{Generalised Expectation-Maximisation} (GEM) algorithm \citep{dempster1977maximum} for mixture model estimation, guaranteed to converge to a local optimum of the likelihood objective. 
We combine this with a \emph{conditional normalising flow} (CNF) approach to estimate initial and transition distributions. 
The use of CNFs is motivated by the diffeomorphic mappings in our identifiability theory in \Cref{sec:id_theory}. 
We use a \emph{neural spline flow} (NSF) architecture \citep{durkan2019neural} with parameters $\psi$ shared across regimes. 
A context window consisting of lagged variables $\bm{x}_{t-1}$ and a regime-dependent embedding $\mathcal{E}_a$ is fed into a hypernetwork that predicts spline transformation weights for the shared flow. 
Analogously, the flow modelling the initial distribution uses shared parameters $\psi^0$ and regime-dependent embeddings $\mathcal{E}^0_a$, without additional context. 
The GEM objective for a single sample is displayed in \Cref{fig:gem}.

\begin{figure}[t]
    \centering
    \includegraphics[width=\linewidth, trim=0.42in 0.11in 0.76in 0.34in, clip]{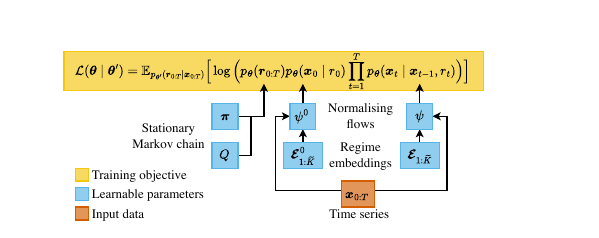}
    \caption{\texttt{FlowMSM} parameters $\bm{\theta}=(\bm{\pi},Q, \psi^0,\psi, \bm{\mathcal{E}}^0_{1:\widetilde{K}}, \bm{\mathcal{E}}_{1:\widetilde{K}})$ in the Generalised Expectation-Maximisation (GEM) objective. 
    }
    \label{fig:gem}
\end{figure}

We utilise the regime posterior likelihood for a hard assignment of sequences $\bm{x}^{(n)}_{t-1:t}$, $t\in\{1,\dots,T\}$, $n\in \{1, \dots, N\}$, to the Maximum a posteriori (MAP) regime
\begin{align}
    \widehat{r}^{(n)}_t=\arg\max_{r_t\in \mathcal{A}_{\widetilde{K}}}\widehat{p}_{\bm{\theta}}(r_t\mid \bm{x}^{(n)}_{0:T}).
\end{align} 
This sample splitting scheme creates $\widetilde{K}$ clusters with partially overlapping sliding windows. An analogous procedure is used for the initial samples $\bm{x}_0^{(n)}$. 
A perfect regime assignment would lead to clusters of causally stationary windows, since the causal parents of variables $\bm{x}^{(n)}_t$ depend solely on the oracle regime $r^{(n)}_t$, which would allow us to use causal discovery for stationary and causally sufficient settings on each cluster. 
As we show in \Cref{sec:experiments}, our method empirically recovers the regimes with high accuracy in most settings.

\textbf{Causal Discovery.}\ 
To estimate initial and window graphs $\widehat{\bm{G}}^0_{1:\widetilde{K}}$ and $\widehat{\bm{G}}_{1:\widetilde{K}}$ from the clustered samples, \texttt{FlowMSM} can be paired with any stationary causal discovery method that allows for instantaneous effects. 
In our experiments, we use several established methods, such as \texttt{VARLiNGAM} \citep{hyvarinen2010estimation}, \texttt{DYNOTEARS} \citep{pamfil2020dynotears}, \texttt{PCMCI$^+$} \citep{runge2020discovering} and \texttt{Rhino} \citep{gong2022rhino}.

\begin{figure*}[t]
    \centering
    \includegraphics[width=\linewidth, trim=0.01in 0.01in 0.06in 0.01in, clip]{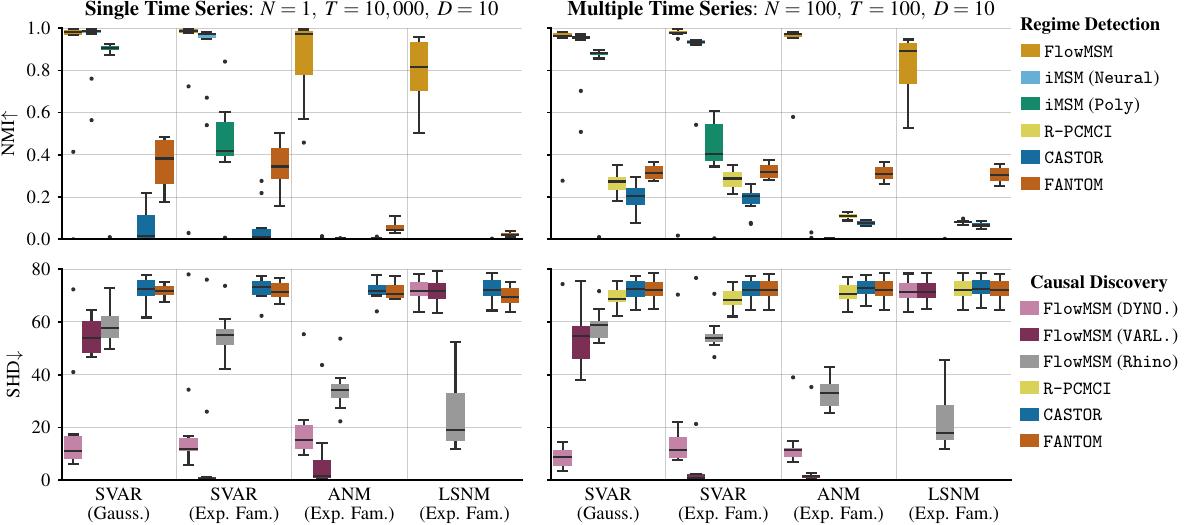}
    \caption{Performance on (\textit{top}) regime detection and (\textit{bottom}) causal discovery on (\textit{left}) a single long time series and (\textit{right}) multiple smaller time series, with $K=3$ regimes and causal parents of at most a single lag. Regimes persist for 10 time steps on average and the edge density of the synthetic causal structures is close to 0.5. Each boxplot covers 10 random seeds.}
    \label{fig:scores_main}
\end{figure*}

\section{Related Work} 

\textbf{Identifiability in MSMs.} \ 
Our work is directly inspired by the work on identifiability theory in Markov Switching Models (MSMs). 
While early work considered four consecutive discrete variables \citep{an2013identifiability}, recent work by \citet{balsells2023identifiability,balsells2025causal,balsells2026identifiability} provided a more general approach by studying first- and higher-order MSMs, as well as Gaussian MSMs with regimes that depend on both past regimes and observed variables. 
Similarly to us, these works leverage a fundamental connection between MSMs and finite mixture models \citep{fruhwirth2006finite}, for which identifiability theory dates back to the 1960s \citep{teicher1963identifiability,barndorff1965identifiability,yakowitz1968identifiability}. 
Our work extends this line of research to exponential families and (possibly nonlinear) instantaneous effects.

\textbf{Causal Discovery under Non-Stationarity.} \ 
Causal discovery \citep{spirtes2000causation} is a well-established field that focuses on learning causal relations from data. 
In recent years, causal discovery on time-series data has become particularly popular \citep{assaad2022survey}, which has led to the development of several methods to handle non-stationarity. 
Similarly to us, these methods model non-stationarity as being caused by changes in a latent discrete regime. 

A popular approach for causal discovery in non-stationary environnments is \texttt{CD-NOD} \citep{huang2020causal}, which detects change-points, \textit{i.e.}, changes in local causal mechanisms driven by a surrogate variable, such as the time index, as well as a summary graph that represents the union of all causal edges in all regimes. 
In our work, we instead focus on identifying the regimes, as well as learning the individual causal graph for each regime. 
Other methods focus on this task as well. 
In particular, \citet{saggioro2020reconstructing} and \citet{gao2023causal} generalise the popular non-parametric temporal causal discovery method \texttt{PCMCI} \citep{runge2019detecting} to latent regimes and periodic non-stationarity, respectively, yet neither accommodates instantaneous effects. 
On the other hand, \texttt{PCMCI+} \citep{runge2020discovering}, another extension of \texttt{PCMCI}, allows for instantaneous effects, but cannot model non-stationarity. 
\citet{mameche2025spacetime} simultaneously considers context- and regime-switches in causal mechanisms, but assumes causal parents to be invariant across regimes, which can be a restrictive assumption. 

Closest to our setting, \citet{balsells2023identifiability} introduces \texttt{iMSM}, detecting latent regimes under Gaussian transitions with autoregressive means parametrised by real-analytic neural networks, but without instantaneous effects nor causal discovery, and with a more restrictive parametric assumption. 
Two recent methods do focus on instantaneous effects and causal discovery: \texttt{CASTOR} \citep{rahmani2023castor} estimates regime-switching Gaussian models using the EM algorithm, while \texttt{FANTOM} \citep{rahmani2025flow} generalises this approach to non-Gaussian settings using Bayesian EM and a variational ELBO objective. 
As opposed to us, these two works do not allow dependencies between the regimes, which can arise in many real-world settings, and iterate between regime detection and causal discovery within the class of \emph{location scale noise models} (LSNMs) \citep{immer2023identifiability,strobl2023identifying}. 
We instead decouple these two phases, which allows us to also consider a broader class of causal models that might not be uniquely identifiable.  

\textbf{Causal Mixtures.} \  Learning mixtures of causal models is a popular task in the literature \citep{thiesson2013learning,saeed2020causal,gunther2023causal,strobl2023causal,mamechecausal2025,rabel2025context}. 
In particular, \citet{varambally2024discovering} models a mixture of causally stationary time series generated by distinct causal models, producing mixture assignments and regime-dependent window graphs. 
We focus on a more general setting of non-stationarity, of which mixtures are a special case, as discussed in \Cref{sec:scm}.

\section{Experiments}\label{sec:experiments}

\begin{table*}[tb]
\centering
\setlength{\tabcolsep}{1pt}
\caption{\normalsize Evaluation on $N=100$ synthetic time series of length $T=100$ and dimension $D=10$ with $K=3$ regimes, averaged over 10 random seeds. Our framework  and the best scores are in \textbf{bold}, and second-best scores are \underline{underlined}.}
\footnotesize

\begin{tabular}{lcccccccccccccccc}
\toprule
 & \multicolumn{4}{c}{SVAR (Gauss.)} & \multicolumn{4}{c}{SVAR (Exp. Fam.)} & \multicolumn{4}{c}{ANM (Exp. Fam.)} & \multicolumn{4}{c}{LSNM (Exp. Fam.)} \\ 
\cmidrule(lr){2-5} \cmidrule(lr){6-9} \cmidrule(lr){10-13} \cmidrule(lr){14-17}
\textbf{Regime Detection} & NMI$\uparrow$  & ARI$\uparrow$  & F1$\uparrow$  & Acc.$\uparrow$  & NMI$\uparrow$  & ARI$\uparrow$  & F1$\uparrow$  & Acc.$\uparrow$  & NMI$\uparrow$  & ARI$\uparrow$  & F1$\uparrow$  & Acc.$\uparrow$  & NMI$\uparrow$  & ARI$\uparrow$  & F1$\uparrow$  & Acc.$\uparrow$  \\ 
\midrule
$\textbf{\texttt{FlowMSM}}$ & \textbf{0.90} & \textbf{0.91} & \textbf{0.95} & \textbf{0.95} & \underline{0.88} & \underline{0.89} & \underline{0.93} & \underline{0.94} & \textbf{0.93} & \textbf{0.94} & \textbf{0.97} & \textbf{0.97} & \textbf{0.83} & \textbf{0.85} & \textbf{0.89} & \textbf{0.91} \\ 
$\texttt{iMSM}$ ($\texttt{Neural}$) & \underline{0.89} & \underline{0.90} & \underline{0.93} & \underline{0.94} & \textbf{0.90} & \textbf{0.91} & \textbf{0.95} & \textbf{0.95} & 0.01 & 0.01 & 0.37 & 0.38 & 0.00 & 0.00 & 0.35 & 0.36 \\ 
$\texttt{iMSM}$ ($\texttt{Poly}$) & 0.79 & 0.84 & 0.90 & 0.91 & 0.42 & 0.41 & 0.67 & 0.68 & 0.00 & 0.00 & 0.36 & 0.37 & 0.00 & 0.00 & 0.35 & 0.36 \\ 
$\texttt{R-PCMCI}$ & 0.27 & 0.25 & 0.58 & 0.63 & 0.28 & 0.27 & 0.59 & 0.64 & 0.11 & 0.08 & \underline{0.50} & \underline{0.50} & 0.08 & 0.04 & \underline{0.47} & \underline{0.48} \\ 
$\texttt{CASTOR}$ & 0.20 & 0.17 & 0.52 & 0.60 & 0.19 & 0.16 & 0.51 & 0.59 & 0.08 & 0.03 & 0.45 & 0.47 & 0.07 & 0.01 & 0.42 & \underline{0.48} \\ 
$\texttt{FANTOM}$ & 0.32 & 0.13 & 0.40 & 0.27 & 0.32 & 0.13 & 0.41 & 0.28 & \underline{0.31} & \underline{0.12} & 0.40 & 0.27 & \underline{0.31} & \underline{0.12} & 0.40 & 0.27 \\ 
\midrule
\textbf{Causal Discovery} & SHD$\downarrow$  & F1$\uparrow$  & Acc.$\uparrow$  &  & SHD$\downarrow$  & F1$\uparrow$  & Acc.$\uparrow$  &  & SHD$\downarrow$  & F1$\uparrow$  & Acc.$\uparrow$  &  & SHD$\downarrow$  & F1$\uparrow$  & Acc.$\uparrow$  &  \\ 
\midrule
$\texttt{DYNOTEARS}$ & 75.50 & 0.59 & 0.59 &   & 76.63 & 0.59 & 0.58 &   & 66.57 & 0.64 & 0.64 &   & 72.30 & 0.50 & 0.64 &   \\ 
$\texttt{DYNOTEARS}^*$ & \textbf{12.37} & \textbf{0.93} & \textbf{0.93} &   & 14.20 & 0.92 & 0.92 &   & 10.47 & 0.92 & 0.92 &   & 71.33 & 0.51 & 0.64 &   \\ 
$\textbf{\texttt{FlowMSM}}$ ($\texttt{DYNO.}$) & \underline{14.83} & \underline{0.91} & \underline{0.91} &   & 18.07 & 0.90 & 0.90 &   & 13.53 & 0.91 & 0.91 &   & 71.57 & 0.52 & 0.64 &   \\ 
$\texttt{VARLiNGAM}$ & 78.80 & 0.57 & 0.56 &   & 78.17 & 0.58 & 0.57 &   & 65.93 & 0.66 & 0.65 &   & 72.30 & 0.50 & 0.64 &   \\ 
$\texttt{VARLiNGAM}^*$ & 53.07 & 0.69 & 0.68 &   & \textbf{0.07} & \textbf{1.00} & \textbf{1.00} &   & \textbf{1.77} & \textbf{0.99} & \textbf{0.99} &   & 71.37 & 0.51 & 0.64 &   \\ 
$\textbf{\texttt{FlowMSM}}$ ($\texttt{VARL.}$) & 53.70 & 0.68 & 0.68 &   & \underline{10.43} & \underline{0.94} & \underline{0.94} &   & \underline{4.80} & \underline{0.98} & \underline{0.98} &   & 71.77 & 0.51 & 0.64 &   \\ 
$\texttt{Rhino}$ & 76.03 & 0.59 & 0.58 &   & 74.37 & 0.60 & 0.59 &   & 64.10 & 0.66 & 0.65 &   & 58.23 & 0.70 & 0.69 &   \\ 
$\texttt{Rhino}^*$ & 56.37 & 0.68 & 0.68 &   & 54.37 & 0.71 & 0.71 &   & 31.37 & 0.83 & 0.83 &   & \textbf{14.87} & \textbf{0.92} & \textbf{0.92} &   \\ 
$\textbf{\texttt{FlowMSM}}$ ($\texttt{Rhino}$) & 58.77 & 0.67 & 0.67 &   & 55.10 & 0.71 & 0.70 &   & 32.70 & 0.82 & 0.82 &   & \underline{23.40} & \underline{0.87} & \underline{0.88} &   \\ 
$\texttt{R-PCMCI}$ & 69.34 & 0.55 & 0.65 &   & 68.93 & 0.56 & 0.66 &   & 70.98 & 0.53 & 0.65 &   & 72.16 & 0.51 & 0.64 &   \\ 
$\texttt{CASTOR}$ & 72.17 & 0.53 & 0.64 &   & 72.09 & 0.53 & 0.64 &   & 72.72 & 0.53 & 0.63 &   & 72.70 & 0.51 & 0.63 &   \\ 
$\texttt{FANTOM}$ & 72.27 & 0.51 & 0.64 &   & 72.22 & 0.51 & 0.64 &   & 72.29 & 0.51 & 0.64 &   & 72.30 & 0.51 & 0.64 &   \\ 
\bottomrule
\multicolumn{17}{l}{$^*$Methods aided with oracle regime information.}
\end{tabular}
\label{tab:main_results}
\end{table*}

Our experiments are reproducible using the codebase at \url{https://github.com/roelhulsman/flowmsm}. 

\textbf{Methods.} \ For regime detection, we benchmark against Gaussian MSMs, denoted \texttt{iMSM} $\texttt{(Neural})$ and \texttt{iMSM} $\texttt{(Poly})$ \citep{balsells2023identifiability}, where autoregressive Gaussian means are modelled with MLPs and cubic polynomials, respectively. For both regime detection and causal discovery, we additionally compare against \texttt{R-PCMCI} \citep{saggioro2020reconstructing}, \texttt{CASTOR} \citep{rahmani2023castor} and \texttt{FANTOM} \citep{rahmani2025flow}. 
For causal discovery, we refer to our method combined with \texttt{DYNOTEARS} \citep{pamfil2020dynotears} as \texttt{FlowMSM (DYNO.)}, combined with  \texttt{VARLiNGAM} \citep{hyvarinen2010estimation} as \texttt{FlowMSM (VARL.)}, and combined with \texttt{Rhino} \citep{gong2022rhino} as \texttt{FlowMSM (Rhino)}.
In \Cref{app:mcd}, we further evaluate \texttt{MCD} \citep{varambally2024discovering} on non-temporal synthetic mixtures of causal models. 
In \Cref{app:graph_metrics}, we assess the combination of our method with \texttt{PCMCI$^+$}, which we call \texttt{FlowMSM} \texttt{(PCMCI$^+$)}, \texttt{CD-NOD} \citep{huang2020causal} and \texttt{SpaceTime} \citep{mameche2025spacetime} on \emph{CPDAGs} and \emph{summary window graphs}, respectively. 

\textbf{Metrics.} \ To evaluate the estimated regimes $\{\widehat{\bm{r}}^{(n)}_{0:T}\}^N_{n=1}$, we report normalised mutual information (NMI$\uparrow$) and the adjusted Rand index (ARI$\uparrow$). In addition, we compute accuracy$\uparrow$ and F1$\uparrow$ after a label permutation using the Hungarian algorithm \citep{kuhn1955hungarian}. For the estimated window graphs $\widehat{\bm{G}}_{1:\widetilde{K}}$, we report Structural Hamming Distance (SHD$\downarrow$) \citep{tsamardinos2006max}, accuracy$\uparrow$ and F1$\uparrow$ of the estimated edges, averaged over the $K$ regimes. For real-world data, we do not have access to the ground truth causal graph nor regimes, so we provide a qualitative study. 

\textbf{Synthetic Dataset.} \ The synthetic data-generating process for the linear SVAR, nonlinear ANM and the LSNM class are detailed in \Cref{app:synthetic_dgp}. We consider one ``easy'' Gaussian and three ``challenging'' exponential family settings with skewed Gamma noise. In the latter, the initial and transition distributions belong to a complex unknown exponential family, 
consistent with our identifiability theory. The LSNM class represents our ``most challenging'' setting, where we construct initial and transition distributions whose mean is invariant across regimes, such that regime changes are expressed solely through heterogenous distribution scale. 

\begin{figure*}[t]
    \centering
    \includegraphics[width=\linewidth, trim=0.01in 0.03in 0.01in 0.05in, clip]{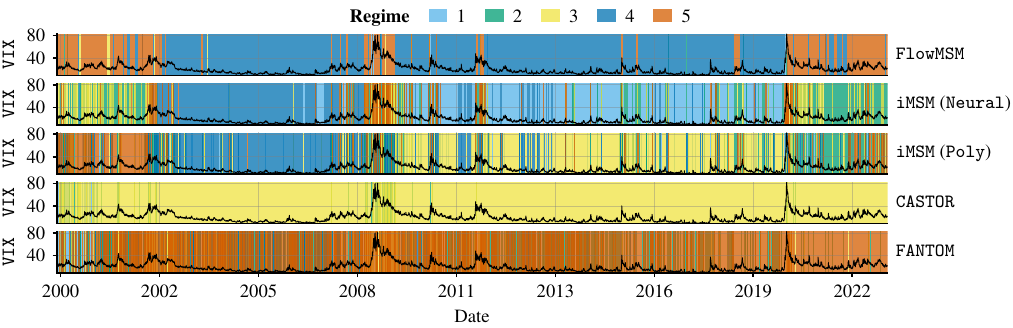}
    \caption{Estimated regimes on daily data from the Fama-French model and \texttt{AAPL} stock, with an overlay of the \texttt{VIX} volatility index, which is not included in training. We use $\widehat{K}=5$ for \texttt{FlowMSM} and \texttt{iMSM} and initial windows of size $1,000$ for \texttt{CASTOR} and \texttt{FANTOM}. 
    }
    \label{fig:fama_french_regimes}
\end{figure*}

\textbf{Results.} \ \Cref{fig:scores_main} summarises performance on regime detection and causal discovery. \texttt{R-PCMCI} is omitted from the left column due to memory constraints. A complete comparison of evaluation metrics is provided in \Cref{tab:main_results}, with further discussion in \Cref{app:oracle_regimes}. 
In simple linear settings, our approach achieves performance comparable to Gaussian \texttt{iMSM} baselines, as expected. In contrast, in nonlinear and non-Gaussian settings, \texttt{FlowMSM} maintains strong performance, whereas baselines degrade substantially. This pattern is consistent across both a single long time series and a collection of shorter sequences. We empirically observe that \texttt{FANTOM} rarely deviates from its initial window assignments in the shorter sequences, resulting in nearly identical NMI scores across different SCMs. The moderate to high F1 and accuracy scores are likely artificially inflated by the label permutation, while the agreement between the true and estimated partition is close to random chance. 

For causal discovery, our approach shows strong performance contingent on the chosen discovery method. \texttt{DYNOTEARS} performs well when regime changes primarily affect distribution location, while \texttt{VARLiNGAM} excels in non-Gaussian settings, except the LSNM class, for which \texttt{Rhino} performs the best. None of the baselines achieve competitive performance in these settings, possibly due to frequent regime switches that might be challenging for iterative algorithms in terms of initialization. Moreover, the baselines infer fixed regime indices for a time series, necessitating independent deployment on each of $N$ sequences and limiting information pooling in short sequences.

\textbf{Ablations.} \ In \Cref{app:oracle_regimes}, we study the impact of oracle regime assignments on the estimation of window graphs, thereby isolating the effect of regime uncertainty on causal discovery. In \Cref{app:noise}, we examine various non-Gaussian exogenous noise distributions. In \Cref{app:ablations}, we analyse sensitivity to data-generation and model hyperparameters, and we provide computational runtimes. 

\textbf{Fama-French Five-Factor Model.} \
We explore regime detection and causal discovery in asset pricing to qualitatively evaluate the Fama-French five-factor asset-pricing model \citep{fama2015five}. The model posits five risk factors that capture systemic patterns in stock returns: size, value, market risk, profitability and investment. Following \citet{sadeghi2025causal}, we also add the excess returns of Apple's stock (\texttt{AAPL}) and analyse data spanning the early 2000s to the end of 2022, for a total of 5,786 trading days. 

\Cref{fig:fama_french_regimes} shows regimes inferred by \texttt{FlowMSM} and baselines, where
\texttt{R-PCMCI} is omitted due to memory constraints. 
\texttt{FlowMSM} identifies two dominant regimes aligned with periods of low and high market volatility, the latter including the 2008 financial crisis, the COVID-19 pandemic, and the early-2000s dot-com bubble. 
In contrast, other methods infer regimes and regime switches that are not easily mapped to market events. 

Even though the Fama-French model is presented as a \emph{statistical} rather than a \emph{causal} model, we investigate its causal implications and limitations in \Cref{app:fama_french}. For example, under the efficient market hypothesis, causal dependencies across time steps should be absent, although we expect periods of market distress may give rise to arbitrage opportunities. We only find partial support for these (and other) hypotheses, suggesting causal interpretations of the Fama-French model on this dataset are limited in scope. Furthermore, on \textit{monthly} instead of \textit{daily} data, we fail to identify meaningful interpretations for any method. In this case, we posit the available months are too few to derive substantiated conclusions.

\section{Conclusion}\label{sec:conclusion}

We explore regime detection and causal discovery in non-stationary time series in the presence of regime switches and nonlinear and non-Gaussian dynamics. 
We propose novel theory on identifiability of latent regimes under temporal regime dependencies, nonlinear lagged and instantaneous effects, and independent noise from the exponential family. 
Our regime detection framework, \texttt{FlowMSM}, readily extends to post-hoc regime-dependent causal discovery. 

Our work provides a principled framework for regime identification in non-stationary time series, and we believe most of our assumptions to be reasonably mild restrictions. 
Specifically, \Cref{a:finite_basis_expansion} is only violated when regimes are indistinguishable in any finite polynomial subspace of the sufficient statistic, which we believe to be largely confined to theoretical scenarios. 
Similarly, \Cref{a:automorphisms} filters out boundary anomalies. 
Finally, conditional causal stationarity can be motivated to (approximately) hold in many real-world settings, including but not limited to seasonal patterns, physiological and psychological states, and network congestion.

Conversely, some of our other assumptions might be unrealistic in real-world settings. 
For example, our identifiability theory requires that the regime variable $R_t$ is unaffected by the observations $\bm{X}_{0:T}$. 
\citet{balsells2025causal} relaxes this assumption in Gaussian settings, but extending to exponential families or beyond is nontrivial. 
Moreover, our identifiability theory critically relies on exponential family noise and real-analytic diffeomorphisms in \Cref{a:exp_fam,a:functions}. 
Preserving linear independence in more relaxed settings is a challenging task that might warrant a vastly different proof strategy than the one deployed in this work. 
Finally, future work could explore non-stationary causal models with latent confounders other than regime variables, thus further relaxing (conditional) causal sufficiency.

\section*{Acknowledgements}

We express gratitude to Yingzhen Li, Mátyás Schubert, Alex Egg, Richard Price, Tobias Witte and anonymous reviewers for helpful comments. RH was supported by Adyen, a global financial technology platform. SM would like to acknowledge the VIDI grant CANES (VI.Vidi.243.247). We thank professor Kenneth R. French for public availability of the research-factor data in his online Data Library. To conduct experiments, we used the Dutch national supercomputer Snellius with the support of the SURF Cooperative.

\section*{Impact Statement}
This paper presents work whose goal is to advance the field of Machine Learning. There are many potential societal consequences of our work, none which we feel must be specifically highlighted here.

\bibliography{bibliography} 
\bibliographystyle{icml2026}

\newpage
\appendix
\crefalias{section}{appendix}
\onecolumn

\section{Background Identifiability Theory}\label[appendix]{app:known_results}

\subsection{Finite Mixture Distributions}\label[appendix]{app:finite_mixtures}
We start with well-established results on identifiable finite mixture distributions. Consider a family of continuous CDFs 
\begin{align}\label{eq:cdfs}
   \mathcal{F}_\mathcal{A}\triangleq \left\{F(\bm{x}\mid a)\ \big| \ a\in\mathcal{A}, \bm{x}\in\mathcal{X}\right\},
\end{align}
where $\mathcal{A}$ and $\mathcal{X}\subseteq\mathbb{R}^D$ are as in the main text such that the functions $F$ are measurable on $\mathcal{X}\times\mathcal{A}$.

We construct a family of finite mixture distributions by linearly combining distributions from the family of CDFs, \textit{i.e.},
\begin{align}
    \mathcal{H}_\mathcal{A}\triangleq \left\{H(\bm{x})=\sum_{a\in\mathcal{A}_K}\lambda_a F(\bm{x}\mid a)\ \Bigg| \ \lambda_a>0\ \forall a\in\mathcal{A}_K,\sum_{a\in\mathcal{A}_K}\lambda_a=1,\mathcal{A}_K\subset\mathcal{A}, K<\infty,  \bm{x}\in\mathcal{X}\right\}.
\end{align}
The finite mixture distributions in $\mathcal{H}$ are said to be identifiable up to permutation when the following definition is satisfied. 
\begin{definition}\label[definition]{def:identifiable_mixture_family}
    The finite mixture distributions in the family $\mathcal{H}_\mathcal{A}$ are said to be \emph{identifiable up to permutation} if for any two finite mixtures $H_1(\bm{x})=\sum_{a\in\mathcal{A}_K}\lambda_aF(\bm{x}\mid a)$ and $H_2(\bm{x})=\sum_{a'\in\mathcal{A}_{K'}'}\lambda'_{a'}F(\bm{x}\mid a')$ in $\mathcal{H}_\mathcal{A}$, we have $H_1(\bm{x})=H_2(\bm{x})$ \textit{a.e.} implies $K = K'$ and there exists some permutation $\phi:\mathcal{A}_K\to \mathcal{A}'_{K'}$ such that for all $a\in\mathcal{A}_K$ we have $\lambda_a = \lambda'_{\phi(a)}$ and $F(\bm{x}\mid a) = F\big(\bm{x}\mid \phi(a)\big)$ \textit{a.e.}.
\end{definition}

\citet{yakowitz1968identifiability} introduced a necessary and sufficient condition for identifiable finite mixture distribution families. Before restating their result, we include the definition of linearly independent functions under finite mixtures. The addition of \emph{under finite mixtures} reflects that linear dependency is allowed for linear combinations of \emph{infinitely many} other functions in the family. Throughout this work, we tend to write linearly independent functions without specifying the weakened requirement of \emph{under finite mixtures}, but this is meant as shorthand.

\begin{definition}\label[definition]{def:linear_independence}
    The family $\mathcal{F}_\mathcal{A}$ is said to contain \emph{linearly independent functions under finite mixtures} on $\mathcal{X}$ if for any finite subset $\mathcal{A}_K\subset\mathcal{A}$ with $K<\infty$, the functions in $\{F(\bm{x}\mid a)\mid a\in\mathcal{A}_K, \bm{x}\in\mathcal{X}\}\subset\mathcal{F}_\mathcal{A}$ are linearly independent \textit{a.e.}, \textit{i.e.}, 
    \begin{align}
        \sum_{a \in \mathcal{A}_K} \lambda_a F(\bm{x}\mid a) = 0 \quad \textit{a.e.} \quad\implies \quad \lambda_a = 0 \quad \text{for all } a \in \mathcal{A}_K.
    \end{align}
\end{definition}
 
\begin{lemma}[Thm. 1 in \citet{yakowitz1968identifiability}]\label[lemma]{lem:yakowitz}
    The finite mixture distributions in the family $\mathcal{H}_\mathcal{A}$ are identifiable up to permutation if and only if the functions in $\mathcal{F}_\mathcal{A}$ are linearly independent under finite mixtures.  
\end{lemma}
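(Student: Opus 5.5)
The plan is to prove the two directions separately. Sufficiency is a direct coefficient-matching argument on the union of the two index sets. Necessity is proved by contraposition: a nontrivial linear relation is split into its positive and negative parts to build two different mixtures that coincide. Throughout, I read \Cref{def:identifiable_mixture_family} as a statement about labelled components. That is, the permutation $\phi$ matches indices, and two distinct indices $a\neq a'$ with $F(\cdot\mid a)=F(\cdot\mid a')$ are regarded as distinct components. This is the reading under which the equivalence is exact.

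\emph{Sufficiency.} Assume the functions in $\mathcal{F}_\mathcal{A}$ are linearly independent under finite mixtures (\Cref{def:linear_independence}). Take $H_1=\sum_{a\in\mathcal{A}_K}\lambda_aF(\bm{x}\mid a)$ and $H_2=\sum_{a\in\mathcal{A}'_{K'}}\lambda'_aF(\bm{x}\mid a)$ with $H_1=H_2$ \textit{a.e.}

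\begin{enumerate}[label=$\arabic*.$, leftmargin=*, nolistsep]
\item Work on the finite set $\mathcal{B}=\mathcal{A}_K\cup\mathcal{A}'_{K'}$. Extend the weights by zero outside their own index sets, so that
\begin{align}
\sum_{a\in\mathcal{B}}(\lambda_a-\lambda'_a)F(\bm{x}\mid a)=0 \quad \textit{a.e.}\nonumber
\end{align}
\item Linear independence gives $\lambda_a=\lambda'_a$ for every $a\in\mathcal{B}$.
\item Suppose $a\in\mathcal{A}_K\setminus\mathcal{A}'_{K'}$. Then $\lambda_a>0$ while $\lambda'_a=0$, a contradiction. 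The symmetric argument excludes $a\in\mathcal{A}'_{K'}\setminus\mathcal{A}_K$.
\item Hence $\mathcal{A}_K=\mathcal{A}'_{K'}$ and $K=K'$. Taking $\phi$ to be the identity gives equal weights and equal components.
\end{enumerate}

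\emph{Necessity.} Suppose linear independence fails. Then there are a finite $\mathcal{A}_K$ and weights $\lambda_a$, not all zero, with $\sum_a\lambda_aF(\bm{x}\mid a)=0$ \textit{a.e.}

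\begin{enumerate}[label=$\arabic*.$, leftmargin=*, nolistsep]
\item \emph{The weights sum to zero.} Each $F(\cdot\mid a)$ is a continuous CDF, so it tends to $1$ as all coordinates of $\bm{x}$ tend to $+\infty$. Since an \textit{a.e.} identity between continuous functions holds everywhere, this limit can be taken, and it yields $\sum_a\lambda_a=0$.
\item \emph{Split by sign.} Let $\mathcal{P}=\{a:\lambda_a>0\}$ and $\mathcal{N}=\{a:\lambda_a<0\}$. Both sets are nonempty, because the weights are nontrivial and sum to zero. They are disjoint, and $c\triangleq\sum_{a\in\mathcal{P}}\lambda_a=\sum_{a\in\mathcal{N}}|\lambda_a|>0$.
\item \emph{Build two mixtures.} Set $H_1=\sum_{a\in\mathcal{P}}(\lambda_a/c)F(\bm{x}\mid a)$ and $H_2=\sum_{a\in\mathcal{N}}(|\lambda_a|/c)F(\bm{x}\mid a)$. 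Both lie in $\mathcal{H}_\mathcal{A}$ and $H_1=H_2$ \textit{a.e.} Their label sets are disjoint, so no permutation between them can preserve labels, and identifiability fails.
\end{enumerate}

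\emph{Where the care is needed.} The only delicate point is necessity when distinct labels share the same CDF. Under the labelled reading, a relation of the form $F(\cdot\mid a)-F(\cdot\mid a')=0$ with $a\neq a'$ already yields $H_1=F(\cdot\mid a)$ and $H_2=F(\cdot\mid a')$ as a counterexample. To make this robust, I would first merge coinciding CDFs within the relation.

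\begin{itemize}[nosep]
\item If the merged relation is trivial, the dependence came solely from duplicated functions, and the pair $H_1=F(\cdot\mid a)$, $H_2=F(\cdot\mid a')$ above is the counterexample.
\item Otherwise, the sign-splitting construction applied to the merged relation produces $H_1=H_2$ whose components are pairwise distinct functions across the two sides. Then no $\phi$ can even satisfy $F(\bm{x}\mid a)=F(\bm{x}\mid\phi(a))$, so identifiability fails in every reading.
\end{itemize}

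The second step of the necessity direction is the main technical point, since it is where the CDF normalisation is used. The remaining steps are bookkeeping.
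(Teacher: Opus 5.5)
The paper does not prove this lemma; it cites Theorem~1 of Yakowitz and Spragins, and your argument is essentially theirs. Sufficiency follows from uniqueness of coefficients over the finite union $\mathcal{A}_K\cup\mathcal{A}'_{K'}$. Necessity splits a nontrivial relation by sign and uses $F(\bm{x}\mid a)\to 1$ to get $\sum_a\lambda_a=0$, so that both halves normalise to mixtures. Your sufficiency direction is correct under the paper's definition as written. So is the second bullet of your necessity argument, the case of a nontrivial relation among pairwise distinct CDFs after merging.

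The gap is the duplicated case, i.e.\ your first bullet. It relies on the labelled reading you adopted rather than on \Cref{def:identifiable_mixture_family}. That definition matches components as functions, not as labels: it asks for a bijection $\phi$ with $\lambda_a=\lambda'_{\phi(a)}$ and $F(\bm{x}\mid a)=F(\bm{x}\mid\phi(a))$ \textit{a.e.} Suppose $F(\cdot\mid a)=F(\cdot\mid a')$ for some $a\neq a'$. Then $H_1=F(\cdot\mid a)$ and $H_2=F(\cdot\mid a')$ satisfy the definition with $\phi(a)=a'$. So this pair is not a counterexample, and the duplicated case is left unproved for the paper's notion of identifiability.

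The fix is one line. Take $H_1=\tfrac12F(\cdot\mid a)+\tfrac12F(\cdot\mid a')$, with $K=2$, and $H_2=F(\cdot\mid a)$, with $K'=1$; they coincide but violate $K=K'$. This also shows that your claim that the labelled reading is the one under which the equivalence is exact is inaccurate. The clause $K=K'$ already makes duplicated CDFs non-identifiable, so the lemma holds under the definition as stated and no reinterpretation is needed.
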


\subsection{Mixtures of Exponential Families}

A few years prior, \citet{barndorff1965identifiability} derived conditions under which exponential families generate identifiable mixtures. Let 
\begin{align}
    \overline{\mathcal{F}}\triangleq \left\{\overline{F}_{\bm{\eta}}(\bm{x})\ \Big| \ \bm{\eta}\in\Omega,\bm{x}\in\mathcal{X}\right\}
\end{align} 
be a family of CDFs corresponding to an exponential family of Borel probability measures with natural parameters $\bm{\eta}$ and sufficient statistic $\bm{\tau}$. Now consider the family of mixture distributions
\begin{align}
    \overline{\mathcal{H}}\triangleq \left\{\overline{H}(\bm{x})=\int_{\Omega} \overline{F}_{\bm{\eta}}(\bm{x})\text{d}\gamma(\bm{\eta})\right\},
\end{align}
where $\gamma$ is a probability measure on $\Omega$. \emph{Finite} mixtures are a special case where $\gamma$ has finite support. We highlight that this is a mixture over natural parameters $\bm{\eta}$, keeping the sufficient statistic $\bm{\tau}$ and support $\mathcal{X}$ fixed across regimes. 

Identifiability of the finite mixture distributions in $\overline{\mathcal{H}}$ here means that for any two mixtures $\overline{H}_1=\int_{\Omega} \overline{F}_{\bm{\eta}}(\bm{x})\text{d}\gamma_1(\bm{\eta})$ and $\overline{H}_2=\int_{\Omega} \overline{F}_{\bm{\eta}}(\bm{x})\text{d}\gamma_2(\bm{\eta})$, we have 
\begin{align}
    \int_{\Omega} \overline{F}_{\bm{\eta}}(\bm{x})\text{d}\gamma_1(\bm{\eta})=\int_{\Omega} \overline{F}_{\bm{\eta}}(\bm{x})\text{d}\gamma_2(\bm{\eta})\quad \textit{a.e.}\quad \implies \quad \gamma_1=\gamma_2.
\end{align}

Below, we restate one particular set of conditions \citep[Cor. 3]{barndorff1965identifiability}, adapting notation to our setting. Although the statement targets arbitrary mixtures in the original work, it holds for finite mixtures in particular. 
\begin{lemma}[Cor. 3 in \citet{barndorff1965identifiability}]\label[lemma]{lem:barndorff}
    Consider the family of continuous CDFs $\overline{\mathcal{F}}$ and assume the following:
    \begin{enumerate}[label=($\overline{a\arabic*}$), leftmargin=*, align=left]
        \item The probability measures corresponding to $\overline{F}_{\bm{\eta}}$ are absolutely continuous with respect to the Lebesgue measure;
        \item The sufficient statistic $\bm{\tau}$ is continuous;
        \item The set $\{\bm{\tau}(\bm{x})\mid h(\bm{x})>0,\bm{x}\in\mathcal{X}\}$ contains a (non-empty) open set. 
    \end{enumerate}
    Then the mixture distributions in the family $\overline{\mathcal{H}}$ are identifiable.
\end{lemma}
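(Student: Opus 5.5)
The plan is to reduce the mixture equality to an equality of multivariate Laplace transforms on an open set, and then invoke uniqueness of the Laplace transform.

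First, suppose $\overline{H}_1=\overline{H}_2$ a.e. for mixing measures $\gamma_1,\gamma_2$ on $\Omega$. Since CDFs determine Borel measures, the two mixture measures coincide. By ($\overline{a1}$) each component has the Lebesgue density $h(\bm{x})\exp(\bm{\eta}\cdot\bm{\tau}(\bm{x})-A(\bm{\eta}))$. Fubini then gives, for Lebesgue-a.e.\ $\bm{x}$ with $h(\bm{x})>0$,
\begin{align*}
\int_\Omega e^{\bm{\eta}\cdot\bm{\tau}(\bm{x})}\,\text{d}\mu_1(\bm{\eta})=\int_\Omega e^{\bm{\eta}\cdot\bm{\tau}(\bm{x})}\,\text{d}\mu_2(\bm{\eta}),\qquad \text{d}\mu_i(\bm{\eta})\triangleq e^{-A(\bm{\eta})}\text{d}\gamma_i(\bm{\eta}).
\end{align*}
To avoid pointwise issues, I would work with the pushforward $\nu\triangleq\bm{\tau}_\#(h\,\text{d}\bm{x})$ on $\mathbb{R}^P$. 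Define the Laplace transforms $L_i(\bm{s})\triangleq\int e^{\bm{\eta}\cdot\bm{s}}\,\text{d}\mu_i(\bm{\eta})$. The mixture equality says $L_1\,\text{d}\nu=L_2\,\text{d}\nu$ as measures, so $L_1=L_2$ $\nu$-a.e. Moreover $\int L_i\,\text{d}\nu=1$, so each $L_i$ is finite $\nu$-a.e.

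Second, I would show that $\nu$ charges every open subset of some non-empty open set $U\subseteq\mathbb{R}^P$. By ($\overline{a3}$), $\bm{\tau}(\{h>0\})$ contains an open set $U$. By continuity ($\overline{a2}$), for any open $V\subseteq U$ the preimage $\bm{\tau}^{-1}(V)$ is open and meets $\{h>0\}$. To conclude $\nu(V)>0$, I need this intersection to have positive Lebesgue measure. This is immediate if $\{h>0\}$ is open, or if it equals the support up to a null set, which is the standard regularity of the base measure. I would state this mild convention explicitly, since it is the one place where measure-zero pathologies must be excluded.

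Next, the function $L_i$ is convex in the extended sense, because $\log L_i$ is convex by Hölder's inequality. Since $L_i$ is finite on a $\nu$-dense subset of $U$, its effective domain contains the convex hull of that subset, which has non-empty interior. Shrinking $U$, both $L_1$ and $L_2$ are therefore finite on an open set $U'$. On the interior of their domains they are real-analytic, and indeed extend holomorphically to $U'+i\mathbb{R}^P$. Being continuous and equal on a dense subset of $U'$, they agree on all of $U'$.

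Finally, two Laplace transforms of positive measures that are finite and equal on a non-empty open set coincide everywhere on the common tube domain. Restricting to $\bm{s}_0+i\bm{t}$ for a fixed $\bm{s}_0\in U'$, I obtain equal Fourier transforms of the finite measures $e^{\bm{\eta}\cdot\bm{s}_0}\text{d}\mu_i$. Hence $\mu_1=\mu_2$, and multiplying by $e^{A(\bm{\eta})}$ gives $\gamma_1=\gamma_2$.

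I expect the main obstacle to be the second step: producing an open set on which both transforms are finite and agree everywhere, not merely $\nu$-a.e. My first approach there is the convexity-of-domain argument combined with the density of the support of $\nu$. The analytic-continuation and Fourier-uniqueness steps are standard.
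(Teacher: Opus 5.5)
Your argument is correct and is essentially the Laplace-transform uniqueness proof of \citet{barndorff1965identifiability}, which the paper cites rather than reproduces: equal mixtures give equal Laplace transforms of $e^{-A(\bm{\eta})}\mathrm{d}\gamma_i$ on the image of $\bm{\tau}$, and ($\overline{a2}$)--($\overline{a3}$) upgrade this to equality on an open set. Your regularity caveat is genuinely necessary, not cosmetic: a continuous $\bm{\tau}$ can map a Lebesgue-null Cantor set onto an open set, so the literal ($\overline{a3}$) can be satisfied just by altering $h$ on a null set. You should therefore phrase the caveat as $\{h>0\}$ open or $\{h>0\}\subseteq\operatorname{supp}(h\,\mathrm{d}\bm{x})$, because $\{h>0\}$ agreeing with the support only up to a null set does not exclude exactly this pathology.
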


\subsection{Markov Switching Models}\label[appendix]{app:known_msm_results}

MSMs are studied in \citet{balsells2023identifiability,balsells2026identifiability} and we restate some of their main results for future reference. Throughout, we adapt notation to match the main text, and for proofs we refer to the original work. 

The attentive reader will notice that the 
results on finite mixture distributions in the previous section focus on mixtures of CDFs, while MSMs concern mixtures of PDFs. However, under the assumption below, the results in the previous section transfer directly to families of PDFs, without loss of generality. We proceed to discuss families of PDFs.  

\begin{assumption}[Existing Densities, p.13 in \citet{balsells2023identifiability}]\label[assumption]{a:well_def_dens}\
    The probability measures of $\bm{X}_{0:T}\mid\bm{R}_{0:T}$ are dominated by the Lebesgue measure on $\mathcal{X}^{\times(T+1)}$ and the CDFs are continuously differentiable.
\end{assumption}

Consider the following countably infinite family of continuous PDFs, which we often refer to as the \emph{product family} of initial and transition distribution families, \textit{i.e.},
\begin{align}\label{eq:joint_family}
    \mathcal{P}^0_\mathcal{A}\otimes\mathcal{P}_{\mathcal{A}}^{\otimes T}\triangleq \left\{p_{\bm{\theta}}(\bm{x}_{0:T}\ \big| \ \bm{r}_{0:T})\ \Big| \ \bm{r}_{0:T}\in\mathcal{A}^{\times(T+1)},\bm{x}_{0:T}\in\mathcal{X}^{\times(T+1)}\right\}.
\end{align}

An MSM is a finite mixture distribution with $K^{T+1}$ components taken from $\mathcal{P}^0_\mathcal{A}\otimes\mathcal{P}_{\mathcal{A}}^{\otimes T}$, and the family of MSMs $\mathcal{M}^T_\mathcal{A}$ defined below consists of all such finite mixtures. We simplify notation compared to \citet{balsells2023identifiability} by using the regime $r_t$ as index directly, to omit an injective index mapping. This gives
\begin{align}\label{eq:msm}
    \mathcal{M}^T_{\mathcal{A}}\triangleq \Bigg\{p_{\bm{\theta}}(\bm{x}_{0:T})=\sum_{\bm{r}_{0:T}}p_{\bm{\theta}}(\bm{r}_{0:T})p_{\bm{\theta}}(\bm{x}_0\mid r_0)\prod^T_{t=1}p_{\bm{\theta}}(\bm{x}_t\mid\bm{x}_{t-1}, r_t) \ \Bigg| \ &p_{\bm{\theta}}(\bm{x}_0\mid r_0)\in\mathcal{P}^0_\mathcal{A}, \
    p_{\bm{\theta}}(\bm{x}_t \mid \bm{x}_{t-1}, r_t) \in \mathcal{P}_\mathcal{A}, \\
    &r_0,r_t\in\mathcal{A}_K\subset\mathcal{A}, \ t\in\{1,\dots,T\}, K<\infty\Bigg\}.\nonumber
\end{align}

Given the specific structure of MSMs in terms of conditional causal stationarity and the factorisation in \Cref{eq:msm_factorization}, the definition of identifiability up to permutation in the main text (\Cref{def:identifiable_msms}) follows from identifiability up to permutation of a finite mixture distribution family (\Cref{def:identifiable_mixture_family}). Similarly, the following is a direct consequence of \Cref{lem:yakowitz} for MSMs.
  
\begin{lemma}[Thm. B.2 in \citet{balsells2023identifiability}]\label[lemma]{lem:thm_B2}
    Given \Cref{a:well_def_dens}, if the PDFs in $\mathcal{P}^0_\mathcal{A}\otimes\mathcal{P}_{\mathcal{A}}^{\otimes T}$ are linearly independent under finite mixtures (\Cref{def:linear_independence}), then the mixture distributions in the family of MSMs $\mathcal{M}^T_{\mathcal{A}}$ are identifiable up to permutation. 
\end{lemma}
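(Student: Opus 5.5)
The plan is to reduce the statement to a coefficient-matching argument in the spirit of \Cref{lem:yakowitz}, applied to the product family $\mathcal{P}^0_\mathcal{A}\otimes\mathcal{P}_{\mathcal{A}}^{\otimes T}$ viewed as a single countable family indexed by regime sequences $\bm{r}_{0:T}\in\mathcal{A}^{\times(T+1)}$. I would then read off the four conditions of \Cref{def:identifiable_msms} from the matched mixture weights and components, using the MSM factorisation (\Cref{eq:msm_factorization}).

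\textbf{Step 1: match the weights of the two mixtures.} Suppose $p_{\bm{\theta}},p'_{\bm{\theta}'}\in\mathcal{M}^T_\mathcal{A}$ with index sets $\mathcal{A}_K,\mathcal{A}'_{K'}$ satisfy $p_{\bm{\theta}}=p'_{\bm{\theta}'}$ a.e. By \Cref{a:well_def_dens}, both are mixtures of Lebesgue densities from the product family, so the CDF-based statements transfer to PDFs. Discard every sequence with zero prior weight, and every label that occurs in no positive-weight sequence. Let $S\subset\mathcal{A}^{\times(T+1)}$ be the finite union of the two remaining sets of sequences. Subtracting the two mixtures gives
\begin{align*}
\sum_{\bm{r}_{0:T}\in S}\big(p_{\bm{\theta}}(\bm{r}_{0:T})-p'_{\bm{\theta}'}(\bm{r}_{0:T})\big)\,p(\bm{x}_{0:T}\mid\bm{r}_{0:T})=0\quad\textit{a.e.},
\end{align*}
where a prior is set to zero for sequences outside its own support. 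Linear independence under finite mixtures (\Cref{def:linear_independence}) forces every coefficient to vanish. Hence the two priors coincide on $S$, and in particular the two supports coincide.

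\textbf{Step 2: equal components are equal labels.} Linear independence also shows that distinct indices in $\mathcal{A}^{\times(T+1)}$ give distinct functions a.e. If two indices gave the same function, taking coefficients $1$ and $-1$ would be a nontrivial vanishing combination. This is exactly why a relabelling $\phi$ is needed in \Cref{def:identifiable_msms}: the second model may attach a different name to the same density.

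So I would define $\phi$ at the level of sequences first. Each positive-weight sequence of the second model must coincide, as a component, with exactly one positive-weight sequence of the first model. I then show that this sequence-level map is induced by a map on single labels. Fix a matched pair and integrate out $\bm{x}_{1:T}$; this is justified by Fubini and the existence of densities. The factorisation then gives equal initial densities, so the first labels correspond. Integrating out $\bm{x}_{t+1:T}$ and dividing by the already matched marginal of $\bm{x}_{0:t-1}$, which is positive a.e. on its support, gives equality of the transition densities at step $t$. By induction on $t$, each position is matched by the same rule.

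Together with the injectivity from the previous paragraph, this defines a single map $\phi:\mathcal{A}_K\to\mathcal{A}'_{K'}$ on labels. It is well defined and injective in both directions, hence a bijection, which yields $K=K'$ together with conditions 3 and 4.

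\textbf{Step 3: conditions on the regime prior.} Conditions 1 and 2 then follow from Step 1. The joint regime priors agree after applying $\phi$ coordinatewise, so their marginals agree, and so do their chain-rule conditionals wherever the conditioning sequence has positive probability.

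\textbf{Expected main obstacle.} The delicate point is the consistency in Step 2. A given label $a$ may appear in many sequences and at several time steps, and I must ensure it is always sent to the same $a'$. I would handle this with the marginalisation argument: within the MSM family, the density attached to a single label (initial or transition) is a fixed function. So once two sequences share an initial or transition density at some position, the same matching is forced at every occurrence. The a.e. issues in dividing by marginal densities are controlled by \Cref{a:well_def_dens}.
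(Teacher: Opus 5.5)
Your proposal is correct and follows essentially the paper's route. The paper treats this lemma as a direct consequence of Yakowitz's theorem (\Cref{lem:yakowitz}) applied to the product family indexed by regime sequences, and then reads off the conditions of \Cref{def:identifiable_msms} from the factorisation in \Cref{eq:msm_factorization}; your Steps 1 and 3 do exactly this.

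Your Step 2 is more work than needed, though. Both mixtures draw their components from the same indexed families $\mathcal{P}^0_\mathcal{A},\mathcal{P}_\mathcal{A}$, so Step 1 already matches the positive-weight sequences index by index. Hence $\phi$ is just the identity on the labels in use, and conditions 3--4 hold without the marginalise-and-divide argument. That argument would in any case only give the transition equality where the $\bm{x}_{t-1}$-marginal is positive, not for almost every $\bm{x}_{t-1}\in\mathcal{X}$.
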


In order to show identifiability of the mixture distributions in the family of MSMs, it is thus sufficient to show the component family $\mathcal{P}^0_\mathcal{A}\otimes\mathcal{P}_{\mathcal{A}}^{\otimes T}$ contains linearly independent functions. The following lemma summarises non-parametric sufficient conditions ensuring this property. For clarity, we combine \citet[Lem. B.3 and Thm. B.4]{balsells2023identifiability} and adapt notation to our setting. This means we can drop conditions \textit{(b1)-(b2)} in their Lem. B.3, as they are vacuously true when the functions are PDFs that satisfy their conditions \textit{(b3)-(b4)}. That is, their assumption \textit{(b1)} follows directly from positivity of a PDF on its support, and their assumption \textit{(b2)} is implied by their linear independence conditions \textit{(b3)–(b4)}, as shown in \citet{balsells2026identifiability}.

\begin{lemma}[Adaptation of Lem. B.3 and Thm. B.4 in \citet{balsells2023identifiability}]\label[lemma]{lem:lem_b3}
    Given \Cref{a:well_def_dens}, consider the family of MSMs $\mathcal{M}^T_{\mathcal{A}}$ and assume the following hold for the initial and transition families $\mathcal{P}^0_\mathcal{A}, \mathcal{P}_\mathcal{A}$, respectively:
    \begin{enumerate}[label=($\overline{b\arabic*}$), leftmargin=*, align=left]
        \item \emph{Linear independence under finite mixtures on specific non-zero measure subsets in $\mathcal{P}^0_\mathcal{A}$}: For any non-zero measure subset $\mathcal{Y}\subset\mathcal{X}$, the initial distribution family $\mathcal{P}^0_\mathcal{A}$ contains linearly independent functions under finite mixtures on $\bm{x}_0\in\mathcal{Y}$;
        \item \emph{Linear independence under finite mixtures on specific non-zero measure subsets in $\mathcal{P}_\mathcal{A}$}: There exists a non-zero measure subset $\mathcal{Y}\subset\mathcal{X}$ such that for any non-zero measure subsets $\mathcal{Y}'\subset\mathcal{Y}$ and $\mathcal{Z}\subset\mathcal{X}$, the transition distribution family $\mathcal{P}_\mathcal{A}$ contains linearly independent functions under finite mixtures on $(\bm{x}_t,\bm{x}_{t-1})\in\mathcal{Z}\times\mathcal{Y}'$;
        \item \emph{Linear dependence under finite mixtures for subsets of functions in $\mathcal{P}_\mathcal{A}$ implies repeating functions}: For any $\bm{\beta}\in\mathcal{X}$, any non-zero measure subset $\mathcal{Z}\subset \mathcal{X}$ and any subset $\mathcal{A}_K\subset \mathcal{A}$ such that $K<\infty$, the family $\{p_{\bm{\theta}}(\bm{x}_{t}\mid \bm{x}_{t-1}=\bm{\beta},a)\mid a\in \mathcal{A}_K\}$ contains linearly dependent functions on $\bm{x}_t\in\mathcal{Z}$ only if there exists $a\neq a'\in \mathcal{A}_K$ such that $p_{\bm{\theta}}(\bm{x}_t\mid\bm{x}_{t-1}=\bm{\beta},a)=p_{\bm{\theta}}(\bm{x}_t\mid\bm{x}_{t-1}=\bm{\beta},a')$ for all $\bm{x}_t\in\mathcal{X}$;
        \item \emph{Continuity for $\mathcal{P}_\mathcal{A}$}: For any $a\in \mathcal{A}$, $p_{\bm{\theta}}(\bm{x}_t\mid\bm{x}_{t-1},a)$ is continuous in $\bm{x}_{t-1}\in\mathcal{X}$.
    \end{enumerate}
    Then for any $T\geq 1$ and any subset $\mathcal{Z}\subset\mathcal{X}$, the joint distribution family $\mathcal{P}^0_\mathcal{A}\otimes\mathcal{P}_{\mathcal{A}}^{\otimes T}$ contains linearly independent functions under finite mixtures on $(\bm{x}_{0:T-1},\bm{x}_T)\in\mathcal{X}^{T}\times\mathcal{Z}$ (\Cref{def:linear_independence}).
\end{lemma}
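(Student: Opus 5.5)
The plan is to argue by induction on $T$, proving the conclusion of \Cref{lem:lem_b3} in the strengthened form it is stated: linear independence of $\mathcal{P}^0_\mathcal{A}\otimes\mathcal{P}_{\mathcal{A}}^{\otimes T}$ on $\mathcal{X}^{T}\times\mathcal{Z}$ for \emph{every} non-zero measure $\mathcal{Z}\subset\mathcal{X}$. This strengthening is what feeds the next step. In both the base case and the induction step, the key manoeuvre is the same. I fix all variables except the last one, $\bm{x}_T$. I then group the finite linear combination by the last regime $r_T$, which turns it into a combination of the transition densities $p_{\bm{\theta}}(\bm{x}_T\mid\bm{x}_{T-1},r_T)$ whose coefficients depend on $\bm{x}_{0:T-1}$. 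Conditions $(\overline{b2})$–$(\overline{b3})$ kill these coefficients for almost every $\bm{x}_{T-1}$ in the distinguished set $\mathcal{Y}$ of $(\overline{b2})$. Finally, the induction hypothesis applied on $\mathcal{X}^{T-1}\times\mathcal{Y}$ kills the original weights.

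\textbf{Step 1: a coincidence set of measure zero.} First I would establish an auxiliary fact from $(\overline{b2})$ and $(\overline{b4})$. For $a\neq a'\in\mathcal{A}$, let
\[
S_{a,a'}=\{\bm{\beta}\in\mathcal{Y}: p_{\bm{\theta}}(\cdot\mid\bm{\beta},a)=p_{\bm{\theta}}(\cdot\mid\bm{\beta},a') \text{ on } \mathcal{X}\}.
\]
I claim $S_{a,a'}$ is Lebesgue-null.
\begin{itemize}
\item Measurability: continuity in $\bm{x}_{t-1}$ from $(\overline{b4})$, together with joint measurability, makes $S_{a,a'}$ measurable. It can be written as an intersection over a countable dense family of test sets in $\bm{x}_t$.
\item Nullity: if $S_{a,a'}$ had positive measure, take $\mathcal{Y}'=S_{a,a'}$ and any non-zero measure $\mathcal{Z}$. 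Then $p_{\bm{\theta}}(\bm{x}_t\mid\bm{x}_{t-1},a)-p_{\bm{\theta}}(\bm{x}_t\mid\bm{x}_{t-1},a')=0$ on $\mathcal{Z}\times\mathcal{Y}'$. This is a nontrivial vanishing combination, contradicting $(\overline{b2})$.
\end{itemize}
Since $\mathcal{A}_K$ is finite, the union $N=\bigcup_{a\neq a'\in\mathcal{A}_K}S_{a,a'}$ is null.

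\textbf{Step 2: base case $T=1$.} Suppose $\sum_{a_0,a_1}\lambda_{a_0a_1}p_{\bm{\theta}}(\bm{x}_0\mid a_0)p_{\bm{\theta}}(\bm{x}_1\mid\bm{x}_0,a_1)=0$ a.e.\ on $\mathcal{X}\times\mathcal{Z}$. Set $c_{a_1}(\bm{x}_0)=\sum_{a_0}\lambda_{a_0a_1}p_{\bm{\theta}}(\bm{x}_0\mid a_0)$.
\begin{itemize}
\item By Fubini, for a.e.\ $\bm{\beta}\in\mathcal{Y}\setminus N$ we have $\sum_{a_1}c_{a_1}(\bm{\beta})\,p_{\bm{\theta}}(\bm{x}_1\mid\bm{\beta},a_1)=0$ for a.e.\ $\bm{x}_1\in\mathcal{Z}$.
\item Because $\bm{\beta}\notin N$, no two of these functions coincide. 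By $(\overline{b3})$ they are therefore linearly independent on $\mathcal{Z}$, which forces $c_{a_1}(\bm{\beta})=0$ for all $a_1$.
\item Hence $\sum_{a_0}\lambda_{a_0a_1}p_{\bm{\theta}}(\bm{x}_0\mid a_0)=0$ for a.e.\ $\bm{x}_0\in\mathcal{Y}$. Since $\mathcal{Y}$ has positive measure, $(\overline{b1})$ gives $\lambda_{a_0a_1}=0$.
\end{itemize}

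\textbf{Step 3: induction step.} Assume the claim for $T-1$ and every non-zero measure last-coordinate set. Given a vanishing combination $\sum_{\bm{r}_{0:T}}\lambda_{\bm{r}_{0:T}}p_{\bm{\theta}}(\bm{x}_{0:T-1}\mid\bm{r}_{0:T-1})p_{\bm{\theta}}(\bm{x}_T\mid\bm{x}_{T-1},r_T)=0$ on $\mathcal{X}^T\times\mathcal{Z}$, I group by $r_T$. The coefficients are $c_{r_T}(\bm{x}_{0:T-1})=\sum_{\bm{r}_{0:T-1}}\lambda_{\bm{r}_{0:T}}p_{\bm{\theta}}(\bm{x}_{0:T-1}\mid\bm{r}_{0:T-1})$. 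Exactly as in Step 2, using Fubini and the fact that the transitions depend on history only through $\bm{x}_{T-1}$, we get $c_{r_T}=0$ for a.e.\ $\bm{x}_{0:T-1}$ with $\bm{x}_{T-1}\in\mathcal{Y}\setminus N$. The induction hypothesis on $\mathcal{X}^{T-1}\times\mathcal{Y}$ then yields $\lambda_{\bm{r}_{0:T}}=0$ for each fixed $r_T$.

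\textbf{Main obstacle.} I expect the hardest part to be Step 1 and the associated measure-theoretic bookkeeping. Condition $(\overline{b3})$ is pointwise in $\bm{\beta}$ and speaks of equality on all of $\mathcal{X}$, whereas $(\overline{b2})$ is an almost-everywhere statement on product sets. Reconciling them requires three things: measurability of $S_{a,a'}$, where I would lean on $(\overline{b4})$ and a countable generating class; converting "equal on $\mathcal{X}$ for $\bm{\beta}\in S$" into "vanishing a.e.\ on $\mathcal{Z}\times S$"; and ensuring that the Fubini slices inherit the a.e.\ vanishing on $\mathcal{Z}$ for a full-measure set of $\bm{\beta}$.
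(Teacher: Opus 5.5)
Your proof is correct and follows essentially the route the paper relies on. The paper gives no proof of its own and defers to Lem.~B.3 and Thm.~B.4 of \citet{balsells2023identifiability}: a two-factor lemma for products that share an overlapping variable, applied by induction on $T$ with the hypothesis strengthened to arbitrary non-zero measure last-coordinate sets, which is exactly what your Steps 1--3 carry out. Your measurability concern in Step 1 is milder than you expect: with everywhere-equality, $S_{a,a'}$ is an intersection over $\bm{x}_t$ of sets that are closed by $(\overline{b4})$, so it is relatively closed in $\mathcal{Y}$ and you do not need a countable test family.
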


Linear independence can be an abstract notion, and it can be unclear when it is satisfied in real-world settings. To illustrate, \citet{balsells2023identifiability} only provides an instantiation of \Cref{lem:lem_b3} for Gaussian initial and transition families with transition distributions parametrised by real-analytic functions. Our contributions generalise these results to exponential families, and translate linear independence to more fine-grained assumptions on regime-switching SCMs.

\section{Proofs for Identifiability Theory}\label[appendix]{app:proofs}

In \Cref{app:fantom}, we elaborate on how our setting with dependencies between regimes relates to independently sampled regimes at each time step, \textit{i.e.}, the setting considered in \citet{rahmani2023castor,rahmani2025flow}. We restate a minor adaptation of their main theoretical result, and subsequently show that linear independence of the transition distribution family is sufficient to identify the full regime trajectory, when regimes are independent. 

In \Cref{app:exp_fam_msms}, we generalise established results on identifiable Gaussian MSMs with dependencies between regimes to other continuous minimal regular exponential family distributions. We leverage fundamental work from \citet{barndorff1965identifiability}, set out in \Cref{lem:barndorff}. Furthermore, we build on the nonparametric conditions for linear independence of sequences of random variables proposed in \citet{balsells2023identifiability} and summarised in \Cref{lem:lem_b3}. 

In \Cref{app:proof_identifiable_msm}, we translate the conditions for linear independence of the initial and transition distribution family to more fine-grained conditions on regime-switching SCMs, and provide a proof of our main theoretical result, \Cref{prop:identifiable_msm}. 

In \Cref{app:high_order_markov}, we relax the mathematically convenient assumption of \emph{first-order} MSMs to an arbitrary number of lags, following work from \citet{balsells2026identifiability}.

\subsection{Identifiable Markov Switching Models under Independent Regimes}\label[appendix]{app:fantom}

Below, we adapt \citet[Thm. 4.3]{rahmani2025flow} to our setting and notation. We omit the condition that the causal structure is identifiable, since it is not needed to show identifiability of the induced finite mixture distributions. This does not alter the strength of the claim, since identifiable causal structure can simply be added to both the assumptions and the conclusion to retain the original statement. 

\begin{lemma}[Adaptation of Thm. 4.3 in \citet{rahmani2025flow}]
    Consider the family of transition distributions $\mathcal{P}_\mathcal{A}$ (\Cref{eq:msm_factorization}). Assume that $\mathcal{P}_\mathcal{A}$ contains linearly independent functions and the regimes $R_t$ are independent. Then the finite mixture distributions induced by $\mathcal{P}_\mathcal{A}$, \textit{i.e.},
    \begin{align}
        \left\{ p_{\bm{\theta}}(\bm{x}_t\mid\bm{x}_{t-1})=\sum_{r_t\in\mathcal{A}_K}p_{\bm{\theta}}(r_t)p_{\bm{\theta}}(\bm{x}_t\mid\bm{x}_{t-1},r_t)\ \Big|\ \mathcal{A}_K\subset\mathcal{A}, K<\infty, \bm{x}_t,\bm{x}_{t-1}\in\mathcal{X},t\in\{1,\dots,T\}\right\},
    \end{align}
    are identifiable up to permutation (\Cref{def:identifiable_mixture_family}). 
\end{lemma}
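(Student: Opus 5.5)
The plan is to reduce the statement to \Cref{lem:yakowitz}, treating each conditional mixture as a finite mixture on the joint space of $(\bm{x}_t,\bm{x}_{t-1})$.

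The first step is to make sure the mixture weights do not depend on the history. The regimes are independent of each other, and (as everywhere in the paper) independent of the observations. Hence $p_{\bm{\theta}}(r_t\mid\bm{x}_{t-1})=p_{\bm{\theta}}(r_t)$. So marginalising $R_t$ out of $p_{\bm{\theta}}(\bm{x}_t,r_t\mid\bm{x}_{t-1})$ gives exactly the displayed mixture, with constant weights $\lambda_a\triangleq p_{\bm{\theta}}(R_t=a)>0$ summing to one. Each element of the displayed family is therefore a finite mixture, in the sense of $\mathcal{H}_\mathcal{A}$, of the functions $(\bm{x}_t,\bm{x}_{t-1})\mapsto p_{\bm{\theta}}(\bm{x}_t\mid\bm{x}_{t-1},a)$, $a\in\mathcal{A}$. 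I would state this step carefully, since it is exactly where independence of the regimes is used. With Markov-dependent regimes the weight would instead be $p(r_t\mid\bm{x}_{t-1})$, which varies with the history, and the argument would break.

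Next I would prove the identifiability directly, which also re-derives the sufficiency direction of \Cref{lem:yakowitz} for PDFs under \Cref{a:well_def_dens}. Suppose two such mixtures, with index sets $\mathcal{A}_K$ and $\mathcal{A}'_{K'}$ and weights $\lambda,\lambda'$, agree for almost every $(\bm{x}_t,\bm{x}_{t-1})$. Take the finite index set $\mathcal{B}=\mathcal{A}_K\cup\mathcal{A}'_{K'}\subset\mathcal{A}$ and subtract the two mixtures. This gives
\begin{align}
\sum_{c\in\mathcal{B}}\big(\lambda_c\mathbb{1}[c\in\mathcal{A}_K]-\lambda'_c\mathbb{1}[c\in\mathcal{A}'_{K'}]\big)\,p_{\bm{\theta}}(\bm{x}_t\mid\bm{x}_{t-1},c)=0\quad\textit{a.e.}\nonumber
\end{align}
Linear independence of $\mathcal{P}_\mathcal{A}$ under finite mixtures forces every coefficient to vanish. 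Because all weights are strictly positive, an index in $\mathcal{A}_K\setminus\mathcal{A}'_{K'}$ would leave a nonzero coefficient. Hence $\mathcal{A}_K=\mathcal{A}'_{K'}$, so $K=K'$, and $\lambda_c=\lambda'_c$ for every $c$.

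The last step is to translate this into \Cref{def:identifiable_mixture_family}. Let $\phi$ be the relabelling that maps each component index of the first mixture to the component of the second carrying the same function of $\mathcal{P}_\mathcal{A}$; when both mixtures are written with indices from the common family $\mathcal{A}$, this is simply the identity. Linear independence also rules out the same function appearing under two different indices, since two equal functions would be linearly dependent. The matching of components and weights is therefore unambiguous, and the components agree almost everywhere.

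I expect the main obstacle to be measure-theoretic bookkeeping rather than any new idea. The mixture equality is assumed almost everywhere in the joint variable $(\bm{x}_t,\bm{x}_{t-1})$, and linear independence must be read on the same joint space rather than pointwise in $\bm{x}_{t-1}$. I would state the lemma in that joint form and note that Fubini recovers the conditional statement for almost every $\bm{x}_{t-1}$.
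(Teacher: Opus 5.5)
Your proposal is correct and follows the paper's route. The paper's proof is a one-line appeal to Yakowitz's Thm.~1 (\Cref{lem:yakowitz}), using that with independent regimes the weight $p_{\bm{\theta}}(r_t)$ does not depend on $\bm{r}_{0:t-1}$; you make the same reduction and additionally write out the sufficiency direction (subtracting the two mixtures over $\mathcal{A}_K\cup\mathcal{A}'_{K'}$ and using strictly positive weights) and the reading of the almost-everywhere statements on the joint space of $(\bm{x}_t,\bm{x}_{t-1})$.
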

\textit{Proof:} The result follows from \citet[Thm. 1]{yakowitz1968identifiability}, using that $p_{\bm{\theta}}(r_t)$ does not depend on $\bm{r}_{0:t-1}$. \hfill $\blacksquare$

We call this a \emph{local} form of identifiable finite mixtures, since it considers finite mixture distributions at each transition time step. We proceed to show that under the assumption that regimes are independent, local identifiability implies a stronger result on identifiable \emph{trajectories} over the full time-sequence. That is, we can identify the full regime trajectory $\bm{R}_{0:T}$ from realizations of $\bm{X}_{0:T}$, since the trajectory cannot introduce linear dependence when regimes are independent. 

The proposition below is closely related to the less restrictive setting in this work and \citet{balsells2023identifiability,balsells2026identifiability} with dependencies between regimes, and results on Gaussian MSMs where regimes may depend on both past regimes and observed variables \citep[Thm. A.6]{balsells2025causal}. However, the proposition below may be of independent interest, since it considers a more restrictive setting and correspondingly weakened conditions. 

\begin{proposition}[Identifiable Regime Trajectories]\label[proposition]{lem:identifiable_trajectories}
    Consider the families $\mathcal{P}_\mathcal{A}^0$, $\mathcal{P}_\mathcal{A}$ of initial and transition distributions (\Cref{eq:msm_factorization}) corresponding to a regime-switching SCM (\Cref{eq:r_scm}). Assume that either the initial or the transition distribution family contains linearly independent functions, and the regimes $R_t$ are independent. Then the MSMs in $\mathcal{M}^T_\mathcal{A}$ (\Cref{eq:msm}), induced by the product family $\mathcal{P}^0_\mathcal{A}\otimes\mathcal{P}_{\mathcal{A}}^{\otimes T}$ (\Cref{eq:joint_family}), are identifiable up to permutation (\Cref{def:identifiable_msms}).
\end{proposition}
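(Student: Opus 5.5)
The plan is to use independence of the regimes to break the joint mixture into a product of \emph{local} finite mixtures, one per time step. Each local mixture can then be identified separately using the adapted \citet[Thm. 4.3]{rahmani2025flow} lemma above, that is, \Cref{lem:yakowitz} applied pointwise. Concretely, if $p_{\bm{\theta}}(\bm{r}_{0:T})=\prod_{t=0}^T p_{\bm{\theta}}(r_t)$, then substituting into \Cref{eq:fin_mix,eq:msm_factorization} and exchanging sum and product gives
\begin{align*}
p_{\bm{\theta}}(\bm{x}_{0:T})=\Big(\sum_{r_0\in\mathcal{A}_K}p_{\bm{\theta}}(r_0)p_{\bm{\theta}}(\bm{x}_0\mid r_0)\Big)\prod_{t=1}^T\Big(\sum_{r_t\in\mathcal{A}_K}p_{\bm{\theta}}(r_t)p_{\bm{\theta}}(\bm{x}_t\mid\bm{x}_{t-1},r_t)\Big).
\end{align*}

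First, I show that equality $p_{\bm{\theta}}=p'_{\bm{\theta}'}$ a.e. forces equality of every factor. I will argue inductively by marginalisation. Integrating out $\bm{x}_T,\dots,\bm{x}_1$ in turn is legitimate, since each transition mixture is a normalised density in $\bm{x}_t$. This yields equality of the initial mixtures $p(\bm{x}_0)$ a.e. It then yields equality of each conditional $p(\bm{x}_t\mid\bm{x}_{0:t-1})$, which by the display equals the local transition mixture and depends only on $\bm{x}_{t-1}$. This holds a.e. wherever the history density is positive, which by the common support in the SCM is almost everywhere.

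Second, I apply \Cref{lem:yakowitz}. For the initial factor it applies directly to $\mathcal{P}^0_\mathcal{A}$. For each transition factor I fix $\bm{x}_{t-1}$ outside a null set and apply it to the family $\{p_{\bm{\theta}}(\cdot\mid\bm{x}_{t-1},a)\}$; a Fubini argument moves the ``a.e.'' statement to almost every $\bm{x}_{t-1}$. This gives $K=K'$ and a permutation $\phi$ matching weights $p(r_t)=p'(\phi(r_t))$ and components, which are items 1, 3 and 4 of \Cref{def:identifiable_msms}. Item 2 is immediate, since $p(r_t\mid\bm{r}_{0:t-1})=p(r_t)$ under independence on both sides.

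Two further points must be checked. One is that the permutation obtained at a given $\bm{x}_{t-1}$ does not depend on $\bm{x}_{t-1}$, nor on $t$. The components are indexed by $a\in\mathcal{A}$ and are time-invariant under conditional causal stationarity, so the matched index set $\mathcal{A}_K\leftrightarrow\mathcal{A}'_{K'}$ is the same everywhere. A measurability and countability argument, using that $\mathcal{A}$ is countable, fixes a single $\phi$ valid a.e. The other point is aligning the initial and transition labelings, since they share $\mathcal{A}_K$; I would match them through the common index set in the same way.

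I expect the main obstacle to be the ``either/or'' hypothesis. When only one family is linearly independent, that factor pins down $K$ and $\phi$ through the argument above. For the remaining factor I would try to exploit that it must coincide as a full mixture and carries the same regime labels. I would also invoke the SCM's unique-regimes structure, which prevents distinct labels from carrying identical components, to transfer the permutation obtained from the identified factor. If that fails, I would restrict the conclusion to the identifiable factor and the regime weights it determines.
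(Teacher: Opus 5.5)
Your proposal does not prove the statement as posed. The case where only one of $\mathcal{P}^0_\mathcal{A},\mathcal{P}_\mathcal{A}$ is linearly independent is left open, and your fallback concedes this.

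This gap cannot be closed, because the claim fails once the marginals $p_{\bm\theta}(r_t)$ may vary with $t$. The paper explicitly allows this ("which may be a function of time"). Here is a counterexample.
\begin{itemize}
\item Take $D=T=1$, Gaussian noise, $\bm{f}^0_a(\epsilon)=\epsilon+a$ and $\bm{f}_a(x,\epsilon)=\tfrac12x+\epsilon$ for every $a$. The initial family is then linearly independent, while every transition density is the same.
\item Let $R_0$ be uniform on $\{1,2\}$ under both $\bm\theta$ and $\bm\theta'$. Let $R_1$ have marginal $(\tfrac12,\tfrac12)$ under $\bm\theta$ but $(\tfrac13,\tfrac23)$ under $\bm\theta'$.
\item Both joints equal $\big(\tfrac12\mathcal{N}(x_0;1,1)+\tfrac12\mathcal{N}(x_0;2,1)\big)\mathcal{N}(x_1;\tfrac12x_0,1)$.
\item Item 3 of \Cref{def:identifiable_msms} forces $\phi=\mathrm{id}$, and then item 2 fails ($\tfrac12\neq\tfrac13$).
\end{itemize}
The mirror example, with identical initial densities, distinct transitions and different $R_0$ marginals, breaks item 1 in the same way.

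Appealing to unique regimes, as you suggest, does not rescue this. A linearly dependent family need not contain repeated members; for instance, one kernel could be the average of two others, and the same construction goes through.

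The paper's proof does not resolve this either. It tests only product-form coefficients $\prod_t\lambda_t(r_t)$ to conclude linear independence of $\mathcal{P}^0_\mathcal{A}\otimes\mathcal{P}_\mathcal{A}^{\otimes T}$, then invokes \Cref{lem:thm_B2}. That argument has three problems:
\begin{itemize}
\item Linear independence of the product family concerns arbitrary path-indexed coefficients and does not depend on the regime prior at all.
\item Even within independent-regime MSMs, $p_{\bm\theta}-p'_{\bm\theta'}$ has coefficients $\prod_t\pi_t(r_t)-\prod_t\pi'_t(r_t)$, which are not of product form.
\item When the vanishing factor belongs to the non-independent family, nothing follows, which is exactly your obstacle.
\end{itemize}
Your factor-by-factor marginalisation uses independence only in the mixing weights, and it is the sounder route. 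After the repairs below, it proves the result in two cases. The first is when both families are linearly independent. The second is when the regimes are identically distributed: the weights read off one factor then fix all others, and the components match automatically because both models draw from the same labelled family.

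Two steps need repair even in these cases.

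First, you fix $\bm x_{t-1}$ and apply \Cref{lem:yakowitz} to $\{p_{\bm\theta}(\cdot\mid\bm x_{t-1},a)\}$. This needs linear independence in $\bm x_t$ pointwise in $\bm x_{t-1}$, which is stronger than the hypothesis, naturally read jointly in $(\bm x_{t-1},\bm x_t)$ as in $(\overline{b2})$. Three kernels can be jointly linearly independent yet share a repeated pair at every fixed $\bm x_{t-1}$. Since the coefficients $\pi_t(a)-\pi'_t(a)$ do not depend on $\bm x_{t-1}$, apply joint linear independence over $\mathcal{A}_K\cup\mathcal{A}'_{K'}$ directly. This gives $\phi=\mathrm{id}$ on labels at once, because both models use the same labelled family and an independent family has no repeated members. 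Your measurability-and-countability gluing then becomes unnecessary.

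Second, marginalisation identifies the local transition mixture only where the marginal density of $\bm x_{t-1}$ is positive. Common support is not a hypothesis of this proposition; it is $(d1)$ of \Cref{a:finite_basis_expansion}. You must either assume that this marginal is positive a.e. on $\mathcal{X}$, or require linear independence on the restricted domain.
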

\textit{Proof}: Recall the factorization in \Cref{eq:fin_mix,eq:msm_factorization}, where without loss of generality, we assumed temporal dependencies of at most one lag. Suppose now that the latent regime prior factorizes as $p_{\bm{\theta}}(\bm{r}_{0:T})=\prod^T_{t=0}p_{\bm{\theta}}(r_t)$. That is, the latent regime trajectory consists of independent regimes, which may be a function of time, such as in \citet[p. 5]{rahmani2025flow}.

Following independent regimes, we can factorize \Cref{eq:fin_mix,eq:msm_factorization} into independent finite mixtures at each time step, \textit{i.e.},
\begin{align}
    p_{\bm{\theta}}(\bm{x}_{0:T})&=\Big(\sum_{r_0\in\mathcal{A}_K}p_{\bm{\theta}}(r_0)p_{\bm{\theta}}(\bm{x}_0\mid r_0)\Big)\prod^T_{t=1}\Big(\sum_{r_t\in\mathcal{A}_K}p_{\bm{\theta}}(r_t)p_{\bm{\theta}}(\bm{x}_t\mid\bm{x}_{t-1}, r_t)\Big).
\end{align}
The global trajectory is now written as a product of local mixtures at each time step. 

Under this factorization, and assuming linear independence of either the transition PDFs in $\mathcal{P}_\mathcal{A}$ or the initial PDFs in $\mathcal{P}^0_\mathcal{A}$, we obtain linear independence in the product family $\mathcal{P}^0_\mathcal{A}\otimes\mathcal{P}_{\mathcal{A}}^{\otimes T}$. To observe that this is the case, we first write
\begin{align}
    p_{\bm{\theta}}(\bm{x}_{0:T})=0 \quad \textit{a.e.}\quad \Longleftrightarrow \quad \Big(\sum_{r_0\in\mathcal{A}_K}p_{\bm{\theta}}(r_0)p_{\bm{\theta}}(\bm{x}_0\mid r_0)\Big)\prod^T_{t=1}\Big(\sum_{r_t\in\mathcal{A}_K}p_{\bm{\theta}}(r_t)p_{\bm{\theta}}(\bm{x}_t\mid\bm{x}_{t-1}, r_t)\Big) = 0 \quad \textit{a.e.}.
\end{align}
This product of functions is equal to zero \textit{a.e.} if and only if at least one of its components is zero \textit{a.e.}, assuming not all components are degenerate. Then by linear independence of the functions in either the initial or transition family, we obtain $p_{\bm{\theta}}(r_t)=0$ \textit{a.e.} for at least one $t\in\{0,\dots,T\}$. Since we started by assuming $p_{\bm{\theta}}(\bm{r}_{0:T})=\prod^T_{t=0}p_{\bm{\theta}}(r_t)$, it follows that
\begin{align}
    \Big(\sum_{r_0\in\mathcal{A}_K}p_{\bm{\theta}}(r_0)p_{\bm{\theta}}(\bm{x}_0\mid r_0)\Big)\prod^T_{t=1}\Big(\sum_{r_t\in\mathcal{A}_K}p_{\bm{\theta}}(r_t)p_{\bm{\theta}}(\bm{x}_t\mid\bm{x}_{t-1}, r_t)\Big) = 0 \quad \textit{a.e.}\quad \implies \quad p_{\bm{\theta}}(\bm{r}_{0:T})=0 \quad \textit{a.e..}
\end{align}
Hence, the product family $\mathcal{P}^0_\mathcal{A}\otimes\mathcal{P}_{\mathcal{A}}^{\otimes T}$ contains linearly independent functions.
By \Cref{lem:thm_B2}, this is sufficient to obtain identifiability up to permutation (\Cref{def:identifiable_msms}). \hfill $\blacksquare$

We showed that linear independence of the transition distribution family implies linear independence of the product family, under the assumption of independent regimes. However, this generally does not hold when regimes may be dependent. In that case, an overlapping variable space challenges linear independence for any consecutive product of linearly independent distributions, as illustrated in \Cref{eq:overlapping_variables} in the main text. This motivates the identifiability theory on MSMs with first- and higher-order overlapping variables proposed in \citet{balsells2023identifiability,balsells2026identifiability}, summarised in \Cref{app:known_msm_results}, the extension to Gaussian MSMs with regimes that depend on both past regimes and observed variables in \citet{balsells2025causal}, and the extension to instantaneous effects and exponential families in this work. 

For example, consider temporal regime dependence via a first-order Markov process. Then the summation indices $r_t$ are coupled across time and cannot be decomposed into a product of marginal summations. Subsequently, the resulting distribution remains a mixture over entire regime trajectories, rather than a product of one-step mixtures. We obtain
\begin{align}
    p_{\bm{\theta}}(\bm{x}_{0:T})&=\sum_{\bm{r}_{0:T}\in\mathcal{A}_K^{\times (T+1)}}p_{\bm{\theta}}(r_0)p_{\bm{\theta}}(\bm{x}_0\mid r_0)\prod^T_{t=1}p_{\bm{\theta}}(r_t\mid r_{t-1})p_{\bm{\theta}}(\bm{x}_t\mid\bm{x}_{t-1}, r_t).
\end{align}
This mixture cannot be simplified into a product of independent local mixtures without further assumptions.

\subsection{Identifiable Exponential Family Markov Switching Models}\label[appendix]{app:exp_fam_msms}

We generalise identifiable Gaussian MSMs with dependencies between regimes \citep[Thm. 3.2]{balsells2023identifiability} to other continuous minimal regular exponential family distributions, by leveraging fundamental work from \citet{barndorff1965identifiability}, set out in \Cref{lem:barndorff}.

To start, by necessity of the condition in \citet{yakowitz1968identifiability}, the following lemma follows.
\begin{lemma}\label[lemma]{lem:barndorff_implies_yakowitz}
    Given \Cref{a:well_def_dens}, assume the PDFs in the family $\mathcal{P}_\mathcal{A}$ (\Cref{eq:msm_factorization}) satisfy the conditions $(\overline{a1})$-$(\overline{a3})$ in \Cref{lem:barndorff}. Then the family $\mathcal{P}_\mathcal{A}$ contains linearly independent functions under finite mixtures (\Cref{def:linear_independence}).
\end{lemma}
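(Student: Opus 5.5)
The plan is to go through the necessity direction of \Cref{lem:yakowitz}, using \Cref{lem:barndorff} to supply identifiability. By \Cref{a:well_def_dens}, each transition CDF has a density, so linear (in)dependence of the CDFs in $\mathcal{P}_\mathcal{A}$ is equivalent to that of the corresponding PDFs. Differentiating a CDF relation gives the PDF relation, and integrating a PDF relation gives the CDF relation. So it suffices to argue at the level of CDFs. Under conditions $(\overline{a1})$–$(\overline{a3})$, \Cref{lem:barndorff} gives that every mixture $\int_\Omega \overline{F}_{\bm{\eta}}\,\mathrm{d}\gamma(\bm{\eta})$ determines its mixing measure $\gamma$. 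In particular this holds for every finitely supported $\gamma$, so the finite mixtures built from $\mathcal{P}_\mathcal{A}$ are identifiable.

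The next step translates this into \Cref{def:identifiable_mixture_family}, with components indexed by $a\in\mathcal{A}$ rather than by $\bm{\eta}$. Each $a$ corresponds to a natural parameter $\bm{\eta}_a$, and distinct $a$ give distinct distributions by the uniqueness built into the family. By minimality of the exponential family, distinct distributions correspond to distinct $\bm{\eta}_a$. A finite mixture $\sum_a\lambda_a F(\cdot\mid a)$ is then the mixture with discrete mixing measure $\gamma=\sum_a\lambda_a\delta_{\bm{\eta}_a}$. Equality of two such mixtures a.e. gives $\gamma_1=\gamma_2$. Hence the two mixtures have the same atoms and the same weights, which yields $K=K'$, a permutation $\phi$ matching atoms, $\lambda_a=\lambda'_{\phi(a)}$, and $F(\cdot\mid a)=F(\cdot\mid\phi(a))$.

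Finally, I invoke the ``only if'' direction of \Cref{lem:yakowitz}: identifiability of finite mixtures implies linear independence under finite mixtures. For completeness I would also sketch the contrapositive directly. Suppose $\sum_{a\in\mathcal{A}_K}\lambda_a p(\cdot\mid a)=0$ a.e. with some $\lambda_a\neq 0$. Integrating over $\mathcal{X}$ shows $\sum_a\lambda_a=0$. Split the indices into $\{\lambda_a>0\}$ and $\{\lambda_a<0\}$, and let $c=\sum_{\lambda_a>0}\lambda_a>0$. Normalising by $c$ gives two genuine finite mixtures over disjoint index sets that are equal a.e. This contradicts the identifiability just established, because their mixing measures have disjoint supports.

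I expect the main obstacle to be the interpretive step of treating $\mathcal{P}_\mathcal{A}$ as a single exponential family. Here ``PDFs in $\mathcal{P}_\mathcal{A}$ satisfy $(\overline{a1})$–$(\overline{a3})$'' must mean a common sufficient statistic and base measure, with $a$ entering only through $\bm{\eta}_a$. This is what later \Cref{a:finite_basis_expansion} is designed to supply. The point is subtle because the transition density depends on $\bm{x}_{t-1}$, so I would apply the argument pointwise for fixed $\bm{x}_{t-1}$ (almost every). I also need the injectivity $a\mapsto\bm{\eta}_a$ to avoid repeated components, and I would take it from the indexing convention that distinct $a$ index distinct distributions.
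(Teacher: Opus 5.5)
Your proposal is correct and follows the same route as the paper. The paper's proof simply combines \Cref{lem:barndorff}, which gives identifiability of the mixtures, with the necessity direction of \Cref{lem:yakowitz}, transferred from CDFs to PDFs via \Cref{a:well_def_dens}; your reduction from $\bm{\eta}$-indexed mixing measures to $a$-indexed mixtures and your positive/negative splitting argument make explicit what the paper leaves implicit.

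One caution about your final paragraph: if you argue pointwise in $\bm{x}_{t-1}$, injectivity of $a\mapsto\bm{\eta}_a(\bm{x}_{t-1})$ at a fixed $\bm{x}_{t-1}$ does not follow from distinct $a$ indexing distinct conditional densities. It therefore has to be assumed, as the paper effectively does when it applies this lemma in $(\overline{b3})$, and points where parameters coincide have to be handled separately, as in $(\overline{b2})$.
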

\textit{Proof}: The result follows directly from \Cref{lem:yakowitz} and \Cref{lem:barndorff}, which transfer to families of PDFs under \Cref{a:well_def_dens} without loss of generality. \hfill $\blacksquare$

We proceed to show exponential family MSMs are identifiable up to permutation. We highlight that the result remains a special case of the nonparametric conditions in \Cref{lem:lem_b3}. 

\begin{proposition}[Identifiable Exponential Family Markov Switching Models]\label[proposition]{lem:exp_fam_msms}
    Given \Cref{a:well_def_dens}, consider the family of MSMs $\mathcal{M}^T_{\mathcal{A}}$. Assume the PDFs in the initial and transition distribution family $\mathcal{P}^0_\mathcal{A},\mathcal{P}_\mathcal{A}$ are from continuous minimal regular exponential families with regime- and history-dependent natural parameters $\bm{\theta}^0_a$ and $\bm{\theta}_a(\bm{x}_{t-1})$, respectively, given by
    \begin{align}
        \mathcal{P}^0_{\mathcal{A}}&=\left\{p_{\bm{\theta}}(\bm{x}_0\mid a)=h^0(\bm{x}_0)\exp\big(\bm{\theta}_a^0 \cdot \bm{\tau}^0(\bm{x}_0)-A^0(\bm{\theta}_a^0)\big)\ \big| \ a \in\mathcal{A}, \bm{x}_0\in\mathcal{X}\right\},\\
        \mathcal{P}_{\mathcal{A}}&=\left\{p_{\bm{\theta}}(\bm{x}_t\mid \bm{x}_{t-1}, a)=h(\bm{x}_t)\exp\Big(\bm{\theta}_a(\bm{x}_{t-1}) \cdot \bm{\tau}(\bm{x}_t)-A\big(\bm{\theta}_a(\bm{x}_{t-1})\big)\Big)\ \big| \ a \in\mathcal{A}, \bm{x}_t,\bm{x}_{t-1}\in\mathcal{X}, t\in\{1,\dots,T\}\right\}. \nonumber    
        \end{align}
    Assume the following conditions hold:
    \begin{enumerate}[label=$(\overline{c\arabic*})$, leftmargin=*, align=left]
        \item The sufficient statistic $\bm{\tau}^0$ and $\bm{\tau}$ are continuous \textit{a.e.};
        \item Rich image of the sufficient statistic: The set $\{\bm{\tau}(\bm{x}_t)\mid h(\bm{x}_t)>0,\bm{x}_t\in\mathcal{X}\}$ contains a (non-empty) open set, and analogously for the sufficient statistic of the initial distribution;
        \item Unique natural parameters: For any $a\neq a'\in\mathcal{A},t\geq 1$, we have 
        \begin{align}
            \bm{\theta}_a(\bm{x}_{t-1})=\bm{\theta}_{a'}(\bm{x}_{t-1}) \quad\textit{a.e.} \quad \implies \quad a=a',
        \end{align}
        and analogously for the natural parameters of the initial distribution;
        \item Real-analytic transition parameters: For all $a\in\mathcal{A},t\geq 1$, the natural parameters $\bm{\theta}_a(\bm{x}_{t-1})$ are real-analytic \textit{a.e.}.
    \end{enumerate}  
    Then the MSMs in $\mathcal{M}^T_{\mathcal{A}}$ are identifiable up to permutation (\Cref{def:identifiable_msms}). 
\end{proposition}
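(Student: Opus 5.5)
The plan is to verify the four nonparametric conditions $(\overline{b1})$--$(\overline{b4})$ of \Cref{lem:lem_b3} for the exponential family initial and transition families. That lemma then gives linear independence under finite mixtures of the product family $\mathcal{P}^0_\mathcal{A}\otimes\mathcal{P}_{\mathcal{A}}^{\otimes T}$, and \Cref{lem:thm_B2} converts this into identifiability up to permutation. The basic tool throughout is \Cref{lem:barndorff_implies_yakowitz}, which upgrades the conditions of \Cref{lem:barndorff} to linear independence.

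The verification proceeds in four steps.

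\textbf{Step 1 ($\overline{b4}$).} Continuity in $\bm{x}_{t-1}$ follows from $(\overline{c4})$. The log-partition $A$ is real-analytic on the open set $\Omega$, so the transition density is real-analytic, hence continuous, in $\bm{x}_{t-1}$ outside a null set. If needed, we restrict to a full-measure open set on which this holds.

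\textbf{Step 2 ($\overline{b1}$).} For the initial family, a finite linear combination $\sum_a\lambda_a p(\bm{x}_0\mid a)$ vanishing on a non-null set $\mathcal{Y}$ reduces to a Laplace-transform identity $\sum_a\lambda_a e^{-A^0(\bm{\theta}^0_a)}e^{\bm{\theta}^0_a\cdot \bm{s}}=0$ for $\bm{s}$ in the image $\bm{\tau}^0(\mathcal{Y})$.
\begin{itemize}
\item By $(\overline{c2})$ and continuity, we want this image to contain an open set. This is the delicate point: we need richness of $\bm{\tau}^0$ restricted to arbitrary non-null $\mathcal{Y}$, not only on all of $\mathcal{X}$.
\item The plan is to use analyticity of $\bm{\tau}^0$ (which the main theorem supplies via (a1), (b3)). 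The combination $\sum_a\lambda_a e^{\bm{\theta}_a\cdot\bm{\tau}(\bm{x})}$ is then analytic in $\bm{x}$. Vanishing on a positive-measure set forces vanishing on the whole connected domain, which reduces to the global statement of \Cref{lem:barndorff}.
\item Exponentials with distinct exponents $\bm{\theta}_a$ are linearly independent on an open set, so all $\lambda_a=0$. Distinctness of the exponents comes from $(\overline{c3})$.
\end{itemize}

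\textbf{Step 3 ($\overline{b3}$).} Fix $\bm{x}_{t-1}=\bm{\beta}$. The same exponential argument shows that a linear dependence among $\{p(\cdot\mid\bm{\beta},a)\}$ forces two natural parameters to coincide, $\bm{\theta}_a(\bm{\beta})=\bm{\theta}_{a'}(\bm{\beta})$. Since the sufficient statistic and base measure are common, the two densities are then equal for all $\bm{x}_t$.

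\textbf{Step 4 ($\overline{b2}$).} For $a\neq a'$, the difference $\bm{\theta}_a-\bm{\theta}_{a'}$ is real-analytic in $\bm{x}_{t-1}$ and, by $(\overline{c3})$, not identically zero. Its zero set is therefore Lebesgue-null, and the finite union over pairs from any finite $\mathcal{A}_K$ is null as well.
\begin{itemize}
\item Hence on any non-null $\mathcal{Y}'$ there is a non-null set of $\bm{x}_{t-1}$ at which all $\bm{\theta}_a(\bm{x}_{t-1})$ are distinct.
\item Suppose $\sum_a\lambda_a p(\bm{x}_t\mid\bm{x}_{t-1},a)=0$ on $\mathcal{Z}\times\mathcal{Y}'$. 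By Fubini, for almost every $\bm{x}_{t-1}\in\mathcal{Y}'$ the sum vanishes for a.e.\ $\bm{x}_t\in\mathcal{Z}$.
\item Pick such an $\bm{x}_{t-1}$ with distinct parameters and apply Step 3 to conclude $\lambda_a=0$.
\end{itemize}
Since the quantifier over $\mathcal{A}_K$ is finite, a single $\mathcal{Y}=\mathcal{X}$ works for all finite subsets.

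\textbf{Main obstacle.} I expect the hardest part to be the localisation to arbitrary non-null subsets $\mathcal{Z},\mathcal{Y}$ in Steps 2 and 4. \Cref{lem:barndorff} only provides global identifiability, and mere continuity in $(\overline{c1})$ does not guarantee that $\bm{\tau}(\mathcal{Z})$ has nonempty interior. I would first try the analytic-continuation route above, which needs analyticity of $\bm{\tau}$ on a connected full-measure domain. Failing that, I would use Lebesgue density points of $\mathcal{Z}$ together with the open-set richness condition, possibly strengthening $(\overline{c1})$ to real-analyticity as the main theorem effectively does through (a1).
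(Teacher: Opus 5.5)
Your plan is the paper's proof. Both verify $(\overline{b1})$--$(\overline{b4})$ of \Cref{lem:lem_b3} using Barndorff-Nielsen-type linear independence at a fixed history, together with the null zero set of the analytic difference $\bm{\theta}_a-\bm{\theta}_{a'}$. Two of your steps are more careful than the paper's: the explicit Fubini argument in $(\overline{b2})$, and the use of $(\overline{c4})$ plus analyticity of $A$ for $(\overline{b4})$. The paper invokes $(\overline{c1})$ there, which concerns $\bm{x}_t$, not $\bm{x}_{t-1}$.

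However, the obstacle you flag is a genuine gap, and the paper does not resolve it. The paper only shows linear independence on all of $\mathcal{X}$ and then asserts $(\overline{b1})$ and $(\overline{b3})$, which quantify over every non-null $\mathcal{Y}$ or $\mathcal{Z}$. Under $(\overline{c1})$ these localised statements can be false. Take $D=P=1$, $h(x)=e^{-x^2}$ and the continuous statistic $\tau(x)=\min(x,0)+\max(x-1,0)$. This family is minimal and regular with $\Omega=\mathbb{R}$, and $\tau(\mathbb{R})=\mathbb{R}$, so $(\overline{c1})$--$(\overline{c2})$ hold. Yet $\tau\equiv 0$ on $[0,1]$, so there $p(x\mid a)=h(x)e^{-A(\theta_a)}$, and any two members with distinct parameters are proportional on $\mathcal{Y}=[0,1]$. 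The same failure occurs for $(\overline{b3})$ at every $\bm{\beta}$ with distinct parameters. Hence neither your density-point fallback nor any other argument can derive $(\overline{b1})$ or $(\overline{b3})$ from $(\overline{c1})$--$(\overline{c4})$. Your analytic-continuation route proves the proposition only under a strengthened hypothesis: real-analytic $\bm{\tau}$ on a connected support.

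The missing idea is that localisation is not needed. Bypass \Cref{lem:lem_b3} and induct on $T$, using only whole-space linear independence.
\begin{itemize}
\item Suppose $\sum_{\bm{r}_{0:T}}\lambda_{\bm{r}_{0:T}}\,p(\bm{x}_{0:T}\mid\bm{r}_{0:T})=0$ a.e. Regroup it as $\sum_{a\in\mathcal{A}_K}B_a(\bm{x}_{0:T-1})\,p(\bm{x}_T\mid\bm{x}_{T-1},a)=0$, where $B_a\triangleq\sum_{\bm{r}_{0:T-1}}\lambda_{(\bm{r}_{0:T-1},a)}\,p(\bm{x}_{0:T-1}\mid\bm{r}_{0:T-1})$.
\item By Fubini, for a.e. $\bm{x}_{0:T-1}$ the $\bm{x}_T$-section vanishes a.e. on $\mathcal{X}$.
\item By your Step 4, the parameters $\bm{\theta}_a(\bm{x}_{T-1})$, $a\in\mathcal{A}_K$, are pairwise distinct for a.e. $\bm{x}_{T-1}$.
\item At such points, \Cref{lem:barndorff_implies_yakowitz} applies, needing $(\overline{c1})$--$(\overline{c2})$ only on all of $\mathcal{X}$. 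It forces $B_a(\bm{x}_{0:T-1})=0$ for every $a$, so each $B_a$ vanishes a.e. on $\mathcal{X}^{T}$.
\item The induction hypothesis then gives $\lambda\equiv 0$. The base case is the initial family with distinct $\bm{\theta}^0_a$.
\end{itemize}
\Cref{lem:thm_B2} then yields identifiability. This proves the statement exactly as given and makes $(\overline{b4})$ unnecessary.
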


\textit{Proof:} 
By \Cref{lem:thm_B2} it suffices to show that the joint component family $\mathcal{P}^0_\mathcal{A}\otimes\mathcal{P}_\mathcal{A}^{\otimes T}$ satisfies the conditions $(\overline{b1})$–$(\overline{b4})$ in Lemma~\ref{lem:lem_b3}. We verify these using $(\overline{c1})$–$(\overline{c4})$.

\begin{enumerate}
    \item[$(\overline{b1})$] By $(\overline{c3})$, the initial PDFs $\mathcal{P}^0_\mathcal{A}$ form an exponential family with distinct parameters. \citet{barndorff1965identifiability} with $(\overline{c1})$-$(\overline{c3})$ shows that distinct natural parameters imply identifiability, which by \Cref{lem:barndorff_implies_yakowitz} imply linear independence under finite mixtures for any finite subset of $\mathcal{A}$. Therefore $\mathcal{P}^{0}_{\mathcal{A}}$ contains linearly independent functions under finite mixtures on $\mathcal{X}$. 
    \item[$(\overline{b2})$] 
    For any $a\neq a'\in\mathcal{A}$, consider the set $\mathcal{X}_{a, a'} = \big\{\bm{x}_{t-1}\in\mathcal{X}\mid\bm{\theta}_a(\bm{x}_{t-1})= \bm{\theta}_{a'}(\bm{x}_{t-1})\big\}$. Again by \citet{barndorff1965identifiability} with $(\overline{c1})$-$(\overline{c3})$ and \Cref{lem:barndorff_implies_yakowitz}, the transition family $\mathcal{P}_{\mathcal{A}}$ forms an exponential family with distinct natural parameters and is thus linearly independent under finite mixtures on $\mathcal{X}$. Therefore, for any $\mathcal{A}_K\subset\mathcal{A}$ with $K< \infty$, linear dependence occurs only at points where natural parameters coincide, that is, on $\widetilde{\mathcal{X}}=\bigcup_{a\neq a' \in\mathcal{A}_K} \mathcal{X}_{a,a'}$. By $(\overline{c3})$ and $(\overline{c4})$, we can define the following non-trivial analytic function $\widetilde{\bm{\theta}}_{a,a'}(\bm{x}_{t-1})\triangleq\bm{\theta}_a(\bm{x}_{t-1})-\bm{\theta}_{a'}(\bm{x}_{t-1})$. From \citet[Prop. 0]{mityagin2015zero}, the zero set of a non-trivial analytic function has Lebesgue measure zero, which implies zero measure of $\mathcal{X}_{a,a'}$ for any $a\neq a'\in\mathcal{A}$. 
Therefore, for any finite $\mathcal A_K\subset\mathcal A$, linear dependence can only occur on the set $\widetilde{\mathcal X}$ which has Lebesgue measure zero. Therefore, there cannot exist non-zero-measure sets $\mathcal Y'\subset\mathcal X$ and $\mathcal Z\subset\mathcal X$ on which linear dependence occurs, and hence condition $(\overline{b2})$ holds.

    \item[$(\overline{b3})$] Again, \citet{barndorff1965identifiability} with $(\overline{c1})$-$(\overline{c3})$ and \Cref{lem:barndorff_implies_yakowitz} show distinct natural parameters in exponential families imply linear independence. Therefore, for any $\bm{\beta}\in\mathcal{X}$, and any subset $\mathcal{A}_K\subset \mathcal{A}$ with $K<\infty$, the family $\{p_{\bm{\theta}}(\bm{x}_{t}\mid \bm{x}_{t-1}=\bm{\beta},a)\mid a\in \mathcal{A}_K\}$ contains linearly dependent functions only if there exist $ a\neq a'\in \mathcal{A}_K$ such that $\bm{\theta}_a(\bm{\beta})=\bm{\theta}_{a'}(\bm{\beta})$. Equality of natural parameters implies repeating functions.
    \item[$(\overline{b4})$] Continuity in $\bm{x}_{t-1}$ follows from $(\overline{c1})$ and continuity of the exponential family.
\end{enumerate}

Therefore, by \Cref{lem:lem_b3,lem:thm_B2}, the MSMs family $\mathcal{M}^T_\mathcal{A}$ under (b1-b6) is identifiable up to permutation (\Cref{def:identifiable_msms}). \hfill $\blacksquare$

\subsection{Identifiable Regime-Switching Structural Causal Models}\label[appendix]{app:proof_identifiable_msm}

We turn to more fine-grained conditions on regime-switching SCMs instead of distribution families. We restate \Cref{prop:identifiable_msm} from the main text and include a proof. 

\textbf{Theorem 3.5} (Identifiable Regime-Switching SCMs up to Permutation)\textbf{.} \textit{Consider an acyclic regime-switching SCM that satisfies conditional causal stationarity, conditional causal sufficiency, and \Cref{a:exp_fam,a:functions,a:finite_basis_expansion,a:automorphisms}. Then the induced MSM is identifiable up to permutation (\Cref{def:identifiable_msms}).}

\textit{Proof strategy:} 
\begin{itemize}[leftmargin=*, align=left]
    \item It suffices to show distinct regime-switching SCMs induce a family of MSMs $\mathcal{M}^T_\mathcal{A}$ that satisfies \Cref{lem:exp_fam_msms};
    \item  \Cref{a:exp_fam,a:functions,a:automorphisms} constitute a unique pushforward of the noise distribution up to zero-measure sets for each regime $a\in\mathcal{A}$, conditional on previous time steps, while preserving the exponential family structure and properties;
    \item A finite basis expansion (\Cref{a:finite_basis_expansion}) then guarantees a continuous minimal regular exponential family with distinct natural parameters and common sufficient statistic along the lines of \citet{barndorff1965identifiability};
    \item A real-analytic sufficient statistic and transition functions ensure real-analytic natural parameters, such that linear dependence only happens on zero-measure subsets of the overlapping variable space in the product distribution family.  
\end{itemize}

\textit{Proof:} We proceed to derive the PDFs in the transition distribution family $\mathcal{P}_\mathcal{A}$, induced by \Cref{eq:r_scm}. Fix an arbitrary $a\in\mathcal{A}$ and $\bm{x}_{t-1}\in\mathcal{X}$, $t\in\{1,\dots,T\}$. Consider exogenous noise from an exponential family that satisfies \Cref{a:exp_fam}. Acyclicity implies there exists a topological order of variables such that we can recursively substitute $f_{a,1},\dots,f_{a,D}$ to obtain regime- and history-dependent measurable mappings $\Phi_a(\bm{x}_{t-1},\cdot)$ from the noise space to the observed space at each time step. 

Under \Cref{a:functions,a:automorphisms}, pointwise diffeomorphism with respect to $\bm{\epsilon}_t$ is preserved under partial composition and substitution, such that $\Phi_a(\bm{x}_{t-1}, \cdot)$ is a diffeomorphism \textit{a.e.} for fixed $\bm{x}_{t-1}$. Acyclicity guarantees the induced mappings $\Phi_a(\bm{x}_{t-1}, \cdot)$ are well-defined. By \Cref{a:automorphisms}, either the noise distribution has a trivial automorphism class, or there exists a canonical permutation (shared across regimes) under which the Jacobian $\frac{\partial\bm{f}_a}{\partial\bm{\epsilon}_t}$ with respect to instantaneous noise can be permuted into a lower-triangular matrix with a strictly positive diagonal. The positive diagonal enforces coordinate-wise monotonicity along the canonical ordering, obtaining a \emph{unique} monotone triangular transport, up to zero-measure sets, known as the Knothe-Rosenblatt rearrangement \citep{knothe1957contributions,rosenblatt1952remarks,villani2009optimal}. Consequently, the unique mappings condition in \Cref{a:functions} implies that for $a\neq a'\in\mathcal{A}$ we obtain for almost every $\bm{x}_{t-1}\in\mathcal{X}$,
\begin{align}
    \Phi_a(\bm{x}_{t-1}, \cdot)=\Phi_{a'}(\bm{x}_{t-1}, \cdot) \quad \textit{a.e.} \quad \implies \quad a=a'.
\end{align}
That is, we established injectivity between the regime- and history-dependent measurable mappings and the regime indices. 

Continuing, using the change of variables formula and the exponential family form of \Cref{eq:exp_fam}, we obtain
\begin{align}\label{eq:exp_fam_trans}
    p_{\bm{\theta}}(\bm{x}_t\mid\bm{x}_{t-1},a)=&p_{\bm{\eta}}\big(\Phi^{-1}_a(\bm{x}_t,\bm{x}_{t-1})\big)\cdot \big|\det J_{\Phi^{-1}_a(\cdot,\bm{x}_{t-1})}(\bm{x}_t)\big|\\
    =&h\big(\Phi^{-1}_a(\bm{x}_t, \bm{x}_{t-1})\big)\cdot \big|\det J_{\Phi^{-1}_a(\cdot, \bm{x}_{t-1})}(\bm{x}_t)\big|\cdot\exp\Big(\bm{\eta}\cdot\bm{\tau}\big(\Phi^{-1}_a(\bm{x}_t, \bm{x}_{t-1})\big) - A(\bm{\eta})\Big),\nonumber
\end{align}
where $\Phi^{-1}_a(\cdot,\bm{x}_{t-1})$ denotes the inverse of $\Phi_a(\bm{x}_{t-1}, \cdot)$ with respect to $\bm{\epsilon}_t$ for fixed $\bm{x}_{t-1}$. This inverse exists because $\Phi_a(\cdot,\bm{x}_{t-1})$ is a pointwise diffeomorphism, and its Jacobian has non-zero determinant because the inverse of a pointwise diffeomorphism is continuously differentiable. The result is an exponential family distribution, still in canonical form, but with transformed base measure and sufficient statistic. 

Since the original exponential family is minimal, the components of the sufficient statistic $\bm{\tau}$ are linearly independent and finite-dimensional in $\mathbb{R}^P$. By \Cref{a:finite_basis_expansion}, the composition $\bm{\tau}\circ\Phi^{-1}_a(\cdot, \bm{x}_{t-1})$ lies in a $\widetilde{P}$-dimensional vector space ($\widetilde{P}\geq P$) spanned by a common sufficient statistic $\widetilde{\bm{\tau}}$, whose components are monomials and thus linearly independent \textit{a.e.}. That is, there exists a regime- and history-dependent coefficient matrices $\bm{C}_a(\bm{x}_{t-1})\in\mathbb{R}^{P\times \widetilde{P}}$ with full row rank $P$, such that
\begin{align}
    \bm{\tau}\circ\Phi^{-1}_a(\bm{x}_t, \bm{x}_{t-1})=\bm{C}_a(\bm{x}_{t-1})\widetilde{\bm{\tau}}(\bm{x}_t)+\mathcal{R}_O(\bm{x}_t)\quad \textit{a.e.}.
\end{align}
where $\mathcal{R}_O$ is a regime- and history-invariant remainder that can be absorbed into the base measure. This finite basis expansion ensures the original sufficient statistic remains linearly independent, while the sufficient statistic of the transformed distribution can have $\widetilde{P}-P$ additional components. Furthermore, all regime- and history-dependent effects can be absorbed into the natural parameters, yielding a function $\bm{\theta}:\mathcal{X}\times\mathcal{A}\to\Theta$ representing the coefficients of the transformed basis. 

We can similarly factorise the base measure into a common base measure and a regime- and history-dependent offset that can be absorbed into the log-partition, such that
\begin{align}
    p_{\bm{\theta}}(\bm{x}_t\mid\bm{x}_{t-1},a)&= \underbrace{\widetilde{h}(\bm{x}_t)\exp\big(\mathcal{R}_O(\bm{x}_t)\big)}_{\overline{h}(\bm{x}_t)}\exp\Big(\underbrace{\big(\bm{C}^T_a(\bm{x}_{t-1})\bm{\eta}\big)}_{\bm{\theta}_a(\bm{x}_{t-1})}\cdot\widetilde{\bm{\tau}}(\bm{x}_t) - \underbrace{\log b_a(\bm{x}_{t-1})-\widetilde{A}\big(\bm{C}^T_a(\bm{x}_{t-1})\bm{\eta}\big)}_{\overline{A}_a\big(\bm{\theta}_a(\bm{x}_{t-1}),\bm{x}_{t-1}\big)}\Big), \label{eq:exp_fam_basis_exp}
\end{align}
where $\widetilde{A}\big(\bm{C}^T_a(\bm{x}_{t-1})\bm{\eta}\big)\triangleq \log\Big(\int_\mathcal{X}\widetilde{h}(\bm{x}_t)\exp\big(\mathcal{R}_O(\bm{x}_t)\big)\exp\Big(\big(\bm{C}^T_a(\bm{x}_{t-1})\bm{\eta}\big)\cdot\widetilde{\bm{\tau}}(\bm{x}_t)\Big)\text{d}\bm{x}_t\Big)$.

Thus, the transition distribution remains a minimal regular exponential family, albeit not necessarily the same family as the noise distribution, and the dimension of the sufficient statistic might increase. A standard example where the distribution class and sufficient statistic dimension is preserved is a Gaussian noise distribution with affine transformation. This results in a transition distribution that is again Gaussian, since the Gaussian family is closed under affine transformation.

An identical argument for the initial distribution family $\mathcal{P}^0_\mathcal{A}$, without conditioning on $\bm{x}_{t-1}$, gives
\begin{align}
    p_{\bm{\theta}}(\bm{x}_0\mid a)&= \overline{h^0}(\bm{x}_0)\exp\Big(\bm{\theta}^0_a\cdot \widetilde{\bm{\tau}}^0(\bm{x}_0) - \overline{A^0}_a(\bm{\theta}^0_a\big)\Big),
\end{align}
with regime-dependent natural parameters $\bm{\theta}^0_a$.

It remains to show the conditions in \Cref{lem:exp_fam_msms} are satisfied for the family of finite mixture distributions $\mathcal{M}^T_\mathcal{A}$ induced by the component family $\mathcal{P}^0_\mathcal{A}\otimes\mathcal{P}_{\mathcal{A}}^{\otimes T}$. We consider the conditions one by one. We show the conditions hold for the PDFs in the transition distribution family, and note that an identical argument holds for the initial distribution family. 

\begin{enumerate}[label=$(\overline{c\arabic*})$, leftmargin=*, align=left]
    \item By \Cref{a:exp_fam}, the sufficient statistic $\bm{\tau}$ is real-analytic and thus continuous. By \Cref{a:functions}, $\Phi_a(\bm{x}_{t-1},\cdot)$ is a pointwise diffeomorphism, such that the composition $\bm{\tau}\circ\Phi^{-1}_a(\cdot,\bm{x}_{t-1})$ is a continuous function of $\bm{x}_t$. This property is preserved in the finite basis expansion following \Cref{a:finite_basis_expansion}, since the expansion corresponds to a full row rank linear reparametrisation of a continuous map, which preserves continuity.  
    \item By \Cref{a:exp_fam}, the set $\{\bm{\tau}(\bm{\epsilon}_t)\mid h(\bm{\epsilon}_t)>0,\bm{\epsilon}_t\in\mathcal{X}_\epsilon\}$ contains an open set. By \Cref{a:functions}, $\Phi_a(\bm{x}_{t-1},\cdot)$ is a pointwise diffeomorphism mapping open sets to open sets, such that the image of the composition $\bm{\tau}\circ\Phi^{-1}_a(\cdot,\bm{x}_{t-1})$ contains an open set. This property is preserved in the finite basis expansion following \Cref{a:finite_basis_expansion}, since it corresponds to a full row rank surjective linear map from $\mathbb{R}^{\widetilde{P}}$ onto $\mathbb{R}^P$, and by the Open Mapping Theorem \citep{rudin1991functional} it maps open sets to open sets. 
    \item We showed that under \Cref{a:exp_fam,a:functions,a:automorphisms}, the mappings $\Phi_a(\bm{x}_{t-1}, \cdot)$ are unique up to null sets. Furthermore, for minimal regular exponential families, PDFs are uniquely defined by their natural parameters \citep{brown1986fundamentals}, pointwise for $\bm{x}_{t-1}$. Following the finite basis expansion under \Cref{a:finite_basis_expansion}, all regime- and history-dependent effects that lie in the span of the common sufficient statistic $\widetilde{\bm{\tau}}$ are absorbed into the natural parameters. By the injectivity condition in \Cref{a:finite_basis_expansion}, equal natural parameters $\bm{\theta}_a(\bm{x}_{t-1})$ implies equal mappings $\Phi_a(\bm{x}_{t-1}, \cdot)$. Hence, for any $a\neq a'\in\mathcal{A}$, $t\in\{1,\dots,T\}$, 
    \begin{align}
        \bm{\theta}_a(\bm{x}_{t-1})=\bm{\theta}_{a'}(\bm{x}_{t-1}) \quad\textit{a.e.} \quad \implies \quad \Phi_a(\bm{x}_{t-1}, \cdot)=\Phi_{a'}(\bm{x}_{t-1}, \cdot) \quad \textit{a.e.} \quad \implies \quad a=a'.
    \end{align}
    \item The property that $\bm{f}_a$ is jointly real-analytic in $(\bm{x}_{t-1},\bm{\epsilon}_t)$ \textit{a.e.} is preserved for $\Phi_a$ under acyclicity, partial composition and substitution. Define the mapping $(\bm{x}_{t-1},\bm{\epsilon}_t)\mapsto\big(\bm{x}_{t-1},\Phi_a(\bm{x}_{t-1},\bm{\epsilon}_t)\big)$. This mapping is jointly real-analytic \textit{a.e.} in $(\bm{x}_{t-1},\bm{\epsilon}_t)$ since this holds for $\Phi_a$. Furthermore, its Jacobian can be permuted to a block-triangular matrix with the identity in the block pertaining to $\bm{x}_{t-1}$. Pointwise for each $\bm{x}_{t-1}\in\mathcal{X}$, the Jacobian is non-zero. Then by the Real-Analytic Inverse Function Theorem \citep{krantz2002primer}, the inverse $(\bm{x}_{t-1},\bm{x}_t)\mapsto\big(\bm{x}_{t-1},\Phi_a^{-1}(\bm{x}_{t-1},\bm{x}_t)\big)$ is jointly real-analytic on each connected component where the Jacobian is non-vanishing. By \Cref{a:exp_fam}, the sufficient statistic $\bm{\tau}$ is real-analytic and thus the composition $\bm{\tau}\circ\Phi^{-1}_a$ of real-analytic functions is jointly real-analytic in $(\bm{x}_{t-1},\bm{x}_t)$. By \Cref{a:finite_basis_expansion} and following a finite basis expansion of the real-analytic space spanned by $\bm{\tau}\circ\Phi^{-1}_a$, the coefficients of the transformed basis are an analytic function of $\bm{x}_{t-1}\in\mathcal{X}$.
\end{enumerate}
Hence, we can apply \Cref{lem:exp_fam_msms} to conclude that the finite mixture distributions in the family $\mathcal{M}^T_\mathcal{A}$ are identifiable up to permutation (\Cref{def:identifiable_msms}). \hfill$\blacksquare$

\subsection{Generalising to an Arbitrary Number of Lags}\label[appendix]{app:high_order_markov}

Extending identifiability theory to an arbitrary number of lags $L$ is possible under real-analytic assumptions on the transition distributions, as shown in \citet{balsells2026identifiability}. To illustrate, we first re-define the initial and transition distribution families to accommodate $L$-order temporal dependencies:
\begin{equation}
\overline{\mathcal{P}}^{0,L}_\mathcal{A}:= \Big\{p_{\bm{\theta}}(\bm{x}_{0:L-1}\mid a)\mid a\in\mathcal{A}\Big\}, \quad  \overline{\mathcal{P}}^{L}_{\mathcal{A}} = \Big\{p_{\bm{\theta}}(\bm{x}_t\mid \bm{x}_{t-L:t-1}, a)\mid a\in\mathcal{A}\Big\},
\end{equation}
where $\bm{x}_{0},\dots,\bm{x}_{L-1}$ denote $L$ initial variables. The main challenge in the multi-lag setting is to ensure linear independence is preserved on the joint distribution family $\overline{\mathcal{P}}^{0,L}_\mathcal{A}\otimes\overline{\mathcal{P}}^{L\otimes T}_{\mathcal{A}}$ as $T$ grows. When $L>1$, consecutive distributions share multiple overlapping variables. For example, let $L=2$ and consider the joint family at time $T=3$, \textit{i.e.},
\begin{equation}
\overline{\mathcal{P}}^{0,L}_\mathcal{A}\otimes\overline{\mathcal{P}}^{L}_{\mathcal{A}}\otimes\overline{\mathcal{P}}^{L}_{\mathcal{A}} = \Big\{ p_{\bm{\theta}}(\textcolor{Blue}{\bm{x}_{0}},\textcolor{Red}{\bm{x}_1}\mid r_1)p_{\bm{\theta}}(\textcolor{Green}{\bm{x}_2}\mid \textcolor{Red}{\bm{x}_0}, \textcolor{Blue}{\bm{x}_{1}}, r_2)p_{\bm{\theta}}(\bm{x}_3\mid \textcolor{Green}{\bm{x}_1},\textcolor{Red}{\bm{x}_{2}}, r_3)\Big\},
\end{equation}
where the same variables (\textit{e.g.}, $\textcolor{Red}{\bm{x}_1}$) appear in multiple factors, both as conditioned and observed variables. This longer overlap interaction prevents a direct application of Lemma \ref{lem:lem_b3} for general $T$. However, we can re-write the above structure and enable a similar technique as in the single-lag case by grouping variables into blocks, so that each block interacts through a single overlap.
\begin{equation}
\overline{\mathcal{P}}^{0,L}_\mathcal{A}\otimes\big(\overline{\mathcal{P}}^{L}_{\mathcal{A}}\otimes\overline{\mathcal{P}}^{L}_{\mathcal{A}}\big) = \Big\{ p_{\bm{\theta}}(\textcolor{Red}{\bm{x}_{0},\bm{x}_1}\mid r_1)p_{\bm{\theta}}(\bm{x}_2, \bm{x}_3\mid \textcolor{Red}{\bm{x}_{0},\bm{x}_1}, r_2, r_3)\Big\},
\end{equation}
where we want to prove linear independence where the transition distribution is a pointwise product of $L$-lagged transition families ($\overline{\mathcal{P}}^{L}_{\mathcal{A}}\otimes\overline{\mathcal{P}}^{L}_{\mathcal{A}}$ in our example). This formulation allows us to re-use the strategy on Lemma \ref{lem:lem_b3}, provided that the assumptions the $(\overline{b2})-(\overline{b4})$ placed on $\overline{\mathcal{P}}^{L}_{\mathcal{A}}$ extend to the pointwise product $\overline{\mathcal{P}}^{L}_{\mathcal{A}}\otimes\overline{\mathcal{P}}^{L}_{\mathcal{A}}$. 

\citet{balsells2026identifiability} shows this extension holds if assumptions $(\overline{b2})$ and $(\overline{b3})$ are strengthened. First, the non-zero Lebesgue measure set $\mathcal{Y}$ defined for $(\overline{b2})$ must be a full measure set. This is necessary because linear independence in the product family is constrained by the interaction between conditioned variable, which restricts the effective domain in which linear independence can be verified. Second, $(\overline{b3})$ needs to hold not only when fixing all the conditioned variables, but also when fixing subsets of the conditioned variables. 
Under parametric assumptions based on real-analytic transition distributions, these strengthened conditions are naturally satisfied. Analyticity already ensures that intersections between distinct regimes have Lebesgue measure zero, which allows $\mathcal{Y}$ in $(\overline{b2})$ to have full measure. Furthermore, the repeating functions condition extends to subsets of conditioned variables by virtue of assuming conditioned transition distributions and real-analytic functions. Therefore, the assumptions required to establish linear independence under finite mixtures extend to arbitrary lag $L$, resulting in identifiable multi-lag MSMs in our setting.

\section{Estimation Method Details}\label[appendix]{app:implementation_details}

In the main text, we simplified causal parents to at most a single lag. Here, we describe our estimation method for an arbitrary number of lags $L$, modelled with a hyperparameter $\widetilde{L}$. This can be chosen through an elbow method using the data likelihood on a hold-out validation set, or simply as a large value exceeding the oracle value. In \Cref{app:ablations}, we show negligible loss in performance compared to using the oracle $\widetilde{L}=L$.

\subsection{\texttt{FlowMSM}}
We follow \citet{balsells2023identifiability} in a Generalised Expectation-Maximisation (GEM) approach to estimate model parameters. We deviate in parametrising the initial and transition distributions with conditional normalizing flows (CNFs) instead of Gaussian transitions with autoregressive means parametrised by real-analytic neural networks. 

\subsubsection{Generalised Expectation-Maximisation (GEM)}
The GEM algorithm \citep{dempster1977maximum} finds maximum likelihood estimates of parameters in latent variable models, in particular finite mixture models. Compared to the classic EM algorithm \citep{bishop2006pattern}, it uses gradient ascent to maximise the likelihood objective, as opposed to exact updates through the Viterbi algorithm, for example. A schematic overview of the model parameters in relation to the GEM objective is provided in \Cref{fig:gem} in the main text. 

\textbf{E-Step.} \ Given some arrangement of model parameters $\bm{\theta}'$, the GEM objective is the expected log-likelihood of observations and latent regimes, \textit{i.e.},
\begin{align}
    \mathcal{L}(\bm{\theta}\mid\bm{\theta}') & = \mathbb{E}_{p_{\bm{\theta}'}(\bm{r}_{0:T}\mid\bm{x}_{0:T})}\big[\log p_{\bm{\theta}}(\bm{x}_{0:T},\bm{r}_{0:T})\big]\\
    & = \mathbb{E}_{p_{\bm{\theta}'}(\bm{r}_{0:T}\mid\bm{x}_{0:T})}\Big[\log \Big(p_{\bm{\theta}}(\bm{r}_{0:T})p_{\bm{\theta}}(\bm{x}_{0:\widetilde{L}-1}\mid r_{0:\widetilde{L}-1})\prod^T_{t=\widetilde{L}}p_{\bm{\theta}}(\bm{x}_{t}\mid\bm{x}_{t-\widetilde{L}:t-1},r_t)\Big)\Big] \nonumber\\
    & = \mathbb{E}_{p_{\bm{\theta}'}(\bm{r}_{0:T}\mid\bm{x}_{0:T})}\big[\log p_{\bm{\theta}}(\bm{r}_{0:T})\big] + \mathbb{E}_{p_{\bm{\theta}'}(\bm{r}_{0:T}\mid\bm{x}_{0:T})}\big[\log p_{\bm{\theta}}(\bm{x}_{0:\widetilde{L}-1}\mid r_{0:\widetilde{L}-1})\big] + \nonumber\\
    & \quad \ \sum^T_{t=\widetilde{L}}\mathbb{E}_{p_{\bm{\theta}'}(\bm{r}_{0:T}\mid\bm{x}_{0:T})}\big[\log p_{\bm{\theta}}(\bm{x}_{t}\mid\bm{x}_{t-\widetilde{L}:t-1},r_t)\big]. \nonumber
\end{align}

Assuming a first-order stationary Markov chain $p_{\bm{\theta}}(\bm{r}_{0:T})=p_{\bm{\theta}}(r_0)\prod^T_{t=1}p_{\bm{\theta}}(r_t\mid r_{t-1})$ for the latent regime structure, we introduce shorthand for the posterior probability, $\gamma_{t,a}\triangleq p_{\bm{\theta}}(r_t=a\mid\bm{x}_{0:T})$, and the posterior probability of two consecutive states, $\xi_{t,a,b}\triangleq p_{\bm{\theta}}(r_t=a, r_{t-1}=b\mid\bm{x}_{0:T})$. 

Given the full set of model parameters $\bm{\theta}=(\bm{\pi},Q, \psi^0,\psi, \bm{\mathcal{E}}^0_{1:\widetilde{K}}, \bm{\mathcal{E}}_{1:\widetilde{K}})$ introduced in the main text, the objective becomes
\begin{align}
    \mathcal{L}(\bm{\theta}\mid\bm{\theta}')=& \sum^{\widetilde{K}}_{a=1}\gamma_{0,a}\log \pi_a+\sum^T_{t=1}\sum^{\widetilde{K}}_{a=1}\sum^{\widetilde{K}}_{b=1}\xi_{t,a,b}\log Q_{b,a}+\\
    &\sum^{\widetilde{K}}_{a=1}\sum^{\widetilde{L}-1}_{t=0}\gamma_{t,a}\log p_{\psi^0, \mathcal{E}^0_a}(\bm{x}_t\mid r_t=a) + \sum^T_{t=\widetilde{L}}\sum^{\widetilde{K}}_{a=1}\gamma_{t,a}\log p_{\psi, \mathcal{E}_a}(\bm{x}_t\mid \bm{x}_{t-\widetilde{L}:t-1},r_t=a).\nonumber
\end{align}

\textbf{M-Step.} \ The classic EM algorithm would update model parameters according to $\bm{\theta}^{\text{new}}\leftarrow{\arg\max}_{\bm{\theta}}\mathcal{L}(\bm{\theta}\mid\bm{\theta}')$. Instead, the GEM algorithm relaxes this to an iterative increase of the objective function, so using gradient ascent we obtain
\begin{align}
    \bm{\theta}^{\text{new}}\leftarrow\bm{\theta}'+\alpha\left.\nabla_{\bm{\theta}}\mathcal{L}(\bm{\theta}\mid\bm{\theta}')\right|_{\bm{\theta}=\bm{\theta}'},
\end{align}
evaluating the gradient $\nabla_{\bm{\theta}}\mathcal{L}(\bm{\theta}\mid\bm{\theta}')$ at $\bm{\theta}'$. Convergence is guaranteed to a local maximum \citep{dempster1977maximum}.

The equations above assume a single sample $(N=1)$, but in practice we use mini-batch stochastic gradient ascent with batch size $B$ when the number of samples $N$ is sufficiently large. Furthermore, exact update rules for $\bm{\pi}$ and $Q$ can be found in the literature for HMMs \citep{bishop2006pattern}, but this empirically requires a sufficiently large batch size to ensure consistent updates during training. Therefore, we use gradient ascent to update $\bm{\pi}$ and $Q$ together with the other parameters in $\bm{\theta}$. 

\textbf{Sliding Window Representation.} \ For single samples ($N=1$), we observe unstable training due to the relatively large number of model parameters and the inherent numerical instability that comes with the high flexibility of normalizing flows. Therefore, in the case of $N=1$, we cut the time series into equal windows of size $\widetilde{T}$ with $\widetilde{d}_R/\widetilde{T}\%$ overlap, where $\widetilde{d}_R$ models the expected regime persistence. For example, for a single time series of length $T=10,000$ with expected regime persistence of $\widetilde{d}_R=20$, we cut the time series into equal windows of length $\widetilde{T}=100$ with $20\%$ overlap. These hyperparameters balance computational stability versus losses in long-range dependencies in the regime posterior. 

\subsubsection{Conditional Normalising Flows (CNFs)}
\textbf{Normalising Flows.} \ A normalising flow \citep{tabak2010density,tabak2013family} consists of a sequence of invertible mappings that transform a simple initial density into a complex target distribution. To illustrate, consider a random variable $\bm{z}_0\sim p_0$ with known base density. Let $\bm{g}_1,\dots,\bm{g}_M$ be a sequence of diffeomorphisms such that $\bm{z}_m=\bm{g}_m(\bm{z}_{m-1})$ for $m=1,\dots,M$, with $\bm{z}_M=\bm{g}_M\circ\dots\circ\bm{g}_1(\bm{z}_0)$. Using the change of variables formula, the density of $\bm{z}_M$ becomes
\begin{align}
    p(\bm{z}_{M}) = p(\bm{z}_0)\prod^M_{m=1}\Big|\det\frac{\partial\bm{g}_m^{-1}}{\partial\bm{z}_m}\Big|,
\end{align}
where the determinant of the Jacobian is evaluated at $\bm{z}_m$.

\textbf{Conditional Normalising Flows.} \ To model the initial and transition distribution, we define $\bm{z}_0\sim p_0$ as exogenous noise of our choice and define $\bm{z}_M$ as the observations $\bm{x}_t$. The transformations $\bm{g}_1,\dots,\bm{g}_M$ for the transition distribution are parametrised by $\psi$, and the transformation $\bm{g}^0_1,\dots,\bm{g}^0_M$ for the initial distribution by $\psi^0$. We generalise to \emph{conditional} normalising flows by conditioning the transformations on embeddings $\bm{\mathcal{E}}^0_{1:\widetilde{K}}$ and $\bm{\mathcal{E}}_{1:\widetilde{K}}$ for regimes $r_t\in\{1,\dots,\widetilde{K}\}$, such that we can share parameters across regimes. For the transition distribution, we additionally condition on lagged variables $\bm{x}_{t-\widetilde{L}:t-1}$ to obtain the density
\begin{align}
    p_{\psi,\bm{\mathcal{E}}_{1:\widetilde{K}}}(\bm{x}_{t}\mid\bm{x}_{t-\widetilde{L}:t-1},r_t) = p_0(\bm{z}_0)\prod^M_{m=1}\Big|\det\frac{\partial\bm{g}^{-1}_m(\bm{z}_{m};\bm{x}_{t-\widetilde{L}:t-1},r_t)}{\partial\bm{z}_{m}}\Big|,
\end{align}
where the base density $p_0$ is independent of the conditioning variables. 

Normalising flows fit well with our theoretical narrative of transforming exponential family noise through real-analytic diffeomorphisms. However, we stress that the invertible transformations estimated by a normalising flow do not equal the functional relationships in a regime-switching SCM, but are here merely used to estimate complex unknown initial and transition densities. 

\textbf{Flow Architecture.} \ The initial and transition distribution are modelled with a separate flow. Preliminary experimentation among popular CNF architectures suggests the \emph{Neural Spline Flow} (NSF) \citep{durkan2019neural} as a suitable design choice. We adopt the implementation provided by the \texttt{Zuko} package. The regime embedding and optional additional context are passed to a hypernetwork, which predicts spline transformation weights for a shared normalising flow. This shared flow consists of a sequence of monotonic rational-quadratic spline transformations.

\subsection{Causal Discovery}
After clustering samples using the sample splitting scheme outlined in the main text, we deploy several well-known causal discovery methods to estimate window graphs $\widehat{\bm{G}}_{1:\widetilde{K}}$ from the clustered samples. In our experiments we use several established methods, such as \texttt{VARLiNGAM} \citep{hyvarinen2010estimation}, \texttt{DYNOTEARS} \citep{pamfil2020dynotears}, \texttt{PCMCI$^+$} \citep{runge2020discovering} and \texttt{Rhino} \citep{gong2022rhino}, which allow for instantaneous and lagged causal dependencies. Hyperparameter choices are described in \Cref{app:model_hparams}. We run causal discovery on each cluster embarrassingly parallel. We omit initial graph estimation for the sake of brevity, but this can be done using any non-temporal causal discovery method, such as the well-known \texttt{PC} algorithm \citep{spirtes2000causation}.

\section{Experiments Details}\label[appendix]{app:experiments}

\subsection{Synthetic Data-Generating Process}\label[appendix]{app:synthetic_dgp}

\begin{table}[t]
\centering
\setlength{\tabcolsep}{2.5pt}

\caption{\normalsize Design choices in the synthetic data-generating process.}
\small
\makebox[\textwidth][c]{%
\begin{tabular}[t]{lccccc}
\multicolumn{6}{c}{\normalsize (a) Synthetic Benchmarks} \\
\toprule
\textbf{Dataset} & \shortstack{\textbf{Structural}\\\textbf{Equations}} &
\shortstack{\textbf{Instantaneous}\\\textbf{Effects}} &
\shortstack{\textbf{Additive}\\\textbf{Noise}} &
\shortstack{\textbf{Linear}\\\textbf{SCM}} &
\shortstack{\textbf{Gaussian}\\\textbf{Noise}} \\
\midrule
SVAR (Gauss.)     & \Cref{eq:svar} & \cmark & \cmark & \cmark & \cmark   \\
SVAR (Exp. Fam.)  & \Cref{eq:svar} & \cmark & \cmark & \cmark & \xmark   \\
ANM (Exp. Fam.)  & \Cref{eq:anm} & \cmark & \cmark & \xmark & \xmark   \\
LSNM (Exp. Fam.)  & \Cref{eq:lsnm} & \cmark & \xmark & \xmark & \xmark   \\
\bottomrule
\end{tabular}
\hspace{1em} 
\begin{tabular}[t]{ll}
\multicolumn{2}{c}{\normalsize (b) Data-Generation Parameters} \\
\toprule
\textbf{Hyperparameter} & \textbf{Value} \\ 
\midrule
Nr. Samples ($N$)       & 100 \\
Nr. Time-Steps ($T$)    & 100  \\
Nr. Dimensions ($D$)    & 10    \\
Nr. Regimes ($K$)       & 3    \\
Nr. Lags ($L$)          & 1    \\
Edge Probability ($p_\text{edge}$)  & 0.5  \\
Avg. Regime Persistence ($d_R$)   & 10    \\
Min. Edge Weight ($w_\text{edge}$)        & 0.25 \\
\bottomrule
\end{tabular}
}
\label{tab:datasets_and_hparams}
\end{table} 

\Cref{tab:datasets_and_hparams} specifies the design choices in the synthetic data-generating process. 
We generate synthetic data according to three classes of structural equations. 
Contrary to the simplifying assumption in the main text, where we restrict causal parents to at most a single lag, we allow for an arbitrary number of lags $L$ in synthetic experiments. 
As a sanity check, a representative sample is visualised in \Cref{fig:synthetic_data}, demonstrating that linear and nonlinear transitions, as well as Gaussian and non-Gaussian noise distributions, could result in visually similar time series, although the underlying temporal dynamics are fundamentally different.

Each dataset contains $N$ time series of $T$ time-points across $D$ dimensions, switching between $K$ regimes with an expected regime persistence of $d_R$. 
The expected regime persistence is inversely related to the persistence probability, such that $d_R=\frac{1}{1-p_\text{persist.}}$. 
We sample the regime persistence for each regime uniformly over the interval $[0.5\cdot d_R, 1.5\cdot d_R]$. 
Subsequently, we enforce the corresponding persistence probability in $\text{diag}(Q)$, while distributing the remaining probability mass uniformly over the other regimes. 
Each regime consists of a window graph with edges drawn independently with probability $p_\text{edge}$. 
For the lagged edges, this corresponds to an Erdös-Rényi random graph. 
For the instantaneous edges, we generate a lower triangular matrix with zero diagonal to secure a DAG, while randomly permuting the topological order of the nodes in each regime. 
Adding edge weights, we obtain coefficient matrices $\bm{W}_{a,0:L}$, where at most $L$ lags are included. 
Non-zero weights are drawn uniformly from $[-2\cdot w_\text{edge}, -w_\text{edge}]\cup[w_\text{edge}, 2\cdot w_\text{edge}]$, where $w_\text{edge}$ is the minimal edge weight. 

\begin{figure}[t]
    \centering
    \includegraphics[width=\linewidth]{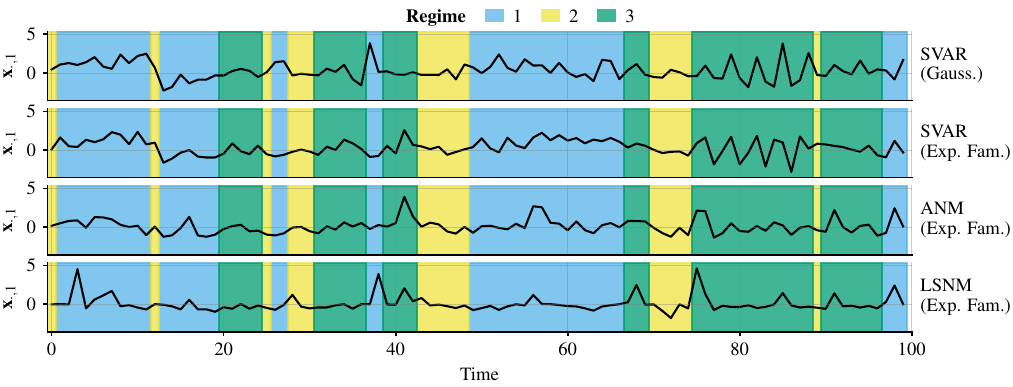}
    \caption{Representative time series corresponding to the synthetic regime-switching SCMs described in \Cref{tab:datasets_and_hparams}. Each row visualizes the same variable over time, where the only differences amount to the structural equations and exogenous noises.}
    \label{fig:synthetic_data}
\end{figure}

\textbf{Linear SVAR} \citep{sims1980macroeconomics}\textbf{.} \ 
A regime-switching version of the classic structural vector autoregressive model with simple linear equations. 
For a fixed regime $R_t=a$, we have
\begin{align}\label{eq:svar}
    (0\leq t\leq L-1\big) \quad \bm{X}_t &\leftarrow \bm{W}_{a,0}\bm{X}_t + \bm{\mu}_a + \bm{\epsilon}_t, \\
    (L\leq t\leq T) \quad \bm{X}_t &\leftarrow \sum^L_{l=0}\bm{W}_{a,l}\bm{X}_{t-l} + \bm{\epsilon}_t,\nonumber
\end{align}
for initial mean $\bm{\mu}_a$ and regime-dependent weighted adjacency matrices $\bm{W}_{a,0:L}$, where $\bm{W}_{a,0}$ is acyclic to guarantee recursive structural equations. 
We further require $\big(\bm{W}_{a,0},\bm{\mu}_a\big)\neq \big(\bm{W}_{a',0},\bm{\mu}_{a'}\big)$ and $\bm{W}_{a,0:L}\neq \bm{W}_{a',0:L}$ for $a\neq a'\in\mathcal{A}_K$ to obtain distinct regimes. 
This is assumed to hold for the other SCMs below as well.   

To ensure stability of the generated time series, we scale the weighted coefficient matrices by the spectral radius of the associated companion matrices. 
This guarantees the eigenvalues of the regime-dependent companion matrices lie within the unit circle, a necessary and sufficient condition for a stable VAR process \citep{lutkepohl2005new}. 

\textbf{Nonlinear ANM} \citep{hoyer2008nonlinear}\textbf{.} \ 
We generalise to nonlinear equations to obtain a regime-switching temporal version of a nonlinear additive noise model. 
As the name suggests, ANMs are characterised by additive independent noise, and nonlinear functional dependencies are sufficient to obtain uniquely identifiable causal structures. 
The initial and transition PDFs are generally non-Gaussian, unless instantaneous effects are affine and exogenous noise is Gaussian.  

We choose a $\tanh$ transform of instantaneous and lagged variables to introduce real-analytic nonlinearities that are contractive when the eigenvalues of the companion matrices lie within the unit circle. 
The structural equations become
\begin{align}\label{eq:anm}
    (0\leq t\leq L-1\big) \quad \bm{X}_t&\leftarrow \tanh\big(\bm{W}_{a,0}\bm{X}_t + \bm{\mu}_a\big) + \bm{\epsilon}_t, \\
    (L \leq t\leq T) \quad\bm{X}_t &\leftarrow \tanh\big(\sum^L_{l=0}\bm{W}_{a,l}\bm{X}_{t-l} \big) + \bm{\epsilon}_t.\nonumber
\end{align}

\textbf{LSNM} \citep{immer2023identifiability,strobl2023identifying}\textbf{.} \ We relax the additive noise assumption to obtain a temporal regime-switching version of a location scale noise model, where the LSNM is also known as a heteroskedastic noise model. This constitutes our most challenging experimental setting. In the equations below, both instantaneous and lagged variables affect the scale of the noise distribution, not its location, introducing complex causal dependencies. Thus, our LSNM reduces to a multiplicative noise model, such that we obtain 
\begin{align}\label{eq:lsnm}
    (0\leq t\leq L-1\big) \quad \bm{X}_t&\leftarrow \Big(\bm{1}+\delta-\tanh\big(\bm{W}_{a,0}\bm{X}_t\big)\Big)\circ\Big(\bm{1}+\delta-\tanh\big(\bm{\mu}_a\big)\Big) \circ \bm{\epsilon}_t, \\
    (L\leq t\leq T) \quad\bm{X}_t&\leftarrow \Big(\bm{1}+\delta-\tanh\big(\bm{W}_{a,0}\bm{X}_t\big)\Big) \circ\Big(\bm{1}+\delta-\tanh\big(\sum^L_{l=1}\bm{W}_{a,l}\bm{X}_{t-l} \big)\Big) \circ  \bm{\epsilon}_t, \nonumber
\end{align}
where $\circ$ denotes the Hadamard product and $\delta>0$ is a tiny constant to avoid multiplying by zero. 

\textbf{Exogenous Noise.} \ In the setting with Gaussian noise, we sample $\bm{\epsilon}_t\overset{\textit{i.i.d.}}{\sim}\mathcal{N}(\bm{0}, \bm{I})$. \emph{Only} for the linear SVAR, this means the initial and transition distribution family $\mathcal{P}^0_\mathcal{A}$ and $\mathcal{P}_\mathcal{A}$ remain Gaussian with PDFs, respectively,
\begin{align}\label{eq:equal_var_dist}
    (0\leq t\leq L-1\big) \quad &p_{\bm{\theta}}(\bm{x}_t\mid a) = \mathcal{N}\big(\bm{M}_a\bm{\mu}_a,\Sigma_a\big),\\
    (L\leq t\leq T) \quad &p_{\bm{\theta}}(\bm{x}_t\mid \bm{x}_{t-L:t-1}, a) = \mathcal{N}\Big(\bm{M}_a\sum^L_{l=1}\bm{W}_{a,l}\bm{x}_{t-l},\Sigma_a\Big), \nonumber
\end{align} 
using shorthand $\bm{M}_a\triangleq \big(\bm{I}-\bm{W}_{a,0}\big)^{-1}$ and $\Sigma_a\triangleq \bm{M}_a\bm{M}^T_a$. Acyclicity guarantees invertibility of $\bm{M}_a$. Gaussianity is further preserved in the product family, such that the resulting MSM amounts to a classic Gaussian mixture model. 

In the exponential family setting, we sample noise from a Gamma distribution such that \textit{(i)} $\epsilon_{t,d}\overset{\textit{i.i.d.}}{\sim}\text{Gamma}(0.25,2)-0.5$, centred at zero, for all $d\in\{1,\dots,D\}$. The Gamma distribution is heavily right-skewed to enforce non-Gaussianity. In \Cref{app:noise}, we experiment with other exponential family distributions, such as \textit{(ii)} Laplace$(0,1/\sqrt{2})$ noise and \textit{(iii)} the signed power nonlinear transformation of standard Gaussian independent noise, with exponent $1.5$, introduced in \citet{shimizu2006linear}. Finally, we test distributions outside the exponential family, such as \textit{(iv)} uniform $\mathcal{U}(-\sqrt{3},\sqrt{3})$ noise and \textit{(v)} Cauchy$(0,1)$ noise.  Unless specified otherwise, Gamma noise is used in the exponential family settings in synthetic experiments. For fairness of the comparison, all noise distributions are constructed to have zero mean and unit variance, except for the Cauchy distribution with undefined variance.

\subsection{Model Hyperparameters}\label[appendix]{app:model_hparams}
\Cref{tab:model_hparams} describes model hyperparameters used in synthetic and real experiments, unless specified otherwise. 
We make use of a 128-core AMD Rome CPU with 224GB memory for efficient parallelisation of model restarts, regimes and random seeds. 
Our implementation of \texttt{FlowMSM} allows for GPU computational speed-ups upon availability of such resources, although for small models parallelizing model restarts on a single CPU generally yields a favourable overall runtime.

\begin{table}[t]
\centering
\setlength{\tabcolsep}{2.5pt}

\caption{\normalsize Model hyperparameters in synthetic and real-world experiments.}
\small
\makebox[\textwidth][c]{%
\begin{tabular}[t]{lc}
\multicolumn{2}{c}{\normalsize (a) \texttt{FlowMSM}} \\
\toprule
\textbf{Hyperparameter} & \textbf{Value} \\ 
\midrule
Model Nr. Regimes ($\widetilde{K}$) & 3 \\
Model Nr. Lags ($\widetilde{L}$)      & 1 \\
Expected Regime Persistence ($\widetilde{d}_R$) & 20 \\
Sliding Window Size ($\widetilde{T}$) & 100 \\
Batch Size ($B$) & 100 \\
Nr. Training Epochs        & 500 \\
Nr. Warm-up Epochs & 20 \\
Nr. Training Restarts   & 10  \\
Early-Stopping Patience    & 10    \\
\bottomrule
\end{tabular}
\hspace{3em} 
\begin{tabular}[t]{lcc}
\multicolumn{3}{c}{\normalsize (b) \texttt{Neural Spline Flow}} \\
\toprule
\textbf{Hyperparameter} & \textbf{Initial Flow} & \textbf{Transition Flow} \\ 
\midrule
Regime Embedding Dim. & 10 & 10 \\
Nr. Transforms    & 1 & 2  \\
Nr. Bins & 4 & 8 \\
Nr. Hidden Layers      & 1 & 1    \\
Hidden Layer Dim.   & 8 & 32    \\
Activation Function      & \texttt{GeLU} & \texttt{GeLU} \\
Random Feature Perm.   & \xmark & \cmark   \\
Residual Blocks        & \xmark & \xmark  \\
\bottomrule
\end{tabular}
}
\label{tab:model_hparams}
\end{table}

\subsubsection{\texttt{FlowMSM}}

\textbf{Model Training.} \ We rely on the infrastructure provided in the \texttt{lightning} package, using default hyperparameters whenever unspecified. We split the available data into a training and validation set using an $80\%/20\%$ split. Subsequently, we train a model on the training set for at least 10 but a maximum of 500 epochs with a maximum batch size $B=1,000$, and we use model checkpoints to select the model with the highest likelihood on the hold-out validation set. We repeat this procedure for 10 random initialisations of model parameters and pick the model with the highest validation likelihood. The transition matrix $Q$ is initialised with diagonal probabilities proportional to an initial regime persistence of 20, while the initial regime distribution is uniform. We use 20 warm-up epochs and to save compute we stop training early when the validation likelihood has not improved for 10 epochs. We further use the \texttt{AdamW} optimiser with decoupled weight decay, gradient clipping and learning rate decay for stable parameter updates.  

\textbf{Nr. Regimes and Nr. Lags.} \ Although our identifiability theory does not presume knowledge of the oracle number of regimes $K$ and the maximum number of lags $L$ included in any particular causal parent set, during experiments we set the model number of regimes $\widetilde{K}$ and the model number of lags $\widetilde{L}$ equal to the oracle value. In \Cref{app:ablations}, we examine the effect of overspecifying $\widetilde{K}$ and $\widetilde{L}$ compared to the oracle value. 

\textbf{Normalising Flows.} \ The initial and transition distribution are modelled with a separate flow. In the NSF architecture from the \texttt{zuko} package, we use 2 monotonic rational-quadratic spline transformation with 8 bins and a single hidden layer with 32 dimensions, \texttt{GeLU} activation functions and no residual blocks. Features are randomly permuted after transformation. Each regime embedding $\mathcal{E}_a$ has dimension $D\cdot \widetilde{L}$ equal to the dimension of the lagged context variables $\bm{x}_{t-\widetilde{L}:t-1}$. For the initial flows modelling the initial distribution, a more shallow flow avoids overfitting to the low number of initial time steps.

\subsubsection{Causal Discovery} 

We use the sample splitting scheme described in the main text to obtain $\widetilde{K}$ clusters, and we run one of the causal discovery methods described below in parallel on each cluster of samples. As a simple heuristic, we require at least $30$ short sample sequences to run causal discovery, otherwise we return an empty graph for that regime. 

\textbf{\texttt{DYNOTEARS}} \citep{pamfil2020dynotears}. \ We rely on the implementation in the \texttt{causalnex} package. Default hyperparameters apply, except for the regularisation factors for intra-slice and inter-slice weighted coefficients that are both set to 0.01, and the algorithm runs for a maximum of 1,000 iterations. 

\textbf{\texttt{VARLiNGAM}} \citep{hyvarinen2010estimation}. \ We rely on the implementation in the \texttt{lingam} package. However, this implementation assumes a single time series, not $N$ such sequences. Therefore, we update the available implementation, but do so without adjusting the core algorithm logic. Furthermore, we prune edges using adaptive LASSO and identify edges as non-zero weights with absolute value exceeding a small threshold of 0.05. Code for our adaptation is available online.  

\textbf{\texttt{Rhino}} \citep{gong2022rhino}. \ We rely on the implementation in the codebase of \citet{varambally2024discovering}. We split the available data into a training and validation set using an $80\%/20\%$ split. Subsequently, we train a model on the training set for a maximum of 500 epochs with a maximum batch size of 1,000. The model with the highest likelihood on the hold-out validation set is selected through model checkpoints. We use a uniform graph prior and opt for the default hyperparameters set in \citet{varambally2024discovering}. However, we adjust the early stopping patience and maximum number of steps in the inner loop of the augmented Lagrangian procedure to reduce computational costs and accommodate our resources. 

\textbf{\texttt{PCMCI$^+$}} \citep{runge2020discovering}. \ We rely on the implementation in the \texttt{tigramite} package. Conditional independence is tested using a partial correlation $t$-test with significance level 0.05. We use majority ruling in the collider phase and we mark conflicts in orientation rules to obtain an order-independent algorithm. Orientation conflicts are later interpreted as undirected edges. Furthermore, we allow all lagged links to be detected in the MCI step. This improves detection power at the cost of a slightly larger runtime. 

Finally, we highlight that the \texttt{PCMCI$^+_0$} algorithm \citep[Alg. 1-2]{runge2020discovering} would be more appropriate for our regime-switching setting, yet to the best of our knowledge, no open-source implementation is available at the time of writing. The difference between the two methods is that the MCI test for variables $X_{t,i}$ and $X_{t-l,j}$ in \texttt{PCMCI$^+_0$} does not condition on the lagged adjacency set of $X_{t-l,j}$. This is desirable in our setting, since we split samples based on the MAP estimate of $R_t$, while the parent set of $X_{t-l,j}$ is determined by $R_{t-l}$. Therefore, each cluster contains a mixed sample of variables $X_{t-l,j}$, and the MCI test may include a superset or subset of lagged adjacencies. We opt for a quick partial fix, available in the \texttt{tigramite} package, which excludes any variables preceding $\bm{X}_{t-\widetilde{L}}$ in an MCI test that includes $X_{t,i}$.

\subsection{Baselines}\label[appendix]{app:baselines}
Whenever applicable to the baselines below, the model number of regimes $\widetilde{K}$ and the model number of lags $\widetilde{L}$ equal the oracle value. Furthermore, we run a maximum of 500 epochs with a maximum batch size of $1,000$ whenever possible. 

\textbf{\texttt{iMSM} \texttt{(Neural)}} \citep{balsells2023identifiability}. \ We rely on the implementation in the codebase of \citet{balsells2023identifiability}, using default hyperparameters. The conceptual difference in regime detection compared to \texttt{FlowMSM} amounts to a parametrisation of initial and transition distributions using Gaussian densities with autoregressive means and fixed covariance matrices. Autoregressive means are parametrised through an MLP with 2 hidden layers with 32 dimensions and \texttt{SoftPlus} activation functions. This method does not estimate initial nor window graphs. 

\textbf{\texttt{iMSM} \texttt{(Poly)}} \citep{balsells2023identifiability}. \ The conceptual difference with \texttt{iMSM} \texttt{(Neural)} is that the autoregressive means in the Gaussian transition distributions are parametrised using cubic polynomials. This allows for exact parameter updates through the EM algorithm, see \citet[App. E]{balsells2023identifiability} for details. Due to instability observed during training, we decrease the number of epochs to 100 and the number of warmup epochs to 10. 

\textbf{\texttt{CASTOR}} \citep{rahmani2023castor}. \ We rely on the implementation in the codebase of \citet{rahmani2023castor}, using default hyperparameters whenever possible. We are forced to implement a minor modification to avoid an infinite training loop when the loss objective does not converge. We opt for the nonlinear version with default hyperparameters, 5 iterations, initial windows of 10 time steps and no minimum regime persistence, which we think is reasonable given $T=100$ and a ground-truth regime persistence of 10. When $T$ does not equal 100 in synthetic experiments, we adjust the initial window length accordingly. 

\textbf{\texttt{FANTOM}} \citep{rahmani2025flow}. \ We rely on the implementation in the codebase of \citet{rahmani2025flow}, with the same minor modification to avoid an infinite training loop when the loss objective does not converge. We opt for hyperparameters aligned with \texttt{CASTOR}, and default ones whenever available. We decrease the maximum number of steps in the inner loop of the augmented Lagrangian procedure to reduce computational costs and accommodate our resources. We are able to reproduce the results of the synthetic experiments in \citet{rahmani2025flow} with these hyperparameters.

\textbf{\texttt{R-PCMCI}} \citep{saggioro2020reconstructing}. \ We rely on the implementation in the \texttt{tigramite} package, with default hyperparameters and design choices. We set the maximum number of transitions to 10, which we think is reasonable given $T=100$ and a ground-truth regime persistence of 10. \texttt{CASTOR}, \texttt{FANTOM} and \texttt{R-PCMCI} all expect a single time series, necessitating independent deployment on each of $N$ time series individually. 

\textbf{\texttt{MCD}} \citep{varambally2024discovering}. \ We rely on the implementation in the codebase of \citet{varambally2024discovering}. Hyperparameters are set identical to \texttt{Rhino}, with the addition of a uniform prior over mixture membership. 

\textbf{\texttt{CD-NOD}} \citep{huang2020causal}. \ We rely on the implementation in the \texttt{causallearn} package, with default hyperparameters. However, only phase I and II of the method are implemented in this package. We estimate summary window graphs along the lines of \citet[Alg. 5]{huang2020causal}, resulting in a PDAG that is \emph{CD-NOD-equivalent} \citep[Def. 3]{huang2020causal}. For fairness, we evaluate against a summary window graph constructed as the union of regime-dependent window graphs. 

The output of this implementation of \texttt{CD-NOD} is a summary partially-directed acyclic graph (PDAG) with an additional time/context surrogate variable that indicates non-stationarity/heterogeneity across time/contexts. Variables with changing causal mechanisms over time are adjacent to the surrogate variable, and causal direction between non-context nodes can in certain cases be inferred by leveraging invariance of causal mechanisms, independently changing modules and orientation rules. For fairness of the evaluation to oracle window graphs, we omit the surrogate variable during evaluation. 

\textbf{\texttt{SpaceTime}} \citep{mameche2025spacetime}. \ We rely on the implementation in the codebase of \citet{mameche2025spacetime}, with default hyperparameters. We set the minimal regime duration equal to the oracle regime persistence. Due to the large number of computationally expensive Gaussian processes, it is for us computationally infeasible to run the algorithm on $N$ time series simultaneously. Arguably, this would also not be desirable, since the output is a regime partition that is identical across all $N$ sequences, while the oracle regime partition is not. Therefore, we deploy on each of $N$ time series individually, obtaining a regime partition and a single window graph for each. Furthermore, \texttt{SpaceTime} assumes parent sets are invariant across regimes, while the oracle window graphs are not, by design of our synthetic data-generating process. Therefore, to obtain a fair evaluation, we interpret the graphical output as a summary window graph and evaluate it against an oracle summary window graph, identical to the procedure for \texttt{CD-NOD}.

\subsection{Evaluation Metrics}
\textbf{Regimes.} \ We evaluate the estimated regimes $\{\widehat{\bm{r}}^{(n)}_{0:T}\}^N_{n=1}$ as compared to the oracle $\{\bm{r}^{(n)}_{0:T}\}^N_{n=1}$. We compute the Normalised Mutual Information (NMI) and Adjusted Rand Index (ARI) to measure similarity between two labellings. We further compute accuracy and F1 after applying a label permutation $\widehat{\phi}:\{1,\dots,\widetilde{K}\}\rightarrow\{1,\dots,K\}$ using Hungarian matching \citep{kuhn1955hungarian}. In the ablation studies where the model number of regimes $\widetilde{K}$ exceeds the true number of regimes $K$, the Hungarian algorithm leaves some labels unmatched. 

\textbf{DAGs.} \ \texttt{FlowMSM}, combined with \texttt{VARLiNGAM}, \texttt{DYNOTEARS} or \texttt{Rhino}, returns DAGs $\widehat{\bm{G}}_{1:\widetilde{K}}$. After applying the label permutation $\widehat{\phi}$, these can be directly evaluated against oracle window graphs $\bm{G}_{1:K}$. We compute accuracy, F1 and SHD of the corresponding adjacency matrices and average over $K$ regimes. SHD \citep[Alg. 4]{tsamardinos2006max} counts the number of edge insertions, deletions and flips in order to transform an estimated partially-directed acyclic graph (PDAG) into the oracle PDAG. Thus, a lower score is better. Wrongly oriented edges are counted as one mistake. 

\textbf{CPDAGs.} \ \texttt{FlowMSM}, combined with \texttt{PCMCI$^+$}, returns completed partially-directed acyclic graphs (CPDAGs) that cannot be readily evaluated against the oracle window graphs. Therefore, we transform each oracle DAG to its corresponding CPDAG and evaluate performance with respect to the corresponding graph estimate, averaging over $K$ regimes. We treat edges with conflicts in orientation rules as undirected edges. The resulting scores are incomparable to the metrics computed over DAGs, due to the presence of undirected edges. Therefore, for all methods returning DAGs, we transform the estimated graphs to CPDAGs, and evaluate against oracle CPDAGs. 

\textbf{Summary PDAG.} \ \texttt{CD-NOD} returns a summary PDAG, and for fairness of the evaluation we interpret the output of \texttt{SpaceTime} as a summary PDAG too. This is because \texttt{SpaceTime} outputs a single window graph per time series, following the assumption that parent sets are invariant across regimes. However, the parent sets of the oracle window graphs are not, by design of our synthetic data-generating process.

We compute the summary PDAG associated with the set of oracle window graphs by taking the union adjacency matrix over regimes and lags. An oriented edge indicates this orientation agrees in all regimes, while an undirected edge indicates both orientations appear in at least one regime. We note that the resulting PDAG is generally a fully connected undirected graph, since by design the causal mechanisms are rarely invariant across regimes. The resulting scores are incomparable to the metrics computed over DAGs or CPDAGs, since we now deal with summary graphs. Therefore, for all methods returning DAGs, we transform the estimated graphs to summary PDAGs, and evaluate against oracle summary PDAGs. 

\section{Additional Experiments}\label[appendix]{app:additional_experiments}

We provide a brief overview of the experiments on synthetic and real-world benchmarks provided in this section. 

In \Cref{app:oracle_regimes}, we study the impact of oracle regime assignments on the estimation of window graphs, thereby isolating the effect of regime uncertainty on causal discovery. 

In \Cref{app:noise}, we examine various non-Gaussian noise distributions.

In \Cref{app:mcd}, we evaluate \texttt{MCD} \citep{varambally2024discovering} on non-temporal synthetic mixtures of causal models. 

In \Cref{app:graph_metrics}, we assess \texttt{FlowMSM} \texttt{(PCMCI$^+$)}, \texttt{CD-NOD} \citep{huang2020causal} and \texttt{SpaceTime} \citep{mameche2025spacetime} on \emph{CPDAGs} and \emph{summary window graphs}, respectively. 

In \Cref{app:ablations}, we analyse sensitivity to data-generation and model hyperparameters, and provide computation runtimes. 

In \Cref{app:fama_french}, we explore the causal implications and limitations of the Fama-French model, and run our analysis on \emph{monthly} instead of \emph{daily} data.

\subsection{Oracle Regime Information}\label[appendix]{app:oracle_regimes}

In \Cref{tab:main_results}, we provide an overview of our evaluation metrics, of which the NMI and SHD score are visualised in the main text in the right column of \Cref{fig:scores_main}. 

Notably, several baselines report moderate to high F1 and accuracy scores, yet a small NMI and ARI. 
This is not a reporting mistake, but the result of a fundamental difference between these evaluation metrics. 
While the respective methods seem to estimate individual regimes reasonably well after applying a label permutation, the predicted clustering contains essentially no information about the true regime partition. 
This may be because the label permutation artificially inflates F1 and accuracy scores, while the agreement between the true and estimated partition is close to random chance. 

We include oracle causal discovery methods in the causal discovery step, aided by oracle regime information. 
This isolates the effect of regime uncertainty on causal discovery. 
We observe a moderate to small gap between \texttt{FlowMSM} and the oracle methods. 
Although the chosen causal discovery method is sensitive to the quality of the recovered regime partitions, in particular in the case of \texttt{VARLiNGAM}, we see these synthetic results as evidence that decoupling regime detection from regime-dependent causal discovery may be regarded as a reasonable modelling strategy. 

\subsection{Non-Gaussian Noise Distributions}\label[appendix]{app:noise}

\begin{figure*}[tb]
    \centering
    
    \includegraphics[width=\linewidth]{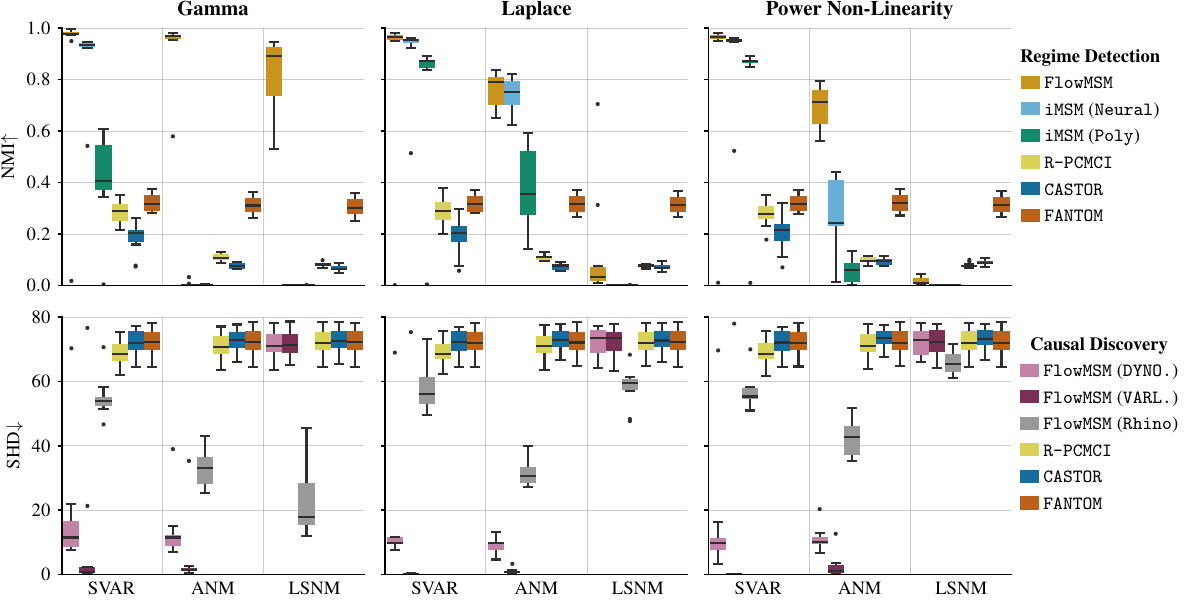}
    \caption{Evaluation of (\textit{top}) regime detection and (\textit{bottom}) causal discovery on synthetic benchmarks generated with (\textit{left}) noise $\epsilon_{t,d}\overset{i.i.d.}{\sim}\text{Gamma}(0.25,2)-0.5$, (\textit{centre}) noise $\epsilon_{t,d}\overset{i.i.d.}{\sim}\text{Laplace}(0,1/\sqrt{2})$, and (\textit{right}) the signed power nonlinear transformation of standard Gaussian noise, with exponent $1.5$, introduced in \citet{shimizu2006linear}.}
    \label{fig:noise}
\end{figure*}

\begin{figure*}
    \centering
    \includegraphics[width=\linewidth]{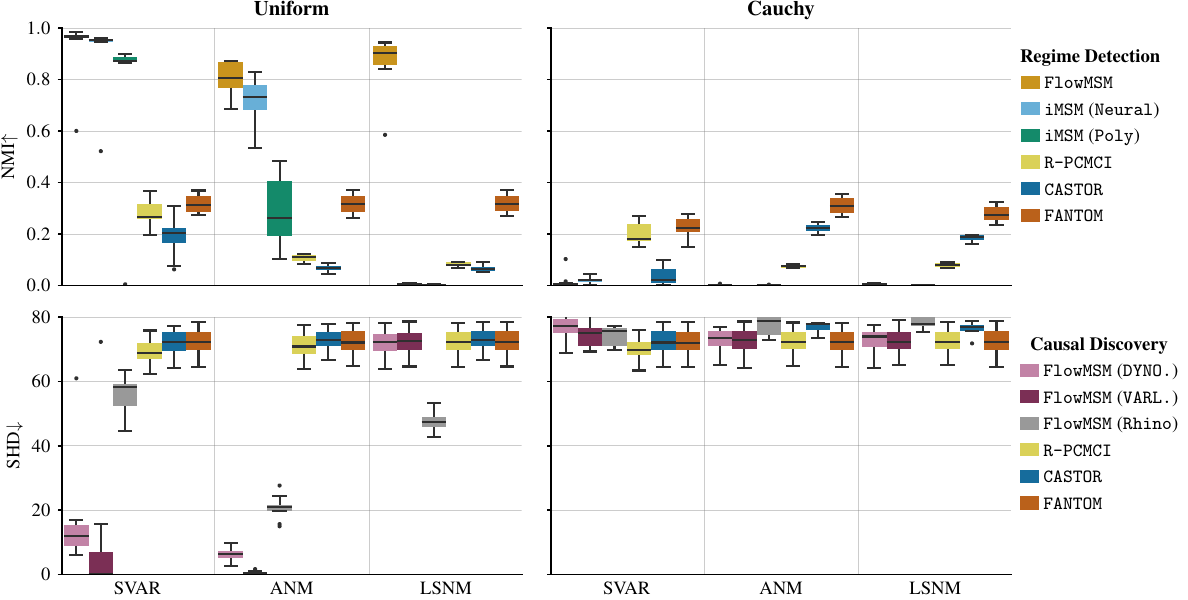}
    \caption{Evaluation of (\textit{top}) regime detection and (\textit{bottom}) causal discovery on synthetic benchmarks generated with non-exponential family noise (\textit{left}) $\epsilon_{t,d}\overset{i.i.d.}{\sim}\mathcal{U}(-\sqrt{3},\sqrt{3})$ and (\textit{right}) $\epsilon_{t,d}\overset{i.i.d.}{\sim}\text{Cauchy}(0,1)$.}
    \label{fig:noise_rebuttal}

\end{figure*}

We experiment with several non-Gaussian noise distributions within the exponential family, shown in \Cref{fig:noise}, among which the Gamma distribution displayed in the main text. For fairness, all noise distributions are constructed to have zero mean and unit variance.

Notably, the LSNM class with either the Laplace distribution or the power non-linearity seem to be challenging for \texttt{FlowMSM}. We hypothesize this is due to the non-smooth likelihood distortions with sharp ridges and heavy tails introduced by these distributions, which are preserved in the observed data likelihood under diffeomorphic transformations. Contrary, the conditional normalizing flows in \texttt{FlowMSM} rely on smooth maps with well-conditioned and numerically stable Jacobian estimates. These inductive biases may better suit Gamma independent noise. This is further substantiated by similar behaviour observed in causal discovery by \texttt{FlowMSM} \texttt{(Rhino)}, which also relies on conditional normalizing flows. 

On the other hand, Laplace independent noise seems to be better suited than Gamma distributed noise for the Gaussian \texttt{iMSM} variants on the linear SVAR and nonlinear ANM benchmarks. Possibly, the sharp cusp at the origin in the Laplace distribution is less challenging than the heavy tail of the Gamma distribution. 

We further experiment with distributions outside the exponential family in \Cref{fig:noise_rebuttal}. Although \texttt{FlowMSM} remains strong for uniform noise, performance degrades substantially for the Cauchy distribution, yet the same holds for baseline methods. A reason could be that we observe a small number of large outliers in the generated time-series due to the undefined variance of the Cauchy distribution, which likely destabilizes training for all methods. We posit such complex non-stationary time-series warrant novel theory and methods.

\subsection{Mixtures of Causal Models}\label[appendix]{app:mcd}

\begin{table}[tb]
\centering
\setlength{\tabcolsep}{1pt}
\caption{\normalsize Mixtures of causal models, \textit{i.e.}, each time series belongs to a regime and there are no regime switches. Evaluation on $N=100$ synthetic time series of length $T=100$ and dimension $D=10$ with $K=3$ regimes, averaged over 10 seeds. Our framework and the best scores are in \textbf{bold}, and second-best scores are \underline{underlined}.}
\footnotesize

\begin{tabular}{lcccccccccccccccc}
\toprule
 & \multicolumn{4}{c}{SVAR (Gauss.)} & \multicolumn{4}{c}{SVAR (Exp. Fam.)} & \multicolumn{4}{c}{ANM (Exp. Fam.)} & \multicolumn{4}{c}{LSNM (Exp. Fam.)} \\ 
\cmidrule(lr){2-5} \cmidrule(lr){6-9} \cmidrule(lr){10-13} \cmidrule(lr){14-17}
\textbf{Regime Detection} & NMI$\uparrow$  & ARI$\uparrow$  & F1$\uparrow$  & Acc.$\uparrow$  & NMI$\uparrow$  & ARI$\uparrow$  & F1$\uparrow$  & Acc.$\uparrow$  & NMI$\uparrow$  & ARI$\uparrow$  & F1$\uparrow$  & Acc.$\uparrow$  & NMI$\uparrow$  & ARI$\uparrow$  & F1$\uparrow$  & Acc.$\uparrow$  \\ 
\midrule
$\textbf{\texttt{FlowMSM}}$ & \textbf{0.99} & \textbf{1.00} & \textbf{1.00} & \textbf{1.00} & \underline{0.94} & \underline{0.96} & 0.96 & \underline{0.96} & \textbf{0.99} & \textbf{1.00} & \textbf{1.00} & \textbf{1.00} & \textbf{0.97} & \textbf{0.99} & \textbf{0.99} & \textbf{0.99} \\ 
$\texttt{iMSM}$ ($\texttt{Neural}$) & \underline{0.95} & \underline{0.98} & \underline{0.99} & \underline{0.99} & 0.75 & 0.71 & 0.83 & 0.81 & 0.02 & 0.03 & 0.44 & 0.42 & 0.00 & 0.00 & 0.40 & 0.38 \\ 
$\texttt{iMSM}$ ($\texttt{Poly}$) & 0.87 & 0.93 & 0.98 & 0.97 & 0.42 & 0.41 & 0.72 & 0.67 & 0.00 & 0.01 & 0.41 & 0.39 & 0.00 & 0.00 & 0.40 & 0.38 \\ 
$\texttt{R-PCMCI}$ & \underline{0.95} & 0.95 & \underline{0.99} & 0.98 & 0.86 & 0.86 & \underline{0.97} & 0.95 & 0.00 & 0.00 & 0.74 & 0.60 & 0.00 & 0.00 & 0.71 & 0.56 \\ 
$\texttt{CASTOR}$ & 0.56 & 0.56 & 0.97 & 0.96 & 0.54 & 0.54 & \underline{0.97} & \underline{0.96} & 0.00 & 0.00 & 0.80 & 0.69 & 0.00 & 0.00 & 0.87 & 0.78 \\ 
$\texttt{FANTOM}$ & 0.00 & 0.00 & 0.25 & 0.14 & 0.00 & 0.00 & 0.26 & 0.15 & 0.00 & 0.00 & 0.22 & 0.12 & 0.00 & 0.00 & 0.22 & 0.12 \\ 
$\texttt{MCD}$ & 0.94 & 0.92 & 0.92 & 0.94 & \textbf{0.99} & \textbf{0.99} & \textbf{0.99} & \textbf{1.00} & \underline{0.98} & \underline{0.99} & \underline{0.98} & \underline{0.98} & \underline{0.94} & \underline{0.91} & \underline{0.90} & \underline{0.93} \\ 
\midrule
\textbf{Causal Discovery} & SHD$\downarrow$  & F1$\uparrow$  & Acc.$\uparrow$  &  & SHD$\downarrow$  & F1$\uparrow$  & Acc.$\uparrow$  &  & SHD$\downarrow$  & F1$\uparrow$  & Acc.$\uparrow$  &  & SHD$\downarrow$  & F1$\uparrow$  & Acc.$\uparrow$  &  \\ 
\midrule
$\texttt{DYNOTEARS}$ & 74.60 & 0.60 & 0.59 &   & 75.53 & 0.59 & 0.58 &   & 69.00 & 0.62 & 0.62 &   & 72.83 & 0.49 & 0.64 &   \\ 
$\texttt{DYNOTEARS}^*$ & \textbf{30.47} & \textbf{0.81} & \textbf{0.83} &   & 32.90 & 0.80 & 0.81 &   & 17.93 & 0.88 & 0.88 &   & 72.30 & 0.51 & 0.64 &   \\ 
$\textbf{\texttt{FlowMSM}}$ ($\texttt{DYNO.}$) & \underline{30.60} & \textbf{0.81} & \textbf{0.83} &   & 36.53 & 0.78 & 0.79 &   & 17.63 & 0.88 & 0.89 &   & 72.80 & 0.52 & 0.63 &   \\ 
$\texttt{VARLiNGAM}$ & 78.27 & 0.57 & 0.56 &   & 75.50 & 0.59 & 0.59 &   & 65.57 & 0.65 & 0.65 &   & 72.87 & 0.49 & 0.64 &   \\ 
$\texttt{VARLiNGAM}^*$ & 54.87 & 0.67 & 0.67 &   & \textbf{5.13} & \textbf{0.96} & \textbf{0.97} &   & \textbf{6.90} & \textbf{0.96} & \textbf{0.97} &   & 72.07 & 0.50 & 0.64 &   \\ 
$\textbf{\texttt{FlowMSM}}$ ($\texttt{VARL.}$) & 54.93 & 0.67 & 0.68 &   & \underline{13.73} & \underline{0.92} & \underline{0.93} &   & \textbf{6.90} & \textbf{0.96} & \textbf{0.97} &   & 72.37 & 0.51 & 0.64 &   \\ 
$\texttt{Rhino}$ & 74.27 & 0.60 & 0.60 &   & 74.57 & 0.60 & 0.60 &   & 64.40 & 0.66 & 0.65 &   & 56.40 & 0.71 & 0.70 &   \\ 
$\texttt{Rhino}^*$ & 59.20 & 0.67 & 0.67 &   & 58.57 & 0.68 & 0.68 &   & 33.37 & 0.81 & 0.82 &   & \underline{23.83} & \underline{0.86} & \underline{0.88} &   \\ 
$\textbf{\texttt{FlowMSM}}$ ($\texttt{Rhino}$) & 59.50 & 0.66 & 0.67 &   & 58.83 & 0.68 & 0.69 &   & 33.53 & 0.81 & 0.82 &   & \textbf{21.57} & \textbf{0.88} & \textbf{0.89} &   \\ 
$\texttt{R-PCMCI}$ & 72.67 & 0.50 & 0.64 &   & 72.21 & 0.50 & 0.64 &   & 70.01 & 0.54 & 0.65 &   & 72.40 & 0.52 & 0.64 &   \\ 
$\texttt{CASTOR}$ & 72.42 & 0.52 & 0.64 &   & 72.43 & 0.52 & 0.64 &   & 72.99 & 0.53 & 0.63 &   & 73.16 & 0.51 & 0.63 &   \\ 
$\texttt{FANTOM}$ & 72.62 & 0.51 & 0.64 &   & 72.63 & 0.51 & 0.64 &   & 72.84 & 0.51 & 0.64 &   & 72.86 & 0.50 & 0.64 &   \\ 
$\texttt{MCD}$ & 62.90 & 0.63 & 0.65 &   & 58.00 & 0.68 & 0.69 &   & 33.03 & 0.81 & 0.83 &   & 30.30 & 0.82 & 0.84 &   \\ 
\bottomrule
\multicolumn{17}{l}{$^*$Methods aided with oracle regime information.}
\end{tabular}
\label{tab:mixtures}
\end{table}

As discussed in \Cref{sec:background}, our theory and method include (non-temporal) mixtures of causally stationary time series as a special case. 
We empirically validate this claim for temporal mixtures of causally stationary time series here. 
We repeat the experiments in the main text on multiple time series of length $T=100$, but with the restriction that $R_t=R_{t'}$ for any $t\neq t'$. 
This corresponds to the conceptual setup in \citet{varambally2024discovering}, although the exact synthetic data-generating process and regime-switching SCMs are different. 

Experimental results are summarised in \Cref{tab:mixtures}. 
We observe both \texttt{FlowMSM} and \texttt{MCD} recover near perfect mixture assignments across all synthetic benchmarks. 
The other baselines, with the exception of \texttt{FANTOM}, show moderate to strong performance on linear SVARs, yet deteriorate on the more complex nonlinear additive and multiplicative noise settings. 

In causal discovery, it is unsurprising that the causally stationary methods that are aided by oracle regime information demonstrate strong performance, contingent on the regime-switching SCM. 
Given the near perfect mixture assignments, \texttt{FlowMSM} is able to perform close to, or in some cases even better than, the oracle methods. 
This is not mimicked by \texttt{MCD}, except for the LSNM class, which is expected given that \texttt{Rhino} is the underlying causal discovery method. 
The difference with the oracle method may be a consequence of hyperparameter tuning.

\subsection{CPDAGs and Summary Graphs}\label[appendix]{app:graph_metrics}

\begin{figure*}[tb]
    \centering
    \includegraphics[width=\linewidth]{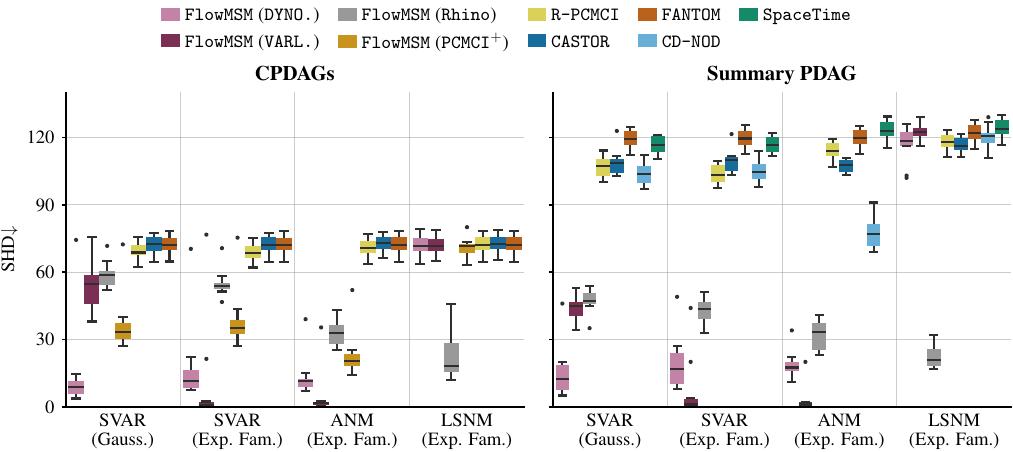}
    \caption{Evaluation of (\textit{bottom}) causal discovery on synthetic benchmarks, evaluated as (\textit{left}) CPDAGs and (\textit{right}) summary PDAGs, as opposed to the DAGs in the main text.}
    \label{fig:cpdags}
\end{figure*}

We compare against \texttt{FlowMSM} \texttt{(PCMCI$^+$)}, \texttt{CDNOD} and \texttt{SpaceTime}. Experimental results are shown in \Cref{fig:cpdags}.The results on CPDAGs do not differ substantially from the results on DAGs, shown in \Cref{fig:scores_main}. 
The reason is that, by design of the synthetic data-generating process, most edges in the CPDAG corresponding to a synthetic DAG are oriented. 
For lagged edges, this is due to the temporal arrow of time. 
Furthermore, most instantaneous edges are involved in a $v$-structure due to an edge density of close to 0.5, and can thus be oriented.

By design of the synthetic data-generating process, the oracle summary PDAG is generally a fully connected undirected graph, since the causal mechanisms are rarely invariant across regimes. 
This is correctly picked up on by \texttt{FlowMSM}, contingent on the chosen causal discovery method. 
None of the baselines seem to recover the correct graph. 
Besides the challenging setting, for \texttt{CD-NOD} we hypothesize this may be partially due to the partial correlation conditional independence test, chosen for computational feasibility given our resources. 

\clearpage

\subsection{Hyperparameter Ablation Studies}\label[appendix]{app:ablations}

We provide ablation studies to several data-generation parameters. 
We vary one parameter at a time compared to the ``default'' setting described in \Cref{tab:datasets_and_hparams}. 
For fairness, we do not engage in additional hyperparameter tuning, and we repeat the procedure for both our ``easy'' linear Gaussian SVAR model, and our ``most challenging'' LSNM with exponential family noise. \Cref{fig:ablations_regimes,fig:ablations_graphs,fig:ablations_compute} show experimental results for regime detection, causal discovery and computational running times. 

To clarify, several baseline curves exhibit missing datapoints. 
For example, \texttt{iMSM} $(\texttt{Poly})$ appears to be unstable during training, as a result of exact parameter updates in the EM-algorithm, described in \Cref{app:implementation_details}.
The implementation of \texttt{R-PCMCI} suffers from memory constraints and a large computational runtime, due to the number of data structures retained in memory and the number of conditional independence tests. 
Finally, \texttt{FANTOM} yields a large computational runtime, following the combination of a variational ELBO objective, estimation of regime-dependent conditional normalizing flows, and an augmented Lagrangian procedure for parameter optimisation. 

\begin{figure*}[tb]
    \centering
    
    \includegraphics[width=\linewidth]{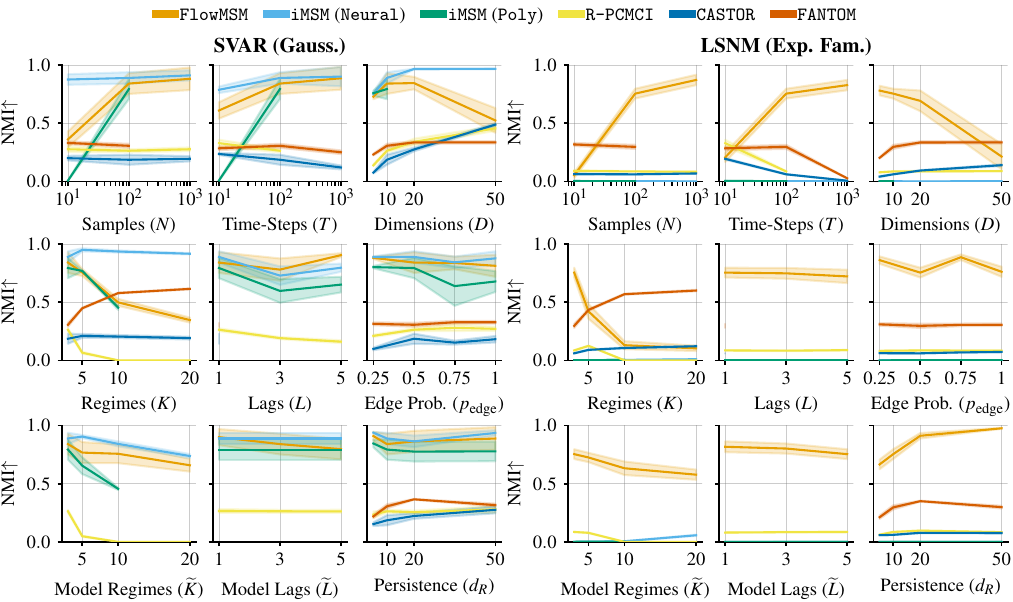}
    \caption{Sensitivity of regime detection to several data-generation parameters in \Cref{tab:datasets_and_hparams} and the model number of regimes and lags, for both (\textit{left}) the linear SVAR model with Gaussian noise and (\textit{right}) the LSNM with exponential family noise.}
    \label{fig:ablations_regimes}
\end{figure*}

\begin{figure*}[tb]
    \centering
    
    \includegraphics[width=\linewidth]{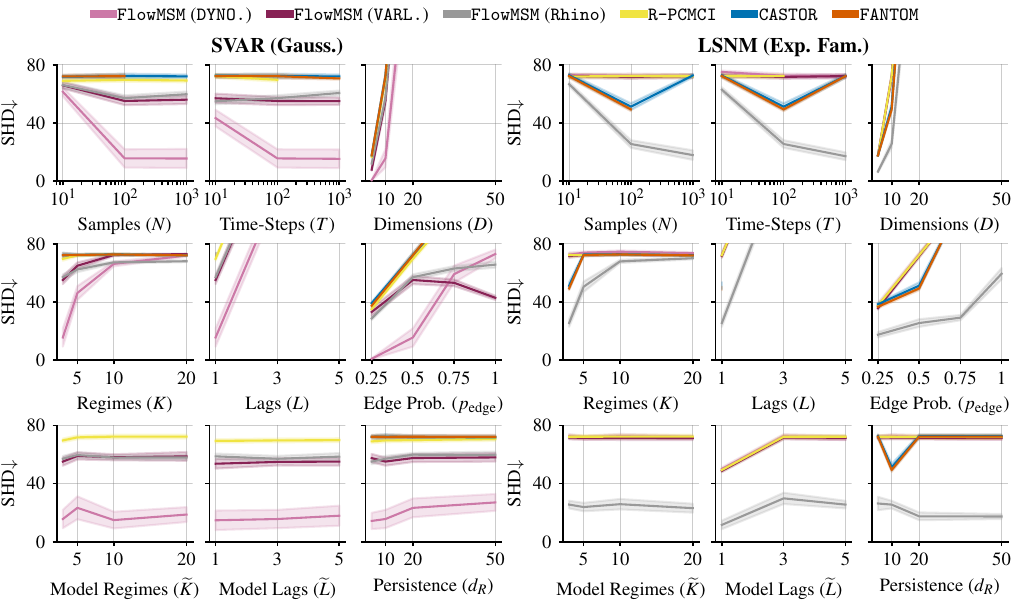}
    \caption{Sensitivity of causal discovery to several data-generation parameters in \Cref{tab:datasets_and_hparams} and the model number of regimes and lags, for both (\textit{left}) the linear SVAR model with Gaussian noise and (\textit{right}) the LSNM with exponential family noise.}
    \label{fig:ablations_graphs}
\end{figure*}

\begin{figure*}
    \centering
    \includegraphics[width=\linewidth]{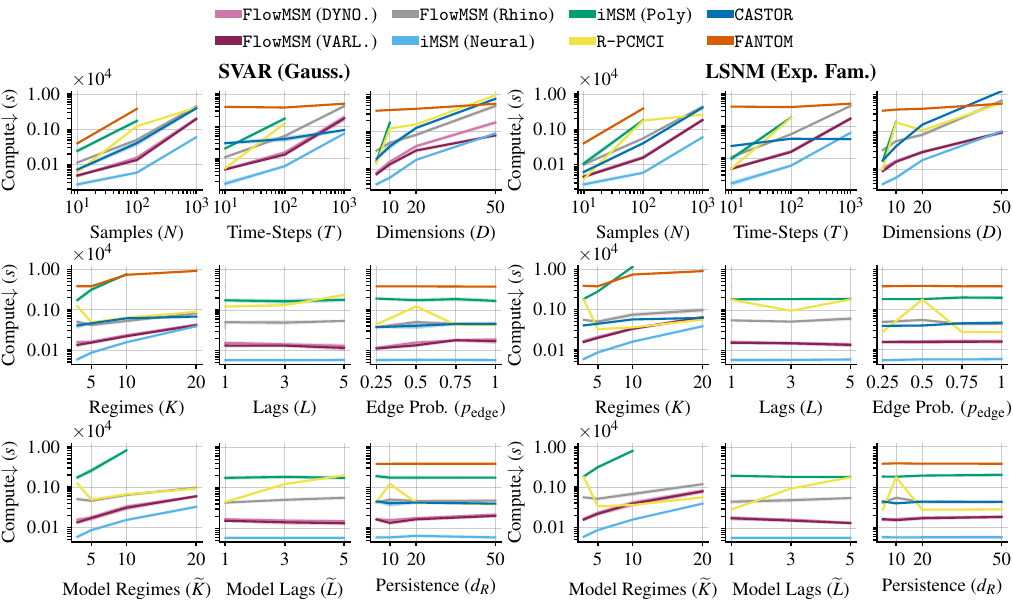}
    \caption{Sensitivity of computation runtime $(s)$ to several data-generation parameters in \Cref{tab:datasets_and_hparams} and the model number of regimes and lags, for both (\textit{left}) the linear SVAR model with Gaussian noise and (\textit{right}) the LSNM with exponential family noise.}
    \label{fig:ablations_compute}
\end{figure*}

Regarding regime detection in \Cref{fig:ablations_regimes}, we observe that \texttt{FlowMSM} benefits from an increase in sample size $(N)$ from small to medium, yet it seems to be saturated after $N=100$, and similarly for the number of time-steps $(T)$. 
Our framework does not seem to scale well beyond 20-50 dimensions with the given amount of data and without additional hyperparameter tuning. 
This does not seem to be the case for the baseline methods in the Gaussian SVAR setting.

In the ablation for the oracle number of regimes $(K)$, we make sure to adjust the model number of regimes $(\widetilde{K})$ accordingly. 
While \texttt{FlowMSM} does not seem to scale well with the number of regime $(K)$, we surprisingly find \texttt{iMSM} and \texttt{CASTOR} to be fairly robust to such changes in the Gaussian SVAR setting. 
\texttt{FANTOM} even increases in NMI, although this is for a counterintuitive reason. 
That is, we empirically observe that this method rarely deviates from its initial window assignments in the shorter time-sequences, and it is initiated with a window length equal to the oracle regime persistence. 
This causes the estimated regime partition to mostly align with the oracle partition for the first $K$ windows, up until recurrent regimes occur. 
Hence, the high NMI score should be classified as deceptive. 
We performed some small-scale experimentation to test where this phenomenon persists, and found it to be invariant to increases in the number of iterations and alterations to the initialisation window size. 
We note that we are able to reproduce the results on synthetic data in \citet{rahmani2025flow} with the chosen set of hyperparameters, indicating meaningful differences with our synthetic data-generating process. 

In the ablation for the model number of regimes $(\widetilde{K})$, we keep the oracle number of regimes $(K)$ fixed while overspecifying the degrees of freedom for potential regimes. 
We leave out \texttt{CASTOR} and \texttt{FANTOM} here, since they do not inherit this hyperparameter. 
We observe a decreasing effect of overspecifying the model number of regimes. 
The experiment is similar for the model number of lags $(\widetilde{L})$, although \texttt{CASTOR} and \texttt{FANTOM} are omitted, since the available implementation does not appear to be functional for multi-lag dependencies. 
We observe little sensitivity of any method to the oracle number of lags $(L)$, the model number of lags $(\widetilde{L})$, the sparsity of the synthetic graphs $(p_\text{edge})$ and the oracle regime persistence $(d_R)$.

Regarding causal discovery in \Cref{fig:ablations_graphs}, the observations are similar in nature. 
For visualisation purposes, we chose to not display an SHD above 80 to not distort the scale compared to the other figures. 
The SHD metric is highly sensitive to the sparsity and dimension of the causal structure, such that it is not meaningful to make comparisons in the ablations for the oracle number of lags $(L)$, the number of dimensions $(D)$ and the graph sparsity $(p_\text{edge})$. 

We observe that \texttt{FlowMSM} $(\texttt{DYNOTEARS})$ and \texttt{FlowMSM} $(\texttt{Rhino})$ greatly benefit from the increase in sample size and time-steps from 10 to 100 in the linear Gaussian SVAR and exponential family LSNM setting, respectively, yet seem to saturate beyond that point. 
The baselines do not show sensitivity to these parameters. 
When increasing the number of regimes $(K)$, we observe deteriorating performance of \texttt{FlowMSM} $(\texttt{DYNOTEARS})$ and \texttt{FlowMSM} $(\texttt{Rhino})$ in the linear Gaussian SVAR and exponential family LSNM setting, respectively. 
We hypothesize that keeping the other parameters fixed, the amount of data per regime decreases, and performance converges to the setting with low sample size $(N)$.

Regarding the running times in \Cref{fig:ablations_compute}, we observe that on the default parameters defined in \Cref{tab:datasets_and_hparams}, \texttt{FlowMSM} runs for $\pm 200s$ when combined with \texttt{DYNOTEARS} or \texttt{VARLiNGAM}, or for $\pm300s$ when combined with \texttt{Rhino}. 
This is competitive compared to baseline methods, where only \texttt{iMSM} \texttt{(Neural)} is faster, yet this baseline does not do causal discovery. 
All methods appear to follow roughly similar computational scaling laws with respect to the sample size $(N)$ and number of dimensions $(D)$, for example, although some baselines suffer from memory constraints.

\begin{figure*}[tb]
    \centering
    \includegraphics[width=\linewidth]{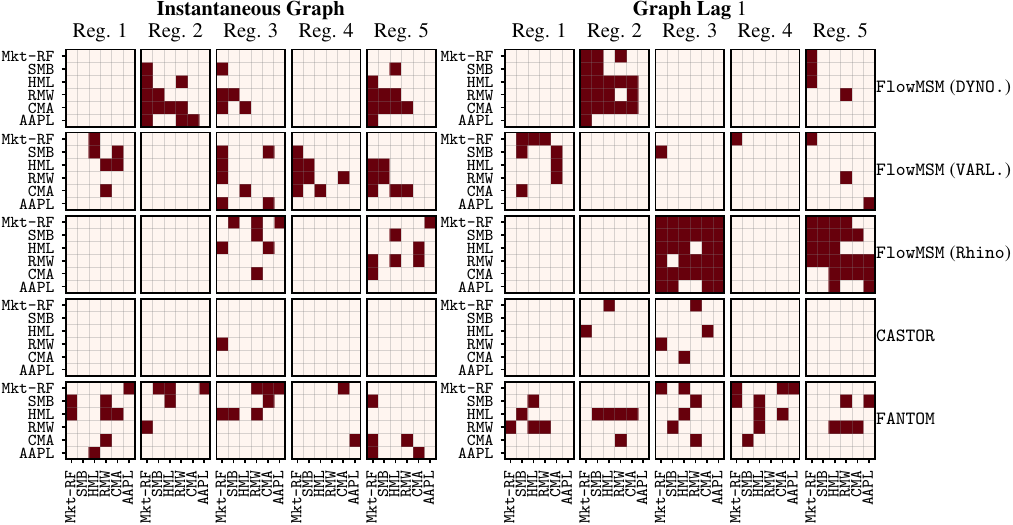}
    \caption{Adjacency matrices for the estimated window graphs corresponding to the regimes shown in \Cref{fig:fama_french_regimes}. The methods are ran on daily data of the Fama-French model and \texttt{AAPL} stock. The index $(i,j,k,l)$ indicates the (lagged) edge $X_{t-l,i}\to X_{t,j}$ in regime $k$.}
    \label{fig:fama_french_adjacencies_daily}
\end{figure*}

\clearpage

\subsection{Fama-French Five-Factor Asset-Pricing Model}\label[appendix]{app:fama_french}

In the main text, we explore regime detection in asset pricing to qualitatively evaluate the Fama-French five-factor asset-pricing model \citep{fama2015five}. Following \citet{sadeghi2025causal}, we add excess returns of Apple's stock (\texttt{AAPL}). We run \texttt{FlowMSM} and baselines using the random initialisation seed 0, without peeking at the results for fairness of the evaluation. For interpretability, we plot the \texttt{VIX} volatility index over the estimated regimes, although it is omitted in training. 

Even though the Fama-French model is posited as a \emph{statistical} rather than a \emph{causal} model, we investigate its causal implications here. That is, under the efficient market hypothesis, causal dependencies across time steps should be absent. However, we argue that perhaps periods of market distress may give rise to arbitrage opportunities. Moreover, we hypothesise that the Fama-French risk factors act as causal drivers of \texttt{AAPL} excess returns, rather than the reverse.  

The estimated window graphs in \Cref{fig:fama_french_adjacencies_daily} correspond to the regimes displayed in \Cref{fig:fama_french_regimes} in the main text. 
For \texttt{FlowMSM}, we interpret regime 4 as an indicator of stable financial markets, while regime 5 signals market distress. 
The other regimes are effectively redundant, since these only contain a tiny number time steps. 
We find only partial support for the aforementioned hypotheses on the associated window graphs. 
For example, \texttt{FlowMSM} $(\texttt{DYNOTEARS})$ and \texttt{FlowMSM} $(\texttt{Rhino})$ produce empty graphs in regime 4, and find several causal connections in regime 5. 
This is in line with the efficient market hypothesis and our additional hypothesis on how it might breach in financially turbulent times. 
In a similar spirit, \texttt{FlowMSM} $(\texttt{VARLiNGAM})$ mostly finds instantaneous connections instead of lagged dependencies in both regime 4 and 5, which roughly aligns with the efficient market hypothesis on the low predictive signal in historic data. 

However, we do not find convincing evidence that the Fama-French risk factors act as causal drivers of \texttt{AAPL} excess returns, or that the risk factors are \textit{independent} causal factors explaining excess return on \texttt{AAPL} stock. On the contrary, we find few edges pointing into \texttt{AAPL}, and many instantaneous edges between the risk factors. 
In an extreme instance, \texttt{FlowMSM} $(\texttt{Rhino})$ outputs a dense graph in regime 5 that is challenging to assess qualitatively. 
This suggests that causal interpretations of our estimated causal structures of the Fama-French model in this dataset are limited in scope.  

For completeness, we repeat the analysis on \emph{monthly} instead of \emph{daily} data, leaving 276 time steps. The estimated regimes and window graphs on monthly data are visualised in \Cref{fig:fama_french_regimes_monthly,fig:fama_french_adjacencies_monthly}. \texttt{FlowMSM} and baselines show vastly different behaviour, yet we fail to identify meaningful interpretations of the estimated regimes and regime-dependent window graphs for any of the methods. We posit the available time steps are too few to derive substantiated conclusions. 

\begin{figure*}[tb]
    \centering
    \includegraphics[width=\linewidth]{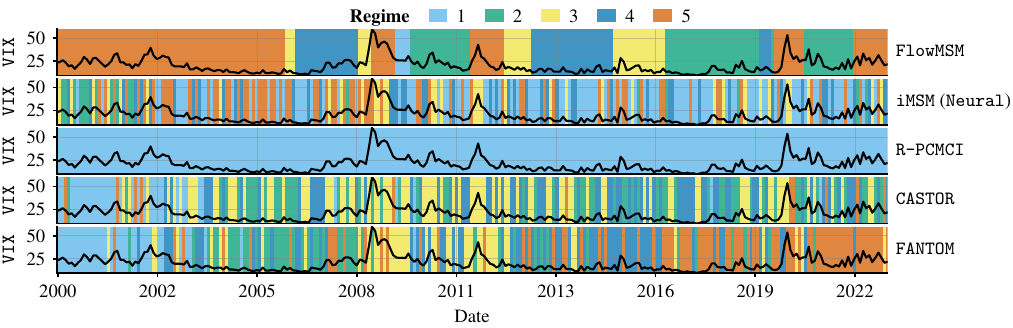}
    \caption{Estimated regimes on monthly data from the Fama-French model and \texttt{AAPL} stock, with an overlay of the \texttt{VIX} volatility index.}
    \label{fig:fama_french_regimes_monthly}

\end{figure*}

\begin{figure*}[tb]
    \centering
    \includegraphics[width=\linewidth]{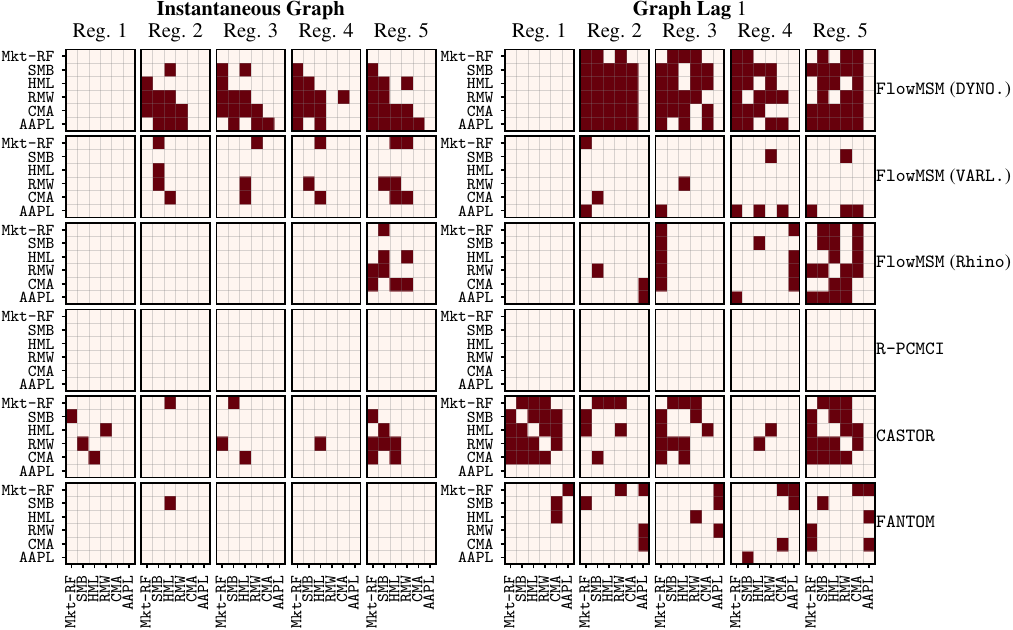}
    \caption{Adjacency matrices for the estimated window graphs corresponding to the regimes shown in \Cref{fig:fama_french_regimes_monthly}. The methods are ran on monthly data of the Fama-French model and \texttt{AAPL} stock. The index $(i,j,k,l)$ indicates the (lagged) edge $X_{t-l,i}\to X_{t,j}$ in regime $k$.}
    \label{fig:fama_french_adjacencies_monthly}
\end{figure*}

\end{document}